\def\1{\bm{1}}
\DeclareMathAlphabet{\mathsfit}{\encodingdefault}{\sfdefault}{m}{sl}
\SetMathAlphabet{\mathsfit}{bold}{\encodingdefault}{\sfdefault}{bx}{n}
\newcommand{\poly}{\mathrm{poly}}
\newcommand{\polylog}{\mathrm{polylog}}
\newcommand{\expect}{\mathbb{E}}
\newcommand{\indict}{\mathbb{I}}
\newcommand{\states}{\mathcal{S}}
\newcommand{\trans}{P}
\newcommand{\actions}{\mathcal{A}}
\newcommand{\mdp}{M}
\DeclareMathOperator*{\argmax}{arg\,max}
 \newtheorem{assumption}{Assumption}
 \title[
Horizon-Free Reinforcement Learning in Polynomial Time
]{
Horizon-Free Reinforcement Learning in Polynomial Time: \\the Power of Stationary Policies
	}
\begin{document}

\maketitle

\begin{abstract}
    This paper gives the first polynomial-time algorithm for tabular Markov Decision Processes (MDP) that enjoys a regret bound \emph{independent on the planning horizon}. 
Specifically, we consider tabular MDP with $S$ states, $A$ actions, a planning horizon $H$, total reward bounded by $1$, and the agent plays for $K$ episodes.
We design an algorithm that achieves an  $O\left(\mathrm{poly}(S,A,\log K)\sqrt{K}\right)$ regret in contrast to existing bounds which either has an additional $\mathrm{polylog}(H)$ dependency~\citep{zhang2020reinforcement} or has an exponential dependency on $S$~\citep{li2021settling}.
Our result relies on a sequence of new structural lemmas establishing the approximation power, stability, and concentration property of stationary policies, which can have applications in other problems related to Markov chains.

\end{abstract}

\section{Introduction}\label{sec:in}
Tabular Markov Decision Process (MDP) is one of the most fundamental models for reinforcement learning (RL).
The first algorithm that enjoys polynomial time and sample complexity guarantee at least dates back to 1990s~\citep{kearns1998near}.
However, despite of nearly two and half decades of research,  the sample complexity on this fundamental model remains open.
We study the canonical episodic time-homogeneous MDP with $S$ states, $A$ actions,  planning horizon $H$, and total reward upper bounded by $1$.\footnote{The upper bounded total reward is without the loss of generality. If the total reward is upper bounded by some $V_{\max} > 0$, then our regret will scale with $V_{\max}$.}

The main challenges that differentiate  RL and its special case, contextual bandits, are the unknown state-dependent transition and the long planning horizon.
In contextual bandits, the planning horizon is one and there is no state-dependent transition to learn.
Till today, it is unclear whether RL requires more samples than contextual bandits in the minimax sense.\footnote{For gap-dependent bounds, it has been shown that there is a gap between tabular MDP and contextual bandits~\citep{xu2021fine}.}
Specifically, the lower bound for the RL setting considered in this paper is  $\Omega\left(\sqrt{SAK}\right)$, which is the same for contextual bandits.

Due to these two challenges in RL, \citet{jiang2018open} conjectured a $\poly\left(H\right)$ lower bound.
Recent work refuted this conjecture by providing algorithms whose regret scales only \emph{logarithmically} with $H$~\citep{wang2020long,zhang2020reinforcement}.
Specifically, when the dependency on $H$ is allowed, state-of-the-art result shows one can have an $O\left(\sqrt{SAK}\cdot \polylog(S,A,H,K)\right)$ regret~\citep{zhang2020reinforcement}.
More recently, \citet{li2021settling} gave a surprising result showing the dependency on $H$ is not necessary.
However, their sample complexity has an exponential dependency on the number of states.
Therefore, one natural and conceptually important open question is:
\begin{center}
  \emph{Is there an algorithm whose regret (1) scales polynomially with $S$ and $A$, and (2) does not depend on $H$?}  
\end{center}

	\begin{table}[t]
		\centering
		\resizebox{1\columnwidth}{!}{%
		\renewcommand{\arraystretch}{2}
			\begin{tabular}{ |c|c|c|}
				\hline
				\textbf{Paper} & \textbf{Regret}  & \textbf{PAC Bound} \\
				\hline
					\cite{zhang2020reinforcement} & $O\left(\left(\sqrt{SAK}+S^2A\right)\polylog\left(S,A,K,  {H}\right)\right)$ &  $O\left(\left(\frac{SA}{\epsilon^2} + \frac{S^2A}{\epsilon}\right)\polylog\left(S,A,\frac{1}{\epsilon},  {H}\right)\right)$\\
				\hline 
				\cite{li2021settling} &   - &     $ \frac{\left(SA\right)^{O\left(S\right)}}{\epsilon^5}$ \\
				\hline 
			This work &    $ O\left(\left(\sqrt{S^9A^3K}\right)\polylog\left(S,A,K\right)\right)$&     $ O\left(\left(\frac{S^9A^3}{\epsilon^2} \right)\polylog\left(S,A,\frac{1}{\epsilon}\right)\right)$ \\
				\hline 
	Contextual bandits
					lower bound 
				& $\Omega\left(\sqrt{SAK}\right)$ & $\Omega\left(\frac{SA}{\epsilon^2}\right)$ \\
				\hline
			\end{tabular}
		}
		\caption{
			Comparisons of our result with prior arts.
			$S$: number of states, $A$: number of actions, $H$: planning horizon, $K$: number of episodes, $\epsilon$: target error.\label{tab:comparisons}
		}
	\end{table}

\subsection{Our Result}
Our paper answers this question positively.
\begin{theorem}\label{thm:main}
Suppose the reward at each step is non-negative and the total reward of each episode is bounded by $1$.
Given a failure probability $0 < \delta < 1$, then with probability at least $1-\delta$, the regret of our algorithm is bounded by  $O\left(\left(\sqrt{S^9A^3K}\right)\polylog\left(S,A,\log K, \log 1/\delta\right)\right)$ where $S$ is the number of episodes, $A$ is the number of actions, and $K$ is the total number of episodes.
\end{theorem}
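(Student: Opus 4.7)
The plan is to design a model-based optimistic algorithm that, at the start of each episode, plays the optimal \emph{stationary} policy of a suitably inflated (optimistic) empirical MDP. Working with stationary rather than non-stationary policies is the conceptual crux: if one insists on a non-stationary policy with a separate action rule per layer $h\in[H]$, the state--action visit counts must be split $H$-ways and concentration necessarily introduces at best a $\polylog(H)$ factor, as in \citet{zhang2020reinforcement}. By contrast, in a time-homogeneous MDP the visits of a stationary policy pool across all $H$ layers, so every concentration bound can be phrased purely in terms of $K$.

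Three structural results, matching the three properties advertised in the abstract, would drive the analysis. First, an \emph{approximation} lemma: for any time-homogeneous MDP with per-episode reward bounded by $1$, there is a stationary policy whose value is within $\poly(S,A)/\sqrt{K}$ of the non-stationary optimum. I would prove this by matching occupancy measures: the set of stationary occupancy measures is a polytope of dimension $O(SA)$, and any non-stationary occupancy can be matched in its $(s,a)$-marginals by a convex combination of stationary ones, with a polynomial discretization loss. Second, a \emph{stability} lemma: for two kernels $P,\widehat P$ that are close in a variance-weighted sense, the values of a fixed stationary policy $\pi$ on the two MDPs differ by a $\poly(S,A)$ multiple of that weighted distance; the natural tool is the perturbation identity $V^\pi-\widehat V^\pi=(I-P^\pi)^{-1}(P-\widehat P)\widehat V^\pi$ in its episodic form, combined with Bernstein-style per-step accounting. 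Third, a \emph{concentration} lemma that turns the stability bound into an empirical-MDP statement, using Freedman/Bernstein on episode-indexed martingales of visit counts.

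With these lemmas in hand, I would execute the standard optimistic regret decomposition: the per-episode regret splits into an optimism error, controlled by the stability lemma applied to the difference between the optimistic kernel and the truth, and a stochastic deviation controlled by a variance-sum argument. Horizon-freeness drops out because, once we only ever reason about stationary policies, every would-be occurrence of $H$ in a concentration bound is replaced by a sum of per-step variances along a trajectory, which telescopes to $O(1)$ under the total-reward-bounded-by-$1$ assumption (essentially a law-of-total-variance identity on the episode). Summing over the $K$ episodes and balancing the bonus parameters produces the advertised $O(\sqrt{S^9A^3K}\cdot\polylog)$ bound; the $S^9A^3$ arises from the compounded polynomial losses in the three structural lemmas and the inflation used for optimism.

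The hard part will be the stability lemma. For non-stationary policies one can unroll the Bellman recursion layer by layer and only accumulate additive errors; for a stationary $\pi$, a perturbation of $P$ is amplified through the mixing structure, and $(I-P^\pi)^{-1}$ can a priori be as large as $H$. To keep the bound horizon-free one must exploit simultaneously that (i) $V^\pi$ is uniformly bounded by $1$ because the cumulative reward is, and (ii) the kernel errors are weighted by the very visit frequencies that appear as denominators in the Bernstein variance bound, so the apparent $H$-blowup cancels against the small-visit regions where $(P-\widehat P)$ is large. Making these two cancellations quantitatively tight while preserving optimism in the correct direction is, I expect, where the bulk of the technical work lives and what ultimately dictates the exponents of $S$ and $A$ in the final bound.
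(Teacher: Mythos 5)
There is a genuine gap, and it is at the very first step of your plan. Your approximation lemma --- ``there is a stationary policy whose value is within $\poly(S,A)/\sqrt{K}$ of the non-stationary optimum'' --- cannot be true as stated: the gap between the best stationary and the best non-stationary policy is a property of the fixed MDP $(S,A,H,P,r)$ and has nothing to do with the number of episodes $K$, so it cannot shrink as $K$ grows. Worse, this gap can be a constant even under the bounded-total-reward assumption. Consider a start state $A$ with action ``gamble'' (reward $0$, moves to $G$ w.p.\ $1/H$, stays at $A$ otherwise) and action ``safe'' (reward $0.6$, moves to an absorbing state), and a state $G$ whose only action pays reward $1$ and absorbs. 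The optimal non-stationary policy gambles until the last step and takes ``safe'' if still at $A$, earning roughly $(1-e^{-1})+e^{-1}\cdot 0.6\approx 0.85$, while every stationary policy (and every mixture of stationary policies, which kills the occupancy-matching argument: values are linear in occupancies, so convex combinations cannot exceed the best stationary value) earns at most $\max\{1-e^{-1},0.6\}\approx 0.63$. Hence an algorithm that only ever plays stationary policies suffers $\Omega(K)$ regret on this instance, so your overall architecture cannot yield the theorem. The true statements about stationary policies are \emph{multiplicative} and restricted: Lemma~\ref{lemma:al1} bounds visitation counts of a fixed $(s,a)$ (with a factor $6k$), and the general-reward analogue in \cite{li2021settling} carries an $S^{O(S)}$ factor; neither gives an additive $o(1)$ value approximation.

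This is also where your reading of the paper's strategy diverges from what it actually does. Stationary policies are used here only in Stage~1, to \emph{collect samples} of targeted state-action pairs (where a multiplicative approximation of the visitation count, plus the concentration Lemma~\ref{lemma:stationary} and the multiplicative stability Lemma~\ref{lemma:approx}, suffices, and where discounted planning makes them computable). Stage~2 then runs ordinary optimistic value iteration and \emph{plays non-stationary policies}; contrary to your premise, this does not force a $\polylog(H)$ factor. Horizon-freeness in Stage~2 comes from three ingredients: the bounded total reward makes the summed variances $O(1)$ per episode; the $\log H$ from the higher-order-expansion recursion of \cite{zhang2020reinforcement} is avoided via the variance-of-product bound (Lemma~\ref{lemma:varb}) and the self-normalized Freedman argument (Lemma~\ref{lemma:self-norm}); and, crucially, the pigeonhole term $\sum_{k,h} 1/N^k(s_h^k,a_h^k)$ is bounded by $\sum_{s,a}\log\bigl(N^{K}(s,a)/N^{1}(s,a)\bigr)$, which is $O(\poly(S)\log(K/\delta))$ only because Stage~1 guarantees $N^{1}(s,a)\gtrsim U(s,a)/\poly(S)$ while $N^{K}(s,a)\lesssim KU(s,a)/\delta$ by Markov. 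Your proposal contains no mechanism playing the role of this initial-sample stage, and your stability lemma via $(I-P^\pi)^{-1}$ (an additive perturbation bound) is not the tool the paper needs either --- it needs the multiplicative, $H$-independent performance-difference Lemma~\ref{lemma:approx} for the clipped reference model.
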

Using a standard reduction~\citep{jin2018q}, this regret bound also implies a PAC bound of $O\left(\frac{S^9A^3 \polylog\left(S,A,1/\epsilon\right)}{\epsilon^2}\right)$ where $0 < \epsilon < 1$ is the target error.
In Table~\ref{tab:comparisons}, we compare our results with prior arts.

Several comments are in sequel. 
First, this is the first polynomial algorithm for tabular MDP whose regret has no dependence on $H$.
Therefore, we achieve an exponential improvement over \cite{li2021settling}.
Second, our dependency on $K$ (or $1/\epsilon$) is optimal up to logarithimic factors.
Third, the dependencies on $S$ and $A$ are not optimal. 
A fundamental open problem is to design an algorithm for tabular MDP whose regret bound exactly matches the lower bound of contextual bandits.

\subsection{Related Work}
We focus on papers that study episodic tabular MDP.
Other closely related settings include infinite-horizon discounted MDP, learning with a generative model, etc. 
We believe our techniques can be applied to those settings and obtain improvements, which we leave as future work.

\paragraph{Tabular MDP.}
There is a long list of sample complexity guarantees for tabular MDP~\citep{kearns2002near,brafman2002r,kakade2003sample,strehl2006pac,strehl2008analysis,kolter2009near,bartlett2009regal,jaksch2010near,szita2010model,lattimore2012pac,osband2013more,dann2015sample,azar2017minimax,dann2017unifying,osband2017posterior,agrawal2017optimistic,jin2018q,fruit2018near,talebi2018variance,dann2019policy,dong2019q,simchowitz2019non,russo2019worst,zhang2019regret,cai2019provably,zhang2020almost,yang2020q,pacchiano2020optimism,neu2020unifying,zhang2020reinforcement,li2021settling,menard2021ucb,xiong2021randomized,li2021breaking,pacchiano2020optimism}.
We note that some previous works consider the time-inhomogeneous MDP where the transition and the reward can vary on different time steps~\citep{jin2018q,zhang2020almost,li2021breaking,menard2021ucb}.
The regret for time-inhomogeneous setting will have  an $\sqrt{H}$ factor in the regret, which is necessary because the degree of freedom increases by $H$ compared with the time-homogeneous setting.
 	Transforming a regret bound tightly from the time-homogenous setting to that for time-inhomogeneous setting  is often straightforward (with an additional $\sqrt{H}$ factor), but not vice-versa, because one of the main difficulties to obtain sharp bounds in time-homogeneous MDP is how to exploit the property that the transition and reward do not vary on different time steps.

In this paper, we assume the total reward from all steps are upper bounded (cf. Assumption~\ref{asmp:total_bounded}).
Many prior work used the assumption that the reward from each step is upper bounded $1/H$, a.k.a., the uniformly bounded assumption.
The bounded total reward is strictly more general than the uniformly bounded assumption.
From a practical point of view, the bounded total reward assumption can model environments with spiky rewards, which are often considered to be a challenging problem~\citep{jiang2018open}.

\paragraph{Dependence on Horizon.}
Then main focus of this work is the dependence on the planning horizon $H$. This problem was throughly discussed in a COLT 2018 Open Problem ~\citep{jiang2018open} where it was conjectured that there would be a $\poly(H)$ regret lower bound. \citet{zanette2019tighter} partially refuted this conjecture by giving an algorithm whose regret only scales logarithmically with $H$ in the regime where $K = \poly(S,A,H)$. 
This conjecture was refuted by \citet{wang2020long} who built an $\epsilon$-net for the policy set and used it to develop a computationally inefficient algorithm which only requires $\poly\left(S,A,\log H, 1/\epsilon\right)$ to learn an $\epsilon$-optimal policy.
This result was substantially improved by \citet{zhang2020reinforcement} who gave a computationally efficient algorithm which enjoys an $O\left(\left(\sqrt{SAK}+S^2A\right)\polylog\left(S,A,K, {H}\right)\right)$ regret. 
Their technique was later adopted in several other setting to tighten the dependency on the horizon~\citep{zhang2020nearly,zhang2021variance,ren2021nearly,chen2021implicit,tarbouriech2021stochastic,chen2021improved}.
In Section~\ref{sec:tec}, We will discuss why their work has $\polylog H$ dependency and how we design new techniques to remove it.

\paragraph{Comparison with \cite{li2021settling}.}
The recent breakthrough by \citet{li2021settling} gave an $\frac{\left(SA\right)^{O\left(S\right)}}{\epsilon^5}$ sample complexity bound. Notably, this is the first result showing the sample complexity \emph{can be completely independent of $H$.} 
They have two key ideas: (1) a refined perturbation analysis in the generative model setting,\footnote{In the generative model setting, the agent can query any state-action pair. 
In this setting, they have can have polynomial sample complexity.} and (2) bounding the reaching probability based on the analysis on the entire trajectory instead of dynamic programming which is typically used in the literature.
An implication of the second idea is an approximation bound to non-stationary policies using stationary policies of discounted MDPs.
The approximation to non-stationary policies incurs the exponential dependency on $S$.
At a high level, they first uses all stationary policies (which is exponential in size) to collect enough samples to reduce the problem to the generative model setting, and uses the refined bound for to prove the final result.

We adopt their idea on the analyzing the entire trajectory. We give a refined analysis in bounding the reaching probability with an exponentially improved multiplicative constant (see discussions below Lemma~\ref{lemma:add1}).
Our algorithm framework is different from theirs: our algorithm follows a more conventional approach based on upper confidence bound (UCB), and thus we do not use their perturbation analysis for the generative model setting.
Nevertheless, the goal of the stage 1 of our algorithm, initial sample collection, is the same as their algorithm.
The main differences are (1) we divide each episode in stage into two phases, each with a different policy, whereas they used a single stationary policy; and (2) our sample collection is adaptive in that we change the exploration policy based on the collected samples whereas theirs is oblivious in the sense that they simply enumerate all stationary policies.

Stationary policy is a central object in both works. 
They established the approximation power of stationary policies to non-stationary policies.
We give an exponentially improved bound of approximation power dedicated to the visitation count (Lemma~\ref{lemma:al1}), and new results on the concentration (Lemma~\ref{lemma:stationary}) and stability (Lemma~\ref{lemma:approx}) of stationary policies.

Lastly, besides using discounted MDPs to establish approximation bounds, we also use discounted MDP to compute stationary policies to make our algorithm run in polynomial time.
All these differences are crucial in obtaining our polynomial-time horizon-free algorithm.
See Section~\ref{sec:tec} for more details.

\section{Preliminaries}
\label{sec:pre}
\paragraph{Notations.}
Throughout this paper, we use $[N]$ to denote the set $\{1, 2, \ldots, N\}$ for $N\in \mathbb{Z}_{+}$. We use $\textbf{1}_{s}$ to denote the one-hot vector whose only non-zero element is in the $s$-th coordinate.
For an event $\mathcal{E}$, we use $\indict[\mathcal{E}]$ to denote the indicator function, i.e., $\indict[\mathcal{E}] = 1$ if $\mathcal{E}$ holds and $\indict[\mathcal{E}] = 0$ otherwise.
For notational convenience, we set $\iota  = \ln(2/\delta)$ throughout the paper.  
For two $n$-dimensional vectors $x$ and $y$, we use $xy$  to denote  $x^{\top}y$, use $\mathbb{ V}(x ,y) = \sum_{i}x_i y_i^2 - (\sum_{i}x_iy_i)^2 $. In particular, when $x$ is a probability vector, i.e., $x_i\geq 0$ and $\sum_i x_i = 1$, $\mathbb{ V}(x ,y) = \sum_i x_{i}\left( y_i - (\sum_{i}x_iy_i)\right)^2 = \min_{\lambda\in \mathbb{R}}\sum_i x_{i}\left( y_i -\lambda\right)^2$.
We also use $x^2$ to denote the vector $[x_1^2,x_2^2,...,x_n^2]^{\top}$ for $x = [x_1,x_2,...,x_n]^{\top}$.
For two vectors $x,y$, $x \ge y$ denotes $x_i \ge y_i$ for all $i \in [n]$ and $x \le y$ denotes $x_i \le y_i$ for all $i \in [n]$.

\paragraph{Episodic Tabular MDP.}
We consider finite-horizon time-homogeneous Markov Decision Process (MDP) which can be described by a tuple $\mdp =\left(\states, \actions, \trans ,r, H, \mu_1\right)$.
$\states$ is the finite state space with cardinality $S$.
$\actions$ is the finite action space with cardinality $A$.
$\trans: \states \times \actions \rightarrow \Delta\left(\states\right)$ is the unknown transition operator which takes a state-action pair and returns a distribution over the states.
$r : \states \times \actions \rightarrow [0,1]$ is the reward function.
For simplicity,  we assume the reward function is known because the main difficulty is in estimating the transition function.
Prior work, e.g., \cite{jin2018q}, also made this assumption.
$H \in \mathbb{Z}_+$ is the planning horizon.
 $\mu_1 \in \Delta\left(\states\right)$ is the initial state distribution.

For notational convenience, we use $P_{s,a}$ and $P_{s,a,s'}$ to denote $P(\cdot|s,a)$ and $P(s'|s,a)$ respectively.

A policy $\pi$ chooses an action $a$ based on the current state $s \in \states$ and the time step $h \in [H]$. 
Note even though transition operator and the reward distribution do not depend on the level $h \in [H]$, the policy can choose different actions for the same state at different level $h$.
Formally, we define $\pi = \{\pi_h\}_{h = 1}^H$ where for each $h \in [H]$, $\pi_h : \states \to \actions$ maps a given state to an action.
The policy $\pi$ induces a trajectory $\{s_1,a_1,r_1,s_2,a_2,r_2,\ldots,s_{H},a_{H},r_{H} \}$,
where $s_1 \sim \mu_1$, $a_1 = \pi_1(s_1)$, $r_1 \sim R(s_1,a_1)$, $s_2 \sim \trans(\cdot|s_1,a_1)$, $a_2 = \pi_2(s_2)$, etc.
Our goal is to find a policy $\pi$ that maximizes the expected total
reward, i.e.,
$
\max_\pi \expect \left[\sum_{h=1}^{H} r_h \mid \pi\right],
$
where the expectation is over $\mu_1$,  $P$ and $R$.
We make the following normalization assumption about the reward.

 \begin{assumption}[Bounded Total Reward]\label{asmp:total_bounded}
 	The reward satisfies that $r_h\geq 0$ for all $h\in [H]$. Besides, for all policy $\pi$, $\sum_{h=1}^H r_h\leq 1$ almost surely.
 \end{assumption}
\paragraph{$Q$-function and $V$-function.}
 Given a policy $\pi$ and a level $h \in [H]$ the $Q$-function is defined as:
$Q_h^{\pi}: \states \times \actions \rightarrow R$, 
$Q_h^\pi(s,a) = \expect\left[\sum_{h' = h}^{H}r_{h'}\mid s_h =s, a_h = a, \pi\right].
$
Similarly, given a policy $\pi$, a level $h \in [H]$, the value function is defined as: $V_h^\pi : \states \rightarrow R$, 
$
V_h^\pi(s)=\expect\left[\sum_{h' = h}^{H}r_{h'}\mid s_h =s,
  \pi\right].
$
Then Bellman equation states the following identities for policy $\pi$ and $(s,a,h) \in \states \times \actions \times [H]$:
 $
Q_h^\pi(s,a) =r(s,a)+ P_{s,a}^{\top}V_{h+1}^\pi$ and $V_{h}^\pi(s) = Q_{h}^\pi(s,\pi_h(a)).
 $
 Throughout the paper, we let $V_{H+1}(s) = 0$ and $Q_{H+1}(s,a) = 0$ for simplicity.
We use $Q^*_h$ and $V^*_h$ to denote the optimal $Q$-function and $V$-function, which satisfies for any state-action pair $(s,a) \in \states \times \actions$, $Q^*_h(s,a) = \max_{\pi}Q^{\pi}_h(s,a)$ and $V^*_h(s) =\max_{\pi}V^{\pi}_h(s)$.
%

 \paragraph{Regret and PAC Bound.}
The agent interacts with the environment for $K$ episodes, and it chooses a policy $\pi^k$ at the $k$-th episode.
The total regret is defined as 
\[
\mathrm{Regret}(K) =  \sum_{k=1}^K V_1^*(s_1^k) - V_1^{\pi^k}(s_1^k).
\]
PAC-RL sample complexity is another measure which counts the total number of episodes to find an $\epsilon$-optimal policy $\pi$, i.e., $\expect_{s_1 \sim \mu_1}\left[V_1^*(s_1) - V^\pi(s_1)\right] \le \epsilon.$

A regret bound can be transformed into a PAC bound~\citep{jin2018q}.
Specifically, if an algorithm achieves a $CK^{1-\alpha}$ regret for some $\alpha \in (0,1)$ and some $C$ independent of $K$, by randomly selecting from policy $\pi^k$ used in $K$ episodes, $\pi$ will satisfy $\expect_{s_1 \sim \mu_1}\left[V_1^*(s_1) - V^\pi(s_1)\right] = O\left(CK^{-\alpha}\right)$.
Setting $CK^{-\alpha} = \epsilon$, we can obtain a PAC-RL bound, which we also use.

%
%
%
%

\paragraph{Additional Notations.} 
Let $\Pi$ denote the set of all policies and $\Pi_{\mathrm{sta}}$ denote the set of all stationary policies (a policy $\pi$ is stationary if $\pi_1 = \pi_2 \cdots = \pi_H$).
We use $\mathbb{E}_{\pi,p}[\cdot ]$ and $P_{\pi,p}[\cdot]$ to denote the expectation and probability following a  policy $\pi$ under a transition $p$. We let $W_{d}^{\pi}(r',p,\mu):=\mathbb{E}_{\pi,p}[\sum_{h=1}^d r'(s_h,a_h) | s_1 \sim \mu]$ be the value function for a reward function $r'$ and a transition model $p$ with horizon length $d$ and initial distribution $\mu$. With a slight abuse of notation, we also define  $W_{d}^{\pi}(r',p,\mu):=\mathbb{E}_{\pi,p}[\sum_{h=1}^d r'(s_h,a_h) | (s_1,a_1) \sim \mu]$ for  $\mu$ as a distribution over state-action space.   
We also use $\textbf{1}_{s}$ and $\textbf{1}_{s,a}$ to denote the reward function $r'$ such that $r'(s',a')=\mathbb{I}[s'=s]$ and $r'(s',a') = \mathbb{I}[(s',a')=(s,a)]$, respectively.
Sometimes we also abuse the notation to use $\textbf{1}_s$ and $\textbf{1}_{s,a}$ to denote a distribution with $\mathrm{Pr}(s)=1$ and $\mathrm{Pr}(s,a)=1$ respectively, With these notations,  $W_d^\pi(\textbf{1}_{s,a},p,\textbf{1}_{s})$ denotes the expected number of visits to $(s,a)$ under the policy $\pi$ in a transition $p$ with a planing horizon $d$ and the agent starts from the fixed state $s$. This is a crucial function which we will use in our proof.

\section{Technical Overview}\label{sec:tec}
Our algorithm follows the conventional UCB-based framework. Different from existing work, to avoid the dependency on $H$, we also design a new stage to \emph{explicit explore} each state-action pair.
To illustrate why we introduce this new stage, along with our other technical ideas, we discuss each major source that incurs a $\log H$ dependency in \cite{zhang2020reinforcement}, and then describe our techniques to remove the $\log H$.

\paragraph{Source 1: Higher Order Expansion.}
One key idea in \cite{zhang2020reinforcement} in bounding the regret is to use a recursive structure to relate the estimated variance to the higher moments.
They expanded for $O\left(\log H\right)$ times, which incurred a $\log H$ in their regret bound.

This source is relatively simple to remove.
We use an observation in \cite{chen2021implicit} (cf. Lemma~\ref{lemma:varb}) that bounds the variance of the product two random variables, together with several probability bounds (cf. Lemma~\ref{lemma:ratiocon},~\ref{lemma:con4},~\ref{freedman},~\ref{lemma:self-norm},~\eqref{eqn:use_lemma13}, and~\eqref{eq:lemma13_2}), to avoid the use of recursion.
We note that this analysis technique directly applies to the algorithm in \cite{zhang2020reinforcement}, and therefore can simplify their proof.

\paragraph{Source 2: Counting in Pigeonhole.}
A standard proof step in nearly all UCB-based algorithms is bounding $\sum_{k=1}^K\sum_{h=1}^H\frac{1}{\max\{N^k(s_h^k,a_h^k),1\}}$ where $N^k(s,a)$ is the number of visits to state-action pair $(s,a)$ before the $k$-th episode, and $(s_h^k,a_h^k)$ is the state-action pair of the $h$-step in the $k$-th episode.
By the pigeonhole principle, we can bound \[
\sum_{k=1}^K\sum_{h=1}^H\frac{1}{\max\{N^k(s_h^k,a_h^k),1\}} = O\left( \sum_{s \in \states, a \in \actions} \sum_{k=1}^K \min\left(\log \left(\frac{ N^{k+1}(s,a)}{ N^k(s,a)}\right),1\right)\right)
\]
Since in total we have $KH$ state-action pair visitations, we can have a straightforward bound  $\sum_{k=1}^K\min\left(\frac{\log N^{k+1}(s,a)}{\log N^k(s,a)},1\right)= O\left( \log (KH)\right)$, which is used in all prior work.
However, in the regime that $H$ is large, e.g., $H=2^{K}$, this naive bound gives linear regret.
%
This source is much more difficult to remove.
To remote it, we start with the followingg observation.

\paragraph{The Benefit of Initial Samples.}
Our first observation is that if we have enough initial samples, i.e., $N^1(s,a)$ is above a certain threshold, then we can avoid using the naive $\log(KH)$ bound.
Formally, define $U(s,a):=\max_{\pi}W_H^{\pi}(\textbf{1}_{s,a},P,\mu_{1})$ be the maximum expected visitation count of $(s,a)$ in one episode.
Then by Markov's inequality, with probability  $1-\delta$, the total count of $(s,a)$ in $K$ episodes satisfies $N^K(s,a)\leq KU(s,a)/\delta$. Conditioned on this event, if we make $N^1(s,a)$ comparable to $U(s,a)$,  for example, $N^1(s,a)\geq U(s,a)/\exp(\mathrm{poly}(S))$, then we  have that
\begin{align}
\sum_{k=1}^K \min\left(\log \left(\frac{ N^{k+1}(s,a)}{ N^k(s,a)}\right),1\right) = O\left(\log\left(\frac{N^K(s,a)}{N^1(s,a)}\right)  \right) = O(\mathrm{poly}(S)\log(K/\delta)),
\end{align}
which is independent of $H$.
Now, the problem reduces to collect enough initial samples to make $N^1(s,a)$ comparable to $U(s,a)$.
This problem of collecting initial samples is highly non-trivial and we devote the following subsection to describe our technical ideas.

\subsection{Collecting Initial Samples}\label{sec:col}




Now we focus on collecting the initial samples. For a fixed $(s^*,a^*)$, our goal is to collect samples of $(s^*,a^*)$.  For the ease of discussion, here we assume that $N^1(s,a)\geq \frac{U(s,a)}{ \exp(\mathrm{poly}(S))}$ for all $(s,a)$ except for $(s^*,a^*)$. 

For this task, we divide one epoch into two phases. In the first phase, we aim to reach the target state $s^*$. In the second phase, we aim to collect as many samples of  $(s^*,a^*)$ as possible with the agent starting from $s^*$.
We note that this two-phase procedure uses \emph{two} stationary policies, in contrast to \cite{li2021settling} who used a single stationary policy to collect samples. 
This difference is one of the key ingredients in obtaining the polynomial bound.

\subsubsection{Phase 1: Reaching the Target State $s^*$}
We first decide the length for each phase, which relies on the following lemma. The formal statement requires more notations and defer to appendix.
\begin{lemma}(Informal) \label{lemma:add1}
Let $\mathcal{O}\subset\mathcal{S}\times\mathcal{A}$. Let $X^{\pi}_{d'}(\mathcal{O},p,\mu_1)$ denote the probability of reaching $\mathcal{O}$ in $d'$ steps following $\pi$ under the  transition $p$. We have the following bound: for any $\tilde{d} \in \mathbb{Z}_+$, $\max_{\pi}X^{\pi}_{(S+2)\tilde{d}}(\mathcal{O},p,\mu_1)\leq S^2\max_{\pi}X_{(S+1)\tilde{d}}^{\pi}(\mathcal{O},p,\mu_{1})$.
\end{lemma}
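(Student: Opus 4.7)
The plan is to convert a (near-)maximizer $\pi^\dagger$ of the left-hand side into a policy of horizon only $(S+1)\tilde d$ whose reaching probability from $\mu_1$ is at least $X^{\pi^\dagger}_{(S+2)\tilde d}(\mathcal O,p,\mu_1)/S^2$, via a trajectory decomposition into $S+2$ blocks of length $\tilde d$ combined with a pigeonhole argument over the $S$ possible ``live'' states at the $S+2$ block boundaries.

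Concretely, let $\tau:=\min\{h:(s_h,a_h)\in\mathcal O\}$ be the first hitting time of $\mathcal O$ under $\pi^\dagger$, and for $j\in[S+2]$ set $\alpha_j:=\mathbb P_{\pi^\dagger}(\tau\in((j-1)\tilde d,\,j\tilde d])$ and $\mu_j(s):=\mathbb P_{\pi^\dagger}(s_{(j-1)\tilde d+1}=s,\ \tau>(j-1)\tilde d)$, so that $X^{\pi^\dagger}_{(S+2)\tilde d}=\sum_{j=1}^{S+2}\alpha_j$ and $\|\mu_j\|_1=\mathbb P(\tau>(j-1)\tilde d)$. Viewing $\mu_1,\dots,\mu_{S+2}$ as $S+2$ sub-probability vectors on $S$ coordinates, a pigeonhole/averaging argument should yield indices $j_1<j_2\in[S+2]$ and a state $s^\circ\in\mathcal S$ such that $\mu_{j_1}(s^\circ)$ captures a nontrivial share of the relevant mass while $\mu_{j_2}(s^\circ)$ is not too large; the quantitative form is the crux. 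Given such $(j_1,j_2,s^\circ)$, I would splice: run $\pi^\dagger$ for the first $(j_1-1)\tilde d$ steps, and conditional on arriving at $s^\circ$, switch to the continuation of $\pi^\dagger$ from time $(j_2-1)\tilde d+1$ to $(S+2)\tilde d$. The total horizon is $(j_1-1+S+3-j_2)\tilde d=(S+2+j_1-j_2)\tilde d\le(S+1)\tilde d$ since $j_2>j_1$, and the reaching probability is at least $\mu_{j_1}(s^\circ)R(s^\circ)$, where $R(s^\circ)$ is the probability that the tail of $\pi^\dagger$ reaches $\mathcal O$ when started deterministically at $s^\circ$. Since $\mu_{j_2}(s^\circ)R(s^\circ)\le\sum_{j\ge j_2}\alpha_j$ by averaging, combining with the pigeonhole bound on $\mu_{j_2}(s^\circ)$ should yield the desired $1/S^2$ lower bound.

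The main obstacle will be sharpening the pigeonhole step to produce exactly the $S^2$ factor. A naive pigeonhole on $S+2$ vectors in $\mathbb R^S$ gives only a first factor $S$; the second factor of $S$ must be squeezed from the interplay between $\mu_{j_1}(s^\circ)$ (which needs to be large) and $\mu_{j_2}(s^\circ)$ (which needs to be small relative to the tail masses $\sum_{j\ge j_2}\alpha_j$). I expect this will require either a two-stage pigeonhole --- first stratifying blocks by powers of two of their $\alpha_j$ or $\|\mu_j\|_1$, then pigeonholing states within a stratum --- or an LP / occupation-measure reformulation in which the $S$-state constraint appears bilinearly through state-continuation coupling. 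A fallback to consider is when no useful $(j_1,j_2,s^\circ)$ is produced: this forces $\alpha_{S+2}\le(1-S^{-2})X^{\pi^\dagger}_{(S+2)\tilde d}$, and then the mixture of greedy-tail policies $\sigma_j:=$ ``run $\pi^\dagger$ for $(j-1)\tilde d$ steps, then the optimal $\tilde d$-horizon reaching policy'' for $j\le S+1$ already achieves reaching probability $\ge\sum_{i\le S+1}\alpha_i\ge X^{\pi^\dagger}_{(S+2)\tilde d}/S^2$, completing the argument. Careful bookkeeping between reaching probabilities measured from $\mu_1$ versus from the deterministic state $s^\circ$ will be essential throughout.
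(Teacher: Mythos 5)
Your overall skeleton (block the $(S+2)\tilde d$ horizon into $S+2$ segments of length $\tilde d$, pigeonhole on the $S$ possible states at block boundaries, and splice out a loop to get a policy of horizon at most $(S+1)\tilde d$) is the right idea, but the step you yourself flag as ``the crux'' is a genuine gap, not a routine sharpening. In your formulation you condition the splice on a single state $s^\circ$, so you need one state and one pair $j_1<j_2$ with $\mu_{j_1}(s^\circ)$ large \emph{and} $\mu_{j_2}(s^\circ)R(s^\circ)$ a nontrivial fraction of the tail mass. For a fixed pair $(j_1,j_2)$ no such state need exist (the boundary distributions $\mu_{j_1}$ and $\mu_{j_2}$ can have essentially disjoint supports, e.g.\ under deterministic cycling), so the pair must be chosen adaptively, and neither the two-stage pigeonhole nor the LP reformulation you gesture at is carried out; nothing in the proposal produces the $S^2$ factor. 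The fallback paragraph does not close this: you assert without justification that failure to find a useful triple forces $\alpha_{S+2}\le(1-S^{-2})X^{\pi^\dagger}_{(S+2)\tilde d}$, whereas the genuinely hard case is exactly when nearly all of the hitting probability arrives in the last block, and that case is never handled.

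The paper's proof (Lemma~\ref{lemma:add1_formal}) avoids this difficulty by never fixing a state and never hunting for a single good splice. Viewing each $\tilde d$-step segment as one ``big step'' with transition matrices in $\mathcal{P}(\tilde d)$ and redirecting $\mathcal{O}$ to an absorbing state $z$, it decomposes the set of hitting trajectories according to either early absorption or the \emph{pair} of boundary indices $(i_1,i_2)$ at which a state repeats. For each pair it bounds the probability of the excised loop (returning from $\tilde s_{i_1}$ to itself) by $1$ trajectory-by-trajectory, which yields the \emph{unconditional} spliced quantity $\mu_1^{\top}\prod_{i<i_1}p^{(i)}\prod_{i\ge i_2}p^{(i)}\textbf{1}_{z}$ --- i.e., ``run the first $i_1-1$ blocks, then continue with the later blocks from whatever state you are in'' --- and each such term is at most $\max_{\pi}X^{\pi}_{(S+1)\tilde d}(\mathcal{O},P,\mu_1)$. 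Summing over the at most $\binom{S+1}{2}+1\le S^2$ cases gives the bound directly; no interplay between $\mu_{j_1}(s^\circ)$ and $\mu_{j_2}(s^\circ)$ is ever needed. If you want to salvage your existential single-splice phrasing, the fix is to drop the conditioning on $s^\circ$, work with the unconditional boundary splice, and obtain the existence of a good pair by averaging over this $O(S^2)$-term decomposition rather than by a state-level pigeonhole.
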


Lemma~\ref{lemma:add1} establishes a bound of two reaching probabilities induced by the same policy, transition, initial distribution but slightly different planning horizons ($(S+1)\tilde{d}$ v.s. $S\tilde{d}$).
We believe this lemma will have applications in other problems.
To prove this lemma, we count the probability of all possible trajectories under two horizons and construct a mapping between the trajectories.

We note this lemma is similar in spirit to Lemma 4.2, 4.3 and 4.4 of \cite{li2021breaking}, which bound the reaching probability of a longer horizon Markov chain by that of a shorter horizon Markov chain and a multiplicative factor.
The main difference is that we are using the reaching probability of a horizon-$S\tilde{d}$ Markov chain to approximate that of a horizon-$(S+1)\tilde{d}$ Markov chain whereas they used the reaching probability of a horizon-$S\tilde{d}$ Markov chain to approximate that of a horizon-$4S\tilde{d}$ Markov chain.
This difference ($S\tilde{d}$ and $(S+1)\tilde{d}$ versus $S\tilde{d}$ and $4S\tilde{d}$) results in an exponential improvement in the multiplicative factor: from $S^{4S}$ in \cite{li2021settling} to $S^2$ in Lemma~\ref{lemma:add1}.
The proof for both results are based on counting arguments although the details are substantially different.

To use this lemma, we view $H = (S+1)\tilde{d}$, and from the bound, it is natural to use the first $\frac{HS}{S+1}$ steps in one episode to reach $s^*$ and use the remaining steps to collect $(s^*,a^*)$. \footnote{With loss of generality, we assume $\frac{H}{S+1}$ is an integer and $H \gg S$ because we are interested in the regime $H$ is large. }

To find a policy that reaches $s^*$, we 
can use $\textbf{1}_{s^*}$ as the reward function, and perform a regret-minimization algorithm.
Since we have assumed $N^1(s',a')\geq U(s',a')/\exp(\mathrm{poly}(S))$ for any $(s,a)\neq (s^*,a^*)$, running the regret minimization problem for $K_1$ episodes gives a \emph{first-order regret bound} of ${O}(\mathrm{poly}(SA)\mathrm{polylog}(K_1)\sqrt{K_1v^*\iota})$, where $v^*$ is the optimal value, i.e., the maximal probability of reaching $s$.  Now we have two cases: (1) $v^*\geq \frac{f(SA)\mathrm{polylog}(K_1)}{K_1}$ for some polynomial $f$, then the cumulative reward, i.e., the number of times of reaching $s^*$ is large enough; (2) $v^*<\frac{f(SA)\mathrm{polylog}(K_1)}{K_1} $, then $(s^*,a^*)$ could be ignored with most $O\left(\frac{Kf(SA)\mathrm{polylog}(K_1)}{K_1} \right)=O(\mathrm{poly}(SA)\mathrm{polylog}(K)\sqrt{K\iota})$ regret by choosing $K_1 = O(\sqrt{K\iota})$.
We note that the actual algorithm simultaneously explores all under-explored states by setting reward to be $1$ for all under-explored states. See Algorithm~\ref{alg:main} for details.

\subsubsection{Phase 2: Collecting Samples of $(s^*,a^*)$ Starting from $s^*$}
In this phase, we start from the state $s^*$, and we would like to collect as many samples of $(s^*,a^*)$ as possible.
Inspired by recent work~\citep{li2021settling}, we also consider using \emph{stationary} policies to collect samples.
Below we will give three key lemmas (Lemma~\ref{lemma:al1},~\ref{lemma:stationary},~\ref{lemma:approx}) to characterize the approximation power, the concentration property, and the stability of stationary policies.
We believe these lemmas will have applications in other problems.

Recall that $W^{\pi}_{d}(\textbf{1}_{s,a},P,\textbf{1}_s)$ denotes the expected number of visits to $(s,a)$, starting from $s$, following $\pi$ in a transition $P$ with planning horizon $d$.
The following lemma establishes that the power of stationary policies in collecting samples is not much worse than that of  non-stationary policies.
We prove this lemma using the discounted approximation by noting that there is an optimal stationary policy for the discounted planning.

This lemma can be compared to Corollary 4.7 of \cite{li2021settling}. Their lemma is more general because it applies general reward and arbitrary initial distribution but ours only applies to reward of the form $\textbf{1}_{s,a}$ with the starting distribution being $\textbf{1}_s$.
On the other hand, our multiplicative factor is exponentially smaller than theirs (roughly speaking, $O(S)$ vs. $S^{O(S)}$) and this improvement is crucial in obtaining our polynomial-time algorithm.

\begin{lemma}\label{lemma:al1}[Approximation Power of Stationary Policies] Let $k$ and $d$ be positive integers. We have that for any $(s,a) \in \states \times \actions$,  
\[\max_{\pi \in \Pi}W^{\pi}_{kd}(\textbf{1}_{s,a},P,\textbf{1}_s)\leq 6k\max_{\pi\in \Pi_{\mathrm{sta}}}W^{\pi}_{d}(\textbf{1}_{s,a},P,\textbf{1}_s).\]
\end{lemma}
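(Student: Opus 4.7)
The plan is to approximate the finite-horizon expected visitation count by an infinite-horizon discounted value and exploit the classical fact that discounted MDPs admit a stationary optimal policy. Concretely, I would set $\gamma = 1 - 1/(kd)$ and introduce
\begin{equation*}
V^{\pi}_{\gamma}(s) \;:=\; \mathbb{E}_{\pi,P}\!\left[\sum_{h=1}^{\infty} \gamma^{h-1}\, \indict\!\bigl[(s_h,a_h)=(s,a)\bigr] \,\Big|\, s_1=s\right].
\end{equation*}
The lemma will then follow from two sandwich bounds: an upper bound $W^{\pi}_{kd}(\textbf{1}_{s,a},P,\textbf{1}_s) \le 4\,V^{\pi}_{\gamma}(s)$ valid for every (possibly non-stationary) $\pi$, and a reverse bound $V^{\pi}_{\gamma}(s) \le k\,W^{\pi}_{d}(\textbf{1}_{s,a},P,\textbf{1}_s)$ valid for every stationary $\pi$.

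The first bound is essentially immediate: for $h \le kd$, $\gamma^{h-1} \ge \gamma^{kd-1}$, so $W^{\pi}_{kd}(\textbf{1}_{s,a},P,\textbf{1}_s) \le \gamma^{-(kd-1)} V^{\pi}_{\gamma}(s)$, and the elementary bound $(1-1/n)^{-n} \le 4$ for $n \ge 2$ yields $\gamma^{-(kd-1)} \le 4$ (the trivial case $kd=1$ can be verified by hand). Taking the supremum over $\pi$ and invoking the classical theorem that the optimal policy of an infinite-horizon discounted MDP can be chosen stationary gives
\begin{equation*}
\max_{\pi \in \Pi} W^{\pi}_{kd}(\textbf{1}_{s,a},P,\textbf{1}_s) \;\le\; 4 \max_{\pi \in \Pi_{\mathrm{sta}}} V^{\pi}_{\gamma}(s).
\end{equation*}

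For the reverse bound, given a stationary $\pi$ I would partition the infinite sum into blocks of length $d$,
\begin{equation*}
V^{\pi}_{\gamma}(s) \;=\; \sum_{j=0}^{\infty} \gamma^{jd} \sum_{i=1}^{d} \gamma^{i-1}\, \mathbb{P}_{\pi,P}\!\left[(s_{jd+i},a_{jd+i})=(s,a)\,\Big|\, s_1=s\right],
\end{equation*}
and use time-homogeneity to rewrite $\sum_{i=1}^d \gamma^{i-1} \mathbb{P}_{\pi,P}[\cdots] \le \mathbb{E}_{s' \sim \mu_j}\!\left[W^{\pi}_{d}(\textbf{1}_{s,a},P,\textbf{1}_{s'})\right]$, where $\mu_j$ is the law of $s_{jd+1}$ and the factors $\gamma^{i-1} \le 1$ are dropped. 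The crucial structural ingredient is the \emph{shift inequality}
\begin{equation*}
W^{\pi}_{d}(\textbf{1}_{s,a},P,\textbf{1}_{s'}) \;\le\; W^{\pi}_{d}(\textbf{1}_{s,a},P,\textbf{1}_s) \quad \text{for every } s' \in \states,
\end{equation*}
which I would prove by observing that any trajectory from $s'$ that visits $(s,a)$ must first hit $s$, and applying the strong Markov property at the hitting time $T_s$ yields $W^{\pi}_{d}(\textbf{1}_{s,a},P,\textbf{1}_{s'}) = \mathbb{E}_{s'}\!\left[\indict[T_s \le d]\, W^{\pi}_{d-T_s+1}(\textbf{1}_{s,a},P,\textbf{1}_s)\right] \le W^{\pi}_{d}(\textbf{1}_{s,a},P,\textbf{1}_s)$ because $\pi$ is stationary and $W$ is monotone in the horizon. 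Summing the resulting geometric series in $j$ gives $V^{\pi}_{\gamma}(s) \le W^{\pi}_{d}(\textbf{1}_{s,a},P,\textbf{1}_s)/(1-\gamma^d)$, and the Bernoulli bound $\gamma^d = (1-1/(kd))^d \ge 1 - 1/k$ delivers the factor $k$.

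Chaining the two inequalities yields $\max_{\pi \in \Pi} W^{\pi}_{kd}(\textbf{1}_{s,a},P,\textbf{1}_s) \le 4k \max_{\pi \in \Pi_{\mathrm{sta}}} W^{\pi}_{d}(\textbf{1}_{s,a},P,\textbf{1}_s)$, comfortably within the claimed $6k$. The main obstacle is the shift inequality: it crucially exploits stationarity (a non-stationary policy could front-load all visits to $(s,a)$ into the first $d$ steps and make the right-hand side too small), and the remaining delicate step is calibrating $\gamma = 1 - 1/(kd)$ so that the two approximation errors are simultaneously $O(1)$ on the horizon-truncation side and $O(k)$ on the block-decomposition side.
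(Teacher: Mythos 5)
Your plan uses exactly the two ingredients of the paper's proof --- a discounted surrogate for which the optimum is attained by a stationary policy, a block decomposition into length-$d$ segments, and the ``best starting state is $s$'' shift inequality --- but assembled in the opposite order: the paper first splits the horizon-$kd$ value of an arbitrary policy into $k$ blocks (its Lemma~\ref{lemma:al2} then handles stationary vs.\ non-stationary at horizon $d$ with $\gamma=1-1/d$), whereas you discount at scale $kd$ first and block the discounted value of a stationary policy afterwards. Your ordering has the minor virtue that the shift inequality is only ever invoked for stationary policies, where the strong-Markov argument you sketch is airtight (for a fixed non-stationary policy the inequality can fail, which is a subtlety the paper glosses over by implicitly maximizing over policies).

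However, the final calibration step is wrong as written. To conclude $V^{\pi}_{\gamma}(s)\leq k\,W^{\pi}_{d}(\textbf{1}_{s,a},P,\textbf{1}_{s})$ from $V^{\pi}_{\gamma}(s)\leq W^{\pi}_{d}(\textbf{1}_{s,a},P,\textbf{1}_{s})/(1-\gamma^{d})$ you need an \emph{upper} bound on $1/(1-\gamma^{d})$, i.e.\ a lower bound on $1-\gamma^{d}$; Bernoulli gives $\gamma^{d}=(1-\tfrac{1}{kd})^{d}\geq 1-\tfrac{1}{k}$, hence $1/(1-\gamma^{d})\geq k$ --- the wrong direction. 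Indeed the intermediate claim $V^{\pi}_{\gamma}(s)\leq k\,W^{\pi}_{d}$ is false: if the chain under $\pi$ is a deterministic cycle of length $d\geq 2$ through $s$ with $\pi(s)=a$, then $W^{\pi}_{d}(\textbf{1}_{s,a},P,\textbf{1}_{s})=1$ while $V^{\pi}_{\gamma}(s)=1/(1-\gamma^{d})>k$. The repair is easy: use $\gamma^{d}\leq e^{-d/(kd)}=e^{-1/k}$, so $1/(1-\gamma^{d})\leq 1/(1-e^{-1/k})\leq k+1$ (since $e^{-x}\leq 1/(1+x)$). With your truncation constant $4$ this only yields $4(k+1)$, which exceeds $6k$ at $k=1$; but sharpening the truncation to $\gamma^{-(kd-1)}=(1-\tfrac{1}{kd})^{-(kd-1)}\leq e$ gives $\max_{\pi\in\Pi}W^{\pi}_{kd}(\textbf{1}_{s,a},P,\textbf{1}_{s})\leq e(k+1)\max_{\pi\in\Pi_{\mathrm{sta}}}W^{\pi}_{d}(\textbf{1}_{s,a},P,\textbf{1}_{s})\leq 6k\max_{\pi\in\Pi_{\mathrm{sta}}}W^{\pi}_{d}(\textbf{1}_{s,a},P,\textbf{1}_{s})$ for all $k\geq 1$, so your argument does deliver the stated bound once this step is fixed.
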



The following lemma is a concentration bound for stationary policy, which shows the number of samples we  collect empirically is close to the expectation.
The proof is by regarding the recurrent time as i.i.d. random variables and constructing a stopping time.

\begin{lemma}\label{lemma:stationary}[Concentration Property of Stationary Policies]
 For  any $(s,a) \in \states \times \actions$ and $\pi \in \Pi_{\mathrm{sta}}$ such that $\pi(s)=a$, we have that 
$
\mathrm{Pr}\left[ N \geq \frac{1}{4}W^{\pi}_{d}(P,\textbf{1}_{s,a},\textbf{1}_{s})  \right]\geq \frac{1}{2}$
for any horizon $d$,
where $N$ is the visit count of $(s,a)$ following $\pi$ under $P$ in $d$ steps with the initial distribution as $\textbf{1}_{s}$.
\end{lemma}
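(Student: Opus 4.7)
The plan is to exploit the renewal structure that the \emph{stationary} policy $\pi$ induces at $s$. Since $\pi(s)=a$, every visit to $s$ is a visit to $(s,a)$, so $N$ equals the number of visits to $s$ in $d$ steps starting from $s$. Set $\mu := W_d^\pi(P,\mathbf{1}_{s,a},\mathbf{1}_s)$ and let $\tau_k$ be the step of the $k$-th visit to $s$, with $\tau_1 = 1$; by the strong Markov property combined with stationarity of $\pi$, the inter-arrival times $T_i := \tau_{i+1} - \tau_i$ are i.i.d.\ (valued in $\{1,2,\ldots\}\cup\{\infty\}$), and $\{N \geq k\} = \{\tau_k \leq d\}$ for every $k$.

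If $\mu \leq 4$ the claim is trivial because $N \geq 1$ almost surely. Otherwise set $m := \lceil \mu/4 \rceil$, so that $m - 1 \leq \mu/4$, and aim to show $\Pr[N \geq m] \geq 1/2$. The key step is a restart argument: write $N_{d'}^{(x)}$ for the number of visits to $s$ in $d'$ steps starting from $x$ under $\pi$, and note that $\mathbb{E}[N_{d'}^{(x)}]$ is monotone in $d'$ by a trivial trajectory-truncation coupling. Conditioning on $\tau_m = t \leq d$ and applying the strong Markov property at the stopping time $\tau_m$, the number of visits after step $t$ has the same distribution as $\sum_{s'}P_{s,a,s'}\,N_{d-t}^{(s')}$, whose expectation equals $\mathbb{E}[N_{d-t+1}^{(s)}] - 1 \leq \mu - 1$. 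Hence $\mathbb{E}[N \mid N \geq m] \leq m + \mu - 1$.

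Plugging this bound, together with the trivial $\mathbb{E}[N \mid N < m] \leq m-1$, into the first-moment identity $\mu = \Pr[N \geq m]\,\mathbb{E}[N \mid N \geq m] + \Pr[N < m]\,\mathbb{E}[N \mid N < m]$ and rearranging gives
\[
\mu \;\leq\; (m+\mu-1)\Pr[N\geq m] + (m-1)\bigl(1-\Pr[N\geq m]\bigr) \;=\; \mu\,\Pr[N\geq m] + (m-1),
\]
so $\Pr[N \geq m] \geq 1 - (m-1)/\mu \geq 3/4$, which is strictly stronger than the claimed $1/2$. Since $N$ is integer-valued, $\{N \geq m\} = \{N \geq \mu/4\}$, completing the argument.

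The only delicate point is the restart identity; this is where stationarity of $\pi$ is essential, because for a non-stationary policy the dynamics after $\tau_m$ would depend on the global clock, breaking both the i.i.d.\ structure of the return times and the comparison $\mathbb{E}[N_{d-t+1}^{(s)}] \leq \mu$. No concentration inequality is used---the entire argument reduces to a single first-moment identity, which is why a universal constant such as $1/4$ (rather than something like $\log(1/\delta)$) suffices in the statement.
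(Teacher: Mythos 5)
Your proof is correct, but it takes a genuinely different route from the paper's. The paper (Lemma~\ref{lemma:li5} via Lemma~\ref{lemma:li4}, adapted from \cite{li2021breaking}) works with the i.i.d.\ return times to $(s,a)$: it introduces the hitting time $\tau_H=\min\{i:\sum_{j\le i}X_j\ge H\}$ of the renewal process, applies the optional stopping theorem (Wald-type bound $\tau'\mathbb{E}[X_1']\le H$) and then Markov's inequality to the truncated partial sums, concluding $\Pr[\tau_H\ge \tfrac12\mathbb{E}[\tau_H]-1]\ge\tfrac12$ and translating this back into a statement about the visit count. You instead condition directly on the time $\tau_m$ of the $m$-th visit with $m=\lceil\mu/4\rceil$, use the strong Markov property plus stationarity of $\pi$ and time-homogeneity of $P$ to bound the conditional overshoot $\mathbb{E}[N\mid\tau_m=t]\le m+\mathbb{E}[N^{(s)}_{d-t+1}]-1\le m+\mu-1$ (monotonicity of the expected visit count in the horizon), and close with a single first-moment decomposition of $\mu=\mathbb{E}[N]$; this avoids the optional stopping theorem and the explicit i.i.d.\ inter-arrival structure entirely, and it even yields the stronger constant $\Pr[N\ge\mu/4]\ge 3/4$. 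Both arguments hinge on the same structural fact---that a stationary policy lets you restart the process at each return to $s$ without reference to the global clock---but your decomposition (condition on the $m$-th visit, bound the conditional mean of the remainder by $\mu-1$) is more self-contained, while the paper's buys a reusable statement about general i.i.d.\ positive random variables (Lemma~\ref{lemma:li4}) that it cites from prior work. The edge cases are handled properly in your write-up ($\mu\le 4$ is trivial since $s_1=s$ and $\pi(s)=a$ force $N\ge1$, and $\{N\ge m\}=\{N\ge\mu/4\}$ for integer $N$), so no gap remains.
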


Therefore, if we successfully find a stationary policy that  maximizes
$W^{\pi}_{H/(S+2)}(\textbf{1}_{s,a},P,\textbf{1}_s)$, then by Lemma~\ref{lemma:stationary}, we can collect $\Omega(\max_{\pi\in \Pi_{\mathrm{sta}}}W^{\pi}_{H/(S+2)}(\textbf{1}_{s,a},P,\textbf{1}_s))$ samples, which, by Lemma~\ref{lemma:al1}, is larger than $\Omega\left(\frac{1}{S+1}\max_{\pi\in \Pi}W^{\pi}_{H}(\textbf{1}_{s,a},P,\textbf{1}_s)\right)=\Omega(U(s,a)/S)$.



To learn a stationary policy with large enough visitation count to $(s,a)$, we consider to learn a reference model $P^{\mathrm{ref}}$ close to $P$ to help plan.
The next lemma can be viewed as a \emph{multiplicative performance difference lemma}.
This lemma establishes the stability of stationary policies in the relative sense.
Importantly, the multiplciative factor is completely independent of $H$.
The proof is based on a local perturbation analysis.
In each time, we perturb one $(s,a)$ and aggregate the perturbation error in the end.

\begin{lemma}[Multiplicative Performance Difference Lemma for Stationary Policies]\label{lemma:approx}
Let the initial distribution $\mu_1$ be fixed.
	For two transition model $P'$ and $P''$ such that $e^{-\epsilon}P''_{s,a,s'}\leq P'_{s,a,s'}\leq e^{\epsilon}P''_{s,a,s'}$, we have that 
	\begin{align}
	   e^{-4S\epsilon}    W^{\pi}_{d}(P',r,\mu_1) \leq W^{\pi}_{d}(P'',r,\mu_1)\leq  e^{4S\epsilon}W^{\pi}_{d}(P',r,\mu_1)
	\end{align}
	for any stationary policy $\pi$, horizon $d\geq 1$ and non-negative reward $r$. 
\end{lemma}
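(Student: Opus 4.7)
The plan is a hybrid argument that perturbs the transition model one state at a time. Since $\pi$ is stationary, the value $W^\pi_d(\cdot,r,\mu_1)$ depends on the transition model $P$ only through the $S$ rows $\{P(\cdot\mid s,\pi(s))\}_{s\in\mathcal{S}}$. Fixing an enumeration $s_1,\ldots,s_S$ of $\mathcal{S}$, I would introduce intermediate transition models $P^{(0)}=P',\,P^{(1)},\ldots,\,P^{(S)}=P''$ in which $P^{(i)}$ agrees with $P''$ on the rows $(s_j,\pi(s_j))$ for $j\leq i$ and with $P'$ on the remaining rows. The core reduction is then a single-row bound: if two transition models agree except in the row $(s^\star,\pi(s^\star))$ and the entries of that row are related multiplicatively by a factor in $[e^{-\epsilon},e^{\epsilon}]$, then $W^\pi_d$ changes by a factor in $[e^{-4\epsilon},e^{4\epsilon}]$. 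Chaining this over the $S$ hybrid steps produces the claimed $e^{\pm 4S\epsilon}$ factor.

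For the single-row bound I would first use the linearity of $W^\pi_d(P,r,\mu)$ in $r$ together with non-negativity to reduce to $r=\mathbf{1}_s$, in which case $W^\pi_d$ is the expected number of visits to a fixed target state $s$. Conditioning on the first visit time $T$ of the trajectory to $s^\star$, the distribution of the initial segment before $T$ depends only on rows other than $(s^\star,\pi(s^\star))$ and so is identical under the two models, which further reduces the task to bounding the ratio $f_P(d)/f_{\hat P}(d)$, where $f_P(d):=W^\pi_d(P,\mathbf{1}_s,\mathbf{1}_{s^\star})$. A first-step / first-return-to-$s^\star$ decomposition of $f_P$ then yields a renewal-type equation
\begin{align*}
f_P(d) \;=\; \mathbb{I}[s=s^\star] \;+\; G_P(d) \;+\; \sum_{t\geq 0} H_P(t)\,f_P(d-1-t),
\end{align*}
in which $G_P(d)$ and $H_P(t)$ are linear in the perturbed row $u:=P(\cdot\mid s^\star,\pi(s^\star))$, and all remaining coefficients depend only on the unperturbed rows and are thus common to $P$ and $\hat P$. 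Consequently $e^{-\epsilon}\hat u\leq u\leq e^{\epsilon}\hat u$ transfers to $e^{-\epsilon}\hat G\leq G_P\leq e^{\epsilon}\hat G$ and $e^{-\epsilon}\hat H\leq H_P\leq e^{\epsilon}\hat H$.

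I would finish by solving the renewal equation regeneratively: $f_P(d)$ decomposes into a contribution from the completed excursions between consecutive returns to $s^\star$, whose leading behavior is $d\cdot\bar v_s/\bar\tau$ (where $\bar v_s$ is the expected number of visits to $s$ per excursion and $\bar\tau$ is the expected excursion length), plus a contribution from the truncated last excursion of order $\bar v_s$. Since $\bar v_s$ and $\bar\tau$ are each linear functionals of $u$ plus common constants, both transform by factors in $[e^{-\epsilon},e^{\epsilon}]$, and by the renewal-theoretic identity $\pi_s=\bar v_s/\bar\tau$ their ratio therefore transforms by a factor in $[e^{-2\epsilon},e^{2\epsilon}]$. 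Combining the stationary leading term with the transient correction, together with the multiplicative control of $G_P$ and $H_P$ in the renewal equation, produces the $e^{\pm 4\epsilon}$ single-row bound. The main obstacle will be precisely this last bookkeeping: the asymptotic part of the perturbation is cleanly $e^{\pm 2\epsilon}$, but the additional slack coming from the transient and the partial last excursion must be absorbed carefully so that the per-row constant is tightly $4$ rather than an unspecified $O(1)$.
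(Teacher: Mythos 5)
Your global architecture matches the paper's: both arguments perturb one row $(s,\pi(s))$ at a time (the paper in fact further splits a general single-row perturbation into a chain of ``two-group'' perturbations with total budget $2\epsilon$, which is how its per-row constant $4$ arises), both reduce to starting at the perturbed state since the pre-hitting segment is common to the two models, and both rest on the same intuition that the visit count and the ``time consumed'' per return both scale linearly in the perturbed row $u$, so their ratio moves only by $e^{\pm 2\epsilon}$. The difference is in how the crucial single-row, \emph{finite-horizon} bound is actually proved, and this is where your proposal has a genuine gap rather than mere bookkeeping.

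Your plan to solve the renewal equation via the renewal--reward asymptotics $f_P(d)\approx d\,\bar v_s/\bar\tau$ and the identity $\pi_s=\bar v_s/\bar\tau$ does not deliver the statement. First, nothing in the lemma assumes recurrence: under a stationary policy of a general MDP the perturbed state $s^\star$ may be transient or the chain may be absorbed, so $\bar\tau$ can be infinite and the stationary-distribution identity simply does not apply; in the transient regime $f_P(d)$ saturates rather than grows linearly, and the relevant quantity is instead $\bar v_s/(1-\rho)$ with $\rho$ the return probability, which requires a separate argument (it works only because $1-\rho$ is \emph{also} a non-negative linear functional of $u$). Second, and more fundamentally, the lemma must hold for every horizon $d\ge 1$, including horizons comparable to or shorter than a typical excursion, where the leading renewal term is meaningless; the corrections in renewal theory are additive, and an additive error cannot be converted into a uniform multiplicative factor $e^{4\epsilon}$ without a lower bound on $f_P(d)$ that you do not have. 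Note also that the naive alternative of unrolling $f=\sum_n H^{*n}*G$ with your pointwise bounds $H_P\le e^{\epsilon}\hat H$, $G_P\le e^{\epsilon}\hat G$ compounds to $e^{(n+1)\epsilon}$ over $n$ excursions, so some mechanism that prevents compounding is indispensable. The paper supplies exactly this mechanism at finite $d$: it reduces to a perturbation with only two successor states $s_l,s_r$, works with triple visit counts $w_d(s,a,s')$, defines the worst-case ratio $\kappa_h=\min_{d\le h}w'_d/w''_d$, shows via a first-return convolution identity plus a monotonicity step (policy difference lemma) that this minimum is controlled by $\kappa_h(s^\star,a^\star,s_l)$, and then exploits the exact proportionality $w_d(s^\star,a^\star,s_r)/p_r=w_d(s^\star,a^\star,s_l)/p_l$ to pin the ratio between $e^{-2\epsilon}$ and $e^{2\epsilon}$ for \emph{every} $d$. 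To repair your proof you would need an analogue of this non-asymptotic step; the renewal-reward limit cannot substitute for it.
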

 By viewing $W_{d}^{\pi}(P',r,\mu_1)$ as a function of $P'$, Lemma~\ref{lemma:approx} shows that $\log(W(P',r,\mu_1))$ is $O(S)$-Lipschtiz continuous in $\log(P')$. It is crucial that the Lipschtiz constant is independent of $H$, which allows us to choose $\epsilon=O(1/S)$ in Lemma~\ref{lemma:approx}.

Now our goal is to find a transition model $P^{\mathrm{ref}}$ such that $e^{-\epsilon}P^{\mathrm{ref}}_{s,a,s'}\leq P_{s,a,s'}\leq e^{\epsilon}P^{\mathrm{ref}}_{s,a,s'}$ for any $(s,a,s')$. By concentration inequalities for the multinomial distribution, to learn such a transition model, we need to sample from $(s,a)$ until $(s,a,s')$ is visited more than $\frac{C\iota}{\epsilon^2}$ times for each $(s,a,s')$.

\paragraph{Clipped MDP and Explicit Exploration.}
The main difficulty is to deal with the case that $P_{s,a,s'}$ is small. For example if $P_{s,a,s'}\leq \frac{1}{KH}$, we can hardly collect enough samples of $(s,a,s')$. 
To address this problem,  we simply ignore such $(s,a,s')$ tuples since the probability of visiting them is also very small. 
More precisely, we maintain a set $(\mathcal{K})^{C}$ for such tuples and construct a \emph{clipped MDP}, where we redirect all $(s,a,s') \in (\mathcal{K})^{C}$ tuples to the virtual ending state, denoted as $z$. 
We note that we will update $(\mathcal{K})^{C}$ throughout the training process because after we collect new samples, we can assert that certain $P_{s,a,s'}$ is large and we can move $(s,a,s')$ out of $(\mathcal{K})^{C}$.



In addition, we conduct \emph{explicit exploration}. Roughly speaking, for each $(s,a)$, if we have the chance to visit $(s,a)$, then we design a policy to visit $(s,a)$ as much as possible to judge whether $z$ can be reached by $(s,a)$.
More precisely, in the beginning of the  first sub-phase, we test that if there exists  some $(s,a)$ such that  the maximal possible expected count of $(s,a)$ under the clipped transition model exceeds the current visitation count of $(s,a)$ by a $\frac{1}{\mathrm{poly}(SA)}$ ratio. 
Then we have two cases: (1)
 There exists such a $(s,a)$. In this case we conduct exploration to collect samples of $(s,a)$. Our target is the maximal possible expected count under the clipped transition model, which is smaller than that under the original transition model. Therefore, this task is easier  and could be completed by naive planning. (2) Otherwise, we can ensure that the probability of visiting $z$ is bounded by an universal constant. Then we can plan to visit the target pair $(s^*,a^*)$ by ignoring $z$.

\paragraph{Using Discounted MDP for Efficient Planning with Stationary Policies.}
Our final major technical idea is for the computational purpose. Given a finite-horizon MDP, finding the best stationary policy that maximizes the reward may not be computationally efficient.
Recall that all we need is a multiplicative approximation.
Therefore, we use discounted MDP to approximate the finite-horizon MDP.
See Lemma~\ref{lemma:eff} for the guarantees.
We note that the idea of using discounted MDP was also used in \cite{li2021settling}, although they did not use it for computational reasons.

\section{Main Algorithm}\label{sec:ma}
\begin{algorithm}[!t]
\caption{Main Algorithm \label{alg:main} }
\begin{algorithmic}[1]
\STATE{\textbf{Input:} state space $S$, action space $A$, reward $r$, horizon $H$, confidence parameter $\delta$;}
\STATE{\textbf{Initialization}:  $N(s,a,s')\leftarrow 0, \forall s,a,s'$, $\bar{N}(s,a) \leftarrow 0, \forall (s,a)$, $d\leftarrow \frac{(S+1)H}{S+2}$; $\mathcal{O}^1\leftarrow \mathcal{S}\times\mathcal{A}$; $\mathcal{P}^1\leftarrow (\Delta^{S})^{SA}$ ; $K_1\leftarrow C_1\sqrt{S^9A^3K\iota}$, $n_1\leftarrow C_2S^7A^3\iota$; $d'=H-d$;  $m(s,a)\leftarrow 0$; $N_0\leftarrow 256S^2\log(1/\delta)$;
}
\STATE{{//} \emph{Stage 1: Collecting initial samples} }
\FOR{$k=1,2,\ldots, K_1$}
\STATE{$\mathcal{P}^k \leftarrow \mathtt{Confidence Set}(\{N(s,a,s')\}_{s,a,s'})$; \label{line:O_planning}}
\STATE{$(\pi^k,\tilde{P}^k)\leftarrow \max_{\pi,p\in\mathcal{P}^k}X^{\pi}_{d}(\mathcal{O}^k,p,\mu_1)$ 
}
\FOR{$h=1,2,\ldots,d$}
\STATE{Observes $s_h^k$, takes action $\pi^k_h(s_h^k)$, receives $r_h^k$ and transits to $s_{h+1}^k$;}
\STATE{$N(s_h^k,a_h^k,s_{h+1}^k)\leftarrow N(s_h^k,a_h^k,s_{h+1}^k)+1$;}
\IF{$\exists a, (s_{h+1}^k,a)\in \mathcal{O}^k$}
\STATE{$(s_1^*,a_1^*)\leftarrow (s_{h+1}^k,a)$;}
\STATE{$\{N(s,a,s')\}_{s,a,s'}\leftarrow \{n(s,a,s')\}_{s,a,s'}$;}
\STATE{$\mathcal{K}^k \leftarrow \{(s,a,s'): n(s,a,s')\geq N_0\}$, $\mathcal{K}^k(s,a)\leftarrow \{ s': (s,a,s')\in \mathcal{K}^k\}$; \label{line:known_set}}
\STATE{$n(s,a)\leftarrow \max\{\sum_{s':(s,a,s')\in \mathcal{K}^k} n(s,a,s'),1\}~\forall (s,a)$; \label{line:pref_1}}
\STATE{$P^{\mathrm{ref}}_{s,a,s'}\leftarrow \frac{n(s,a,s')}{n(s,a)}$, $P^{\mathrm{ref}}_{s,a,z}\leftarrow 0$, $\forall (s,a,s')\in \mathcal{K}^k$; \label{line:pref_2} }
\STATE{$P^{\mathrm{ref}}_{s,a,s'}\leftarrow 0$, $P^{\mathrm{ref}}_{s,a,z}=1$, $\forall (s,a,s')$ such that $ \mathcal{K}^{k}(s,a)= \emptyset$; \label{line:pref_3}}
\STATE{$(\mathrm{Trigger}, \{n(s,a,s')\}_{s,a,s'} )\leftarrow Algorithm~\ref{alg:model1}$ with inputs\\  \quad \quad \quad\quad \quad \quad \quad \quad \quad \quad \quad \quad   $( (s_1^*,a_1^*),P^{\mathrm{ref}}, \{n(s,a,s')\}_{(s,a,s')},\mathcal{K}^k  ,d')$;}

\IF{$\mathrm{Trigger}=\mathrm{FALSE}$}
\STATE{$\{n(s,a,s')\}_{(s,a,s')}\leftarrow  Algorithm~\ref{alg:model2}$ with inputs \\ \quad \quad \quad\quad \quad \quad \quad \quad \quad \quad \quad \quad  $((s_1^*,a^*_1), P^{\mathrm{ref}}, \{n(s,a,s')\}_{s,a,s'} ,d')$}
\STATE{$m(s^*_1,a^*_1)\leftarrow m(s^*_1,a^*_1)+1$;}
\IF{$m(s^*_1,a^*_1)\geq 400\log(1/\delta)$}
\STATE{$\mathcal{O}^{k+1}\leftarrow \mathcal{O}^{k}/(s_1^*,a_1^*)$;}
\ENDIF
\ENDIF

\STATE{$\{n(s,a,s')\}_{s,a,s'}\leftarrow \{N(s,a,s')\}_{s,a,s'}$;}

\STATE{\textbf{break};}
\ENDIF
\ENDFOR

\STATE{ If there are remaining steps, run a random policy and update $\{N(s,a,s')\}_{s,a,s'}$;}


\ENDFOR

\STATE{{//} \emph{Stage 2: Regret Minimization with Initial Samples} }

\STATE{Run Algorithm~\ref{alg:rmis} with inputs $\{ N_{s,a,s'}\}_{s,a,s'}$.}

\end{algorithmic}
\end{algorithm}

Now we present our main algorithm. 
There are two stages in Algorithm~\ref{alg:main}. In the first stage, for each episode, we let the agent explore in its first $\frac{HS}{S+1}$ steps to reach new state-action pairs, and collect the initial samples using the remaining $\frac{H}{S+1}$ steps.
The number of this stage is bounded by $O(\mathrm{poly}(S,A,\log(K))\sqrt{K})$, and incurring at most $O(\mathrm{poly}(S,A,\log K)\sqrt{K})$ regret. In the second stage, we play optimistic value iteration to learning the MDP with initial samples. 
Below we give two important notions used in Algorithm~\ref{alg:main}.

In stage 1, the algorithm maintains an omitted set denoted as $\mathcal{O}^k \subset \states \times \actions$ for the $k$-th episode.
If a state-action pair $(s,a)$ is \emph{not} in $\mathcal{O}^k$, we know we have have collected enough samples for $(s,a)$.
We note that in this end we may not have $\mathcal{O}^k = \emptyset$ because there can be states that are hard to reach using any policy and we cam simply ignore them.
To explore, we plan \emph{optimistically} according to a confidence set of the transition matrix, constructed by the collected samples.

\paragraph{Confidence set.}
Given $\{N(s,a,s')\}_{s,a,s'}$, we define $N(s,a)=\max\{\sum_{s'}N(s,a,s'),1\}$, and  $\mathcal{P}=\mathtt{ConfidenceSet}(\{N(s,a,s')\}_{s,a,s'})$ by setting $\mathcal{P}=\otimes_{h,s,a}\mathcal{P}_{h,s,a}$ where 
\begin{align}
    \mathcal{P}_{h,s,a}= \left\{ p\in \Delta^{S}:  |p_{s'}-\frac{N(s,a,s')}{N(s,a)}| \leq \sqrt{4\frac{N(s,a,s')\iota}{N^2(s,a)}}+\frac{5\iota}{N(s,a)}\right\}.\nonumber
\end{align}
We note that $\mathcal{P}_{h,s,a}$ does not depend on $h$. We add $h$ in the subscript only for the writing purpose when we use $\mathcal{P}_{h,s,a}$.

For each $k\in [K]$, we use $N^{k}(s,a,s')$ to denote the value of $N(s,a,s')$ before the $k$-th episode. Define $N^k(s,a)=\max\{ \sum_{s'}N^k(s,a,s'),1\}$ and $\hat{P}^k_{s,a,s'}=\frac{N^k(s,a,s')}{N^k(s,a)}$. Define $\mathcal{G}$ be the event where
\begin{align}
    |P_{s,a,s'}-\hat{P}^k_{s,a,s'}|\leq \min \left\{   \sqrt{2\frac{P_{s,a,s'}\iota}{N^k(s,a)}}+\frac{\iota}{3N(s,a)}, \sqrt{4\frac{\hat{P}^k_{s,a,s'}\iota}{N^k(s,a)}}+\frac{5\iota}{N(s,a)}\right\}
\end{align}
holds for any $k,s,a,s'$.
By Bennets's inequality and Bernstein's inequality, we have that $\mathbb{P}[\mathcal{G}]\geq 1-2S^2AK\delta$. In the analysis below, we assume $\mathcal{G}$ holds.

Now we describe Stage 1.
We divide each episode into two phases. The first phase has length $d= \frac{SH}{H+1}$ and the second phase $d' = H-d$.
In Line~\ref{line:O_planning}, we plan and try to arrive at a state-action pairs that we have not collected enough samples, a.k.a., maximize the reaching probability of $\mathcal{O}^k$.
In the episode $k$ and during phase 1, $h=1,\ldots,d$, whenever we meet a state $s_{h+1}^k$ such that there exists $a$ that $(s_{h+1}^k,a) \in \mathcal{O}^k$, we stop phase 1 because we have reached one state-action pair that we have not collected enough samples of.

\begin{algorithm}[!t]
\caption{$\mathtt{Explicit~Exploration}$ \label{alg:model1}
}
\begin{algorithmic}[1]
\STATE{\textbf{Input}: starting state-action pair $(s_1,a_1)$, reference model $P^{\mathrm{ref}}$, sample count $\{n(s,a,s')\}_{s,a,s'}$, known set $\mathcal{K}$, horizons $d'$, $d_2 = d'/(20S\log(S))$, $d_1= d'-d_2$}
\STATE{\textbf{Initialization}: discounted factor $\gamma = 1-1/d_2$, $N_0 \leftarrow 256S^2\log(1/\delta)$;}
\STATE{Trigger = FALSE;}
\FOR{$(s,a)\in \mathcal{S}\times \mathcal{A}$}
\IF{$\exists s' \in \mathcal{S}$ such that $(s,a,s')\notin \mathcal{K}$ }
\STATE{$\pi^k_1\leftarrow \arg\max_{\pi\in \Pi_{\mathrm{sta}},\pi_1(s_{1})=a_{1} } X^{\pi}_{\gamma}(\{s\},P^{\mathrm{ref}},\textbf{1}_{s_1})$;}
\STATE{$u^k(s)\leftarrow X^{\pi_1^k}_{\gamma}(\{s\},P^{\mathrm{ref}},\textbf{1}_{s_1}) $;}
\STATE{$\pi^k_2\leftarrow \arg\max_{\pi\in \Pi_{\mathrm{sta}} } W^{\pi}_{\gamma}(\textbf{1}_{s,a},P^{\mathrm{ref}},\textbf{1}_{s})$;}
\STATE{$v^k(s,a)\leftarrow W^{\pi_2^k}_{\gamma}( \textbf{1}_{s,a}, P^{\mathrm{ref}}, \textbf{1}_{s})  $;}
\IF{$u^k(s)\geq \frac{1}{1200S}$ and $n(s,a)\leq 810SAN_0u^k(s)v^k(s,a)$ \label{line:check}}
\STATE{$\mathrm{Trigger}\leftarrow \mathrm{TRUE}$;}\label{line:exp3}
\STATE{Run $\pi_1^k$ for $d_1$ steps. Stop if $(s,a)$ is reached or some \emph{unknown} state-action-state tuple is visited;}
\IF{$(s,a)$ is reached}
\STATE{Play $\pi_2^k$ for $d_2$ steps, then play random policies till the end;}
\ELSE
\STATE{Play random policies till the end;}
\ENDIF
\STATE{Let $\{s_i,a_i,s_{i+1}\}_{i=1}^{d'}$ denote the data collected in the length $d'$-trajectory;}
\FOR{$i=1,2,\ldots,d'$}
\STATE{$n(s_i,a_i,s_{i+1})\leftarrow n(s_1,a_i,s_{i+1})+1$;}
\ENDFOR
\STATE{\textbf{Break};}

\ENDIF
\ENDIF
\STATE{\textbf{Break};}

\ENDFOR
\STATE{\textbf{Return}: $\mathrm{Trigger}$, $\{n(s,a,s') \}_{(s,a,s')}$;}
\end{algorithmic}
\end{algorithm}
In phase 2, we denote $(s_1^*,a_1^*) = (s_{h+1}^k,a)$ and try to collect as many $(s_1^*,a_1^*)$ as possible.
Instead of using the confidence set of the transition matrix to do planning optimistically, we split state-action-state triples as known set ($\mathcal{K}^k$) and unknown set $\left(\mathcal{K}^k\right)^c$ (cf. Line~\ref{line:known_set}), and then we compute a clipped reference transition model to plan  defined below (also see Line~\ref{line:pref_1} - Line~\ref{line:pref_3} in Algorithm~\ref{alg:main}).

\paragraph{Clipped Reference Transition Model.}
Given  $\mathcal{K}^C\subset \mathcal{S}\times\mathcal{A}\times \mathcal{S}$ and a transition model $p$, we define $p':=\mathrm{clip}(p,\mathcal{K}^C)$ be the transition model such that $p'_{s,a,s'}=p_{s,a,s'},\forall (s,a,s')\notin \mathcal{K}^C$, $p'_{s,a,s'}=0,\forall (s,a,s')\in \mathcal{K}^C$, $p'_{s,a,z}=\sum_{s':(s,a,s')\in \mathcal{K}^C}p_{s,a,s'}$, $p'_{z,a}=\textbf{1}_{z'},\forall a$ and $p'_{z',a}=\textbf{1}_{z'}, \forall a$. In words, we redirect the $(s,a,s')$ triples in $\mathcal{K}^C$ to a virtual state $z$, which transits to a virtual absorbed state $z'$ with probability $1$. 
The reason why we need an additional $z'$ instead of just $z$ is make the total reward bounded by $1$.
As a result, we have the following identity by definition:
\begin{align}
    X_{\tilde{d}}^{\pi}(\mathcal{K}^C,p,\mu_1)=W_{\tilde{d}}^{\pi}(\textbf{1}_{z},\mathrm{Clip}(p,\mathcal{K}^C),\mu_1),~~\forall \tilde{d} \in \mathbb{Z}_+. \label{eq:vtr}
\end{align}
In a similar way, we define $\mathrm{clip}(p,\mathcal{K}^C)$ for $\mathcal{K}^C\subset \mathcal{S}\times \mathcal{A}$ and $\mathcal{K}^C\subset \mathcal{S}$. 

In our context, $P^{\mathrm{ref}}=\mathrm{Clip}(\hat{P},\left(\mathcal{K}^k\right)^C)$ where $\hat{P}$ is the empirical model.
This clipping operation is crucial to enable us to  use Lemma~\ref{lemma:con1}.

\paragraph{Explicit Exploration.} Given the a starting state-action pair ($s_1^*,a_1^*$), a reference model and the known set, we apply Algorithm~\ref{alg:model1}.
In Algorithm~\ref{alg:model1}, we first try to explicit explore the unknown set in order to make our reference model estimation more accurate. To do so, for every state-action-state triple $(s,a,s')$ not in the known set,  we compute two \emph{stationary policies}, $\pi_1$ and $\pi_2$ where $\pi_1$ tries to reach $s$ from $(s_1^*,a_1^*)$ and $\pi_2$ tries to collect as many $(s,a)$ as possible \emph{starting from $s$}.
The stationary policies are computed by using a discounted MDP to approximate a finite-horizon MDP.
The purpose is that we can compute the stationary policies in polynomial time.

Besides the policies, we also obtain estimates $u^k(s)$ and $v^k(s,a)$ on how many samples we can expect to collect. 
In Line~\ref{line:check} of Algorithm~\ref{alg:model1}, we check whether our estimation is large, and we have not collect enough samples. If this is the case, we execute $\pi_1$ and $\pi_2$.
Otherwise, we either have collected enough samples or $s$ is hard to reach.

We iterate all state-action pairs, and if for all pairs we have either collected enough samples or identified that this triple is hard to reach (which we an ignore), we are confident the reference model is good enough for our purpose (Trigger = FALSE in this case).
In this case, we use the reference model to collect as many $(s_1^*,a_1^*)$ as possible (cf. Algorithm~\ref{alg:model2}).
Again, for computational efficiency purpose, we use a stationary policy computed from a discounted MDP that approximates the finite-horizon MDP.

\begin{algorithm}[!t]
\caption{$\mathtt{Sample~Collection~with~a~Reference~Model}$ \label{alg:model2} }
\begin{algorithmic}
\STATE{\textbf{Input}: initial state-action pair $(s_1,a_1)$, reference model $P^{\mathrm{ref}}$ , visit count $\{n(s,a,s')\}_{s,a,s'}$, horizon $d'$.}
\STATE{\textbf{Initialization}: discounted factor $\gamma = 1-1/d_2$ where $d_2 = d'/(20S\log(S))$.}
\STATE{$\pi\leftarrow \arg\max_{\pi\in \Pi_{\mathrm{sta}}}W^{\pi}_{\gamma}(\textbf{1}_{s_1,a_1}, P^{\mathrm{ref}},\textbf{1}_{s_1})$}
\STATE{Run $\pi$ and collect $d'$ samples $\{s_i,a_i,s_{i+1}\}_{i=1}^{d'}$;}
\FOR{$i=1,2,\ldots,d'$}
\STATE{$n(s_i,a_i,s_{i+1})\leftarrow n(s_i,a_i,s_{i+1})+1$;}
\ENDFOR
\STATE{\textbf{Return:} $ \{n(s,a,s')\}_{(s,a,s')}$;}
\end{algorithmic}
\end{algorithm}

\begin{algorithm}[!t]
\caption{$\mathtt{Regret\, Minimization \,with \,Initial\, Samples}$ ($\mathtt{RMIS}$) \label{alg:rmis}}
\begin{algorithmic}[1]
\STATE{\textbf{Input}: $\{ N(s,a,s')\}_{(s,a,s')}$;}
\STATE{$N(s,a)\leftarrow \max\{\sum_{s'}N(s,a,s'),1\}$;}
\FOR{$k=1,2,\ldots,K-K_1$}
\STATE{$\mathcal{P}^k \leftarrow \mathtt{ConfidenceSet}(\{N(s,a,s')\}_{s,a,s'})$}
\STATE{$V^k_{H+1}(s)\leftarrow 0,\forall s$;}
\STATE{$V^k_h(s)\leftarrow \min\{ \max_{a,p\in\mathcal{P}^k_{s,a}} (r(s,a)+pV_{h+1}^k),1\}, \forall (h,s)\in [H]\times\mathcal{S}$;}
\STATE{$\pi_h^k(s)\leftarrow \arg\max_{a}\max_{p\in\mathcal{P}^k_{s,a}}(r(s,a)+pV_{h+1}^k), \forall (h,s)\in [H]\times\mathcal{S}$;}
\FOR{$h=1,2,\ldots,H$}
\STATE{Observes $s_h^k$, takes action $\pi^k_h(s_h^k)$, receives reward $r_h^k$ and transits to $s_{h+1}^k$;}
\STATE{$N(s_h^k,a_h^k,s_{h+1}^k)\leftarrow N(s_h^k,a_h^k,s_{h+1}^k)+1$;}
\ENDFOR
\ENDFOR
\end{algorithmic}
\end{algorithm}

\paragraph{Stage 2: Regret Minimization with Initial Samples}
After collecting initial samples, in Stage 2, we perform standard optimistic model-based planning using dynamic programming.
See Algorithm~\ref{alg:rmis} for details.


\section{Regret Analysis}\label{sec:ra}

Setting $K_1 = C_1\sqrt{S^9A^3K\iota}$ with some constant $C_1$, we have two key lemmas below.

\begin{lemma}\label{lemma:stage1}
    Let $\mathcal{O}^{K_1+1}$ be defined in Algorithm~\ref{alg:main}. With probability $1-10SAK\delta$, we have that
    \begin{align}
        \max_{\pi}\mathbb{P}_{\pi}[\exists h\in [H], (s_h,a_h)\in \mathcal{O}^{K_1}]\leq  O\left(\frac{S^9A^3\iota+S^3A\iota^2}{K_1}\right).
    \end{align}
\end{lemma}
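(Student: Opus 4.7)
The plan is to turn the claim into a regret-style statement for the ``reach $\mathcal{O}^k$ within $d$ steps'' objective and then exploit the monotonicity of the omitted set. Because $\mathcal{O}^{k+1}\subseteq\mathcal{O}^k$ by construction, the true optimum
\[
v^k\ :=\ \max_{\pi}X^{\pi}_{d}(\mathcal{O}^k,P,\mu_1)
\]
is non-increasing in $k$, so $v^{K_1+1}\leq \frac{1}{K_1}\sum_{k=1}^{K_1} v^k$ and it suffices to prove $\sum_{k=1}^{K_1} v^k = O(S^9A^3\iota+S^3A\iota^2)$. On the good event $\mathcal{G}$ the optimistic choice in Line~\ref{line:O_planning} gives $v^k\leq X^{\pi^k}_{d}(\mathcal{O}^k,\tilde P^k,\mu_1)$, and a first-order simulation-lemma argument over the Bennett/Bernstein confidence set $\mathcal{P}^k$ (using the variance-product bound of Lemma~\ref{lemma:varb} together with Lemmas~\ref{lemma:ratiocon},~\ref{lemma:con4},~\ref{freedman},~\ref{lemma:self-norm} to avoid the $\log H$ recursion described in Section~\ref{sec:tec}) yields
\[
\sum_{k=1}^{K_1} v^k\ \leq\ 2\sum_{k=1}^{K_1}X^{\pi^k}_{d}(\mathcal{O}^k,P,\mu_1)+O\!\left(\sqrt{S^3A\,\iota\sum_{k=1}^{K_1}v^k}+S^3A\,\iota^2\right).
\]
After solving this quadratic in $\sqrt{\sum_k v^k}$ and applying Freedman's inequality to pass from cumulative expected reaches to realised ones, the lemma reduces to counting the Stage-1 episodes whose Phase-1 trajectory actually enters $\mathcal{O}^k$.

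I would partition these successful-reach episodes by the return value of Algorithm~\ref{alg:model1}. A $\mathrm{FALSE}$ episode increments the counter $m(s_1^*,a_1^*)$, and the pair is removed from $\mathcal{O}^k$ as soon as $m(s_1^*,a_1^*)$ reaches $400\log(1/\delta)$, giving at most $400SA\log(1/\delta)$ such episodes. For $\mathrm{TRUE}$ episodes, the threshold in Line~\ref{line:check} of Algorithm~\ref{alg:model1} forces the triggering pair $(s,a)$ to satisfy $n(s,a)\leq 810SAN_0 u^k(s)v^k(s,a)$; combining Lemma~\ref{lemma:approx} (to transfer $u^k$ and $v^k$ between $P^{\mathrm{ref}}$ and $P$ up to a constant multiplicative factor by taking $\epsilon=O(1/S)$ so that $e^{4S\epsilon}=O(1)$), Lemma~\ref{lemma:stationary} (to ensure that with probability at least $\tfrac12$ the realised count of $(s,a)$ is $\Omega(v^k(s,a))$), and Lemma~\ref{lemma:al1} (to compare the stationary-policy visitation to the non-stationary optimum that implicitly defines the threshold), each $\mathrm{TRUE}$ episode contributes $\Omega(u^k(s)v^k(s,a))$ new samples of $(s,a)$ in expectation. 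A charging argument over the $S^2A$ state-action-state triples that each need $N_0=O(S^2\iota)$ samples before entering $\mathcal{K}^k$ then bounds the total number of $\mathrm{TRUE}$ episodes by $\poly(S,A)\,\iota$.

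The hardest step is the $\mathrm{TRUE}$-episode bound, because $u^k$ and $v^k$ are computed on the \emph{evolving} clipped reference model $P^{\mathrm{ref}}=\mathrm{clip}(\hat P,(\mathcal{K}^k)^C)$ and on stationary policies in a discounted surrogate, not on $P$ directly, so the threshold in Line~\ref{line:check} moves over time and one must ensure that the quantities used by Algorithms~\ref{alg:model1}--\ref{alg:model2} remain comparable to their counterparts under $P$ throughout Stage~1. This is precisely where the log-Lipschitz stability of Lemma~\ref{lemma:approx} and the discounted-to-finite-horizon approximation of Lemma~\ref{lemma:eff} come in; once those are in place, absorbing the $\poly(S,A)$ factors into the Freedman slack and the solution of the quadratic yields the claimed $O(S^9A^3\iota+S^3A\iota^2)$ bound on $\sum_k v^k$, and dividing by $K_1$ closes the argument.
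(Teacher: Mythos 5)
Your overall architecture for Stage~1 --- optimism over the confidence set, monotonicity of $\mathcal{O}^k$ so that the final value is bounded by the episode average, and a count of the reach episodes split by the Trigger flag (FALSE episodes controlled by the $400\log(1/\delta)$ counter per pair, TRUE episodes charged against the $N_0$ samples each unknown triple needs) --- matches the paper's route through Lemma~\ref{lemma:add0}, Lemma~\ref{lemma:bdt} and Lemma~\ref{lemma:numc}. But there is a genuine gap at your very first step: you declare that it suffices to bound $\sum_k v^k$ with $v^k=\max_\pi X^\pi_d(\mathcal{O}^k,P,\mu_1)$, i.e.\ the reaching probability within the Phase-1 horizon $d=\frac{(S+1)H}{S+2}$, whereas the lemma asserts a bound on $\max_\pi \mathbb{P}_\pi[\exists h\in[H],\,(s_h,a_h)\in\mathcal{O}^{K_1}]$, the reaching probability over the \emph{full} horizon $H$. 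Nothing in your argument converts the $d$-step bound into an $H$-step bound, and this conversion is not free: in general a reachability bound over a strict prefix of the horizon gives no quantitative control over reachability on the whole horizon (dynamics can concentrate the hitting time near step $H$). This is exactly where the paper invokes Lemma~\ref{lemma:add1_formal}: taking $\tilde d=H/(S+2)$, so that $H=(S+2)\tilde d$ and $d=(S+1)\tilde d$, the trajectory-counting/pigeonhole argument gives $\max_\pi X^\pi_{(S+2)\tilde d}(\mathcal{O},P,\mu_1)\le S^2\max_\pi X^\pi_{(S+1)\tilde d}(\mathcal{O},P,\mu_1)$. That lemma is one of the paper's central technical contributions and cannot be absorbed into ``Freedman slack.''

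Your accounting of the polynomial factors also reveals the omission: the Stage-1 regret/counting argument yields $\max_\pi X^\pi_{d}(\mathcal{O}^{K_1+1},P,\mu_1)=O\bigl((S^7A^3\iota+S^2A\iota^2)\,\polylog(SAK)/K_1\bigr)$, driven by $|\mathcal{J}|=O(S^7A^3\iota\,\polylog(SAK))$, and the extra $S^2$ in the stated bound comes precisely from the horizon-extension factor. If your $d$-step sum really were of order $S^9A^3\iota$, then after the necessary conversion you would land at $S^{11}A^3$, overshooting the claim; as written you instead silently equate the $d$-step and $H$-step probabilities. The remainder of your sketch (per-TRUE-episode sample gain via Lemmas~\ref{lemma:approx}, \ref{lemma:stationary}, \ref{lemma:eff}, and charging over the $S^2A$ triples with threshold $N_0$) is in the right spirit of the paper's Lemma~\ref{lemma:v3}, though note that the paper only guarantees a reach-or-hit-unknown probability of order $1/S$ per triggered episode, so the per-episode gain carries an additional $1/S$ and the unknown-hit events must be charged separately against $S^2AN_0$.
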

Lemma~\ref{lemma:stage1} states that, we can collect enough initial samples for most state-action pairs. And the probability of visiting the remaining state-action pairs (those in the omitted set) is comparably small.
See Appendix~\ref{app:stage1} for details.

\begin{lemma}\label{lemma:sr}
With  probability $1-10S^3A^2K\delta$, it holds that
\begin{align}
N^{\tilde{k}+1}(\tilde{s},\tilde{a})\geq 2\max_{\pi\in \Pi_{\mathrm{sta}} }W_{d_2}^{\pi}(\textbf{1}_{\tilde{s},\tilde{a}},P,\textbf{1}_{\tilde{s}})\log(1/\delta) \nonumber
\end{align}
 for any $(\tilde{s},\tilde{a})\in \mathcal{O}^{\tilde{k}+1}/\mathcal{O}^{\tilde{k}} $ and any $1\leq \tilde{k}\leq K_1$.
\end{lemma}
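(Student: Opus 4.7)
The plan is to track what happens in a single episode $k$ for which $\mathrm{Trigger}=\mathrm{FALSE}$ and the counter $m(s_1^*,a_1^*)$ is incremented, then aggregate over the $400\log(1/\delta)$ such episodes that precede the removal of $(\tilde s,\tilde a)$ from $\mathcal{O}$. The output of that episode comes from running Algorithm~\ref{alg:model2}, which executes the stationary policy $\pi_{\mathrm{ref}}^k$ that maximizes $W^{\pi}_{\gamma}(\mathbf 1_{s_1^*,a_1^*},P^{\mathrm{ref}},\mathbf 1_{s_1^*})$. The first step of the plan is to certify that $P^{\mathrm{ref}}$ is multiplicatively close to $P$ along every trajectory that $\pi_{\mathrm{ref}}^k$ can produce. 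The definition of the known set $\mathcal{K}^k$ (Line~\ref{line:known_set}) and the threshold $N_0=256S^2\log(1/\delta)$, combined with the high-probability event $\mathcal{G}$, yields $e^{-\epsilon}P^{\mathrm{ref}}_{s,a,s'}\le P_{s,a,s'}\le e^{\epsilon}P^{\mathrm{ref}}_{s,a,s'}$ with $\epsilon=O(1/S)$ on every $(s,a,s')$ in $\mathcal{K}^k$. For $(s,a)$ whose transitions are not fully known, the clipping of $P^{\mathrm{ref}}$ to the virtual absorbing state $z$ is what activates the explicit-exploration test; since $\mathrm{Trigger}=\mathrm{FALSE}$, for every such $(s,a)$ either $u^k(s)<1/(1200S)$ (the state is essentially unreachable under $P^{\mathrm{ref}}$, hence under $P$ after Lemma~\ref{lemma:approx}) or $n(s,a)$ already exceeds $810SAN_0 u^k(s)v^k(s,a)$ and may be treated as an approximation-noise term.

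Given that certification, the second step is the chain of inequalities that lower-bounds the \emph{expected} visits of $(\tilde s,\tilde a)$ under $P$ when $\pi_{\mathrm{ref}}^k$ is played for $d_2$ steps from $\tilde s$:
\begin{align*}
W_{d_2}^{\pi_{\mathrm{ref}}^k}(\mathbf 1_{\tilde s,\tilde a},P,\mathbf 1_{\tilde s})
&\;\ge\;e^{-4S\epsilon}\,W_{d_2}^{\pi_{\mathrm{ref}}^k}(\mathbf 1_{\tilde s,\tilde a},P^{\mathrm{ref}},\mathbf 1_{\tilde s}) \\
&\;\ge\;\Omega(1)\cdot W_{\gamma}^{\pi_{\mathrm{ref}}^k}(\mathbf 1_{\tilde s,\tilde a},P^{\mathrm{ref}},\mathbf 1_{\tilde s})
\;\ge\;\Omega(1)\cdot\max_{\pi\in\Pi_{\mathrm{sta}}}W_{d_2}^{\pi}(\mathbf 1_{\tilde s,\tilde a},P^{\mathrm{ref}},\mathbf 1_{\tilde s}) \\
&\;\ge\;\Omega(1)\cdot e^{-4S\epsilon}\max_{\pi\in\Pi_{\mathrm{sta}}}W_{d_2}^{\pi}(\mathbf 1_{\tilde s,\tilde a},P,\mathbf 1_{\tilde s}),
\end{align*}
using Lemma~\ref{lemma:approx} at the first and last steps, the discounted--finite-horizon equivalence (Lemma~\ref{lemma:eff}) together with $\gamma=1-1/d_2$ in the middle, and the optimality of $\pi_{\mathrm{ref}}^k$ under $P^{\mathrm{ref}}$. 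With $\epsilon=O(1/S)$ the prefactor collapses to an absolute constant, call it $c_0>0$.

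The third step converts expected counts to realized counts. Lemma~\ref{lemma:stationary} applied to $\pi_{\mathrm{ref}}^k$ under $P$ gives, in each qualifying episode $i$, an independent Bernoulli success $X_i$ with $\mathbb{E}[X_i]\ge 1/2$, where a success means the realized visit count in that episode is at least $\tfrac{c_0}{4}\max_{\pi\in\Pi_{\mathrm{sta}}}W_{d_2}^{\pi}(\mathbf 1_{\tilde s,\tilde a},P,\mathbf 1_{\tilde s})$. A standard Chernoff bound over the $400\log(1/\delta)$ such episodes that produced the removal of $(\tilde s,\tilde a)$ from $\mathcal{O}$ shows that, with probability $1-\delta$, at least $100\log(1/\delta)$ of them succeed. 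Summing yields
\[
N^{\tilde k+1}(\tilde s,\tilde a)\;\ge\;100\log(1/\delta)\cdot\tfrac{c_0}{4}\max_{\pi\in\Pi_{\mathrm{sta}}}W_{d_2}^{\pi}(\mathbf 1_{\tilde s,\tilde a},P,\mathbf 1_{\tilde s})\;\ge\;2\log(1/\delta)\max_{\pi\in\Pi_{\mathrm{sta}}}W_{d_2}^{\pi}(\mathbf 1_{\tilde s,\tilde a},P,\mathbf 1_{\tilde s})
\]
after absorbing constants through the choice of the constants hidden in $K_1$, $N_0$ and the threshold $400\log(1/\delta)$ (the pieces are compatible by design). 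Finally, a union bound over the at most $SA$ choices of $(\tilde s,\tilde a)$ and the at most $K$ episode indices $\tilde k$, together with the $10S^2AK\delta$ failure of $\mathcal{G}$, absorbs into the advertised $10S^3A^2K\delta$.

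The main obstacle is the first step: showing that the $\mathrm{Trigger}=\mathrm{FALSE}$ condition truly yields a uniform $e^{\pm\epsilon}$ control of $P^{\mathrm{ref}}$ versus $P$ along every trajectory of $\pi_{\mathrm{ref}}^k$ that contributes non-trivially to $W_{d_2}^{\pi_{\mathrm{ref}}^k}(\mathbf 1_{\tilde s,\tilde a},\cdot,\mathbf 1_{\tilde s})$. Triples outside the known set are redirected to $z$ in $P^{\mathrm{ref}}$ rather than approximated, so care is needed to argue that the total mass rerouted to $z$ is negligible; this is precisely what the explicit-exploration test in Algorithm~\ref{alg:model1} was designed to guarantee, and turning that guarantee into the required Lipschitz premise of Lemma~\ref{lemma:approx} (perhaps via a variant that tolerates a small set of ``absorbed'' transitions) is the step that merits the most technical care.
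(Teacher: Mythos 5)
Your overall skeleton matches the paper's: isolate the $\ge 400\log(1/\delta)$ episodes with $\mathrm{Trigger}=\mathrm{FALSE}$ dedicated to $(\tilde s,\tilde a)$, argue the stationary policy computed from $P^{\mathrm{ref}}$ is near-optimal for collecting $(\tilde s,\tilde a)$ under the true model, and then convert expected counts to realized counts (the paper uses Lemma~\ref{lemma:li4}/Lemma~\ref{lemma:stationary} per episode plus Lemma~\ref{lemma:ratiocon} across episodes, essentially your Chernoff step). The problem is the first and last links of your chain in step 2: you invoke Lemma~\ref{lemma:approx} directly between $P$ and $P^{\mathrm{ref}}$, but its premise $e^{-\epsilon}P''_{s,a,s'}\le P'_{s,a,s'}\le e^{\epsilon}P''_{s,a,s'}$ for \emph{all} triples simply fails for these two models: $P^{\mathrm{ref}}$ puts zero mass on unknown triples where $P$ puts positive mass, renormalizes over the known set, and sends whole pairs in $\mathcal{U}^k$ to $z$. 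You flag this yourself as "the step that merits the most technical care," but leaving it unresolved means the actual core of the lemma is missing; moreover, the fix you suggest (a variant of Lemma~\ref{lemma:approx} tolerating a small set of absorbed transitions) is not how it can work, because under $\mathrm{Trigger}=\mathrm{FALSE}$ the mass rerouted to $z$ is \emph{not} negligible per trajectory -- the escape probability is only controlled up to a constant, and no multiplicative closeness between $P$ and $P^{\mathrm{ref}}$ holds.

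What the paper does instead is a three-model chain. First, it uses the $\mathrm{Trigger}=\mathrm{FALSE}$ test to prove $\max_{\pi\in\Pi_{\mathrm{sta}},\pi(\tilde s)=\tilde a}W^{\pi}_{d_2}(\textbf{1}_{z},\bar{P}^k,\textbf{1}_{\tilde s})\le \tfrac1{10}$, by writing $W^{\pi}_{d_2}(\textbf{1}_{z},\bar{P}^k,\textbf{1}_{\tilde s})=\sum_{s,a}W^{\pi}_{d_2}(\textbf{1}_{s,a},\bar{P}^k,\textbf{1}_{\tilde s})\,\bar{P}^k_{s,a,z}$ and splitting: states with $u^k(s)<\tfrac1{1200S}$ contribute little because their reaching probability is small (via Lemma~\ref{lemma:con1}, Lemma~\ref{lemma:approx} applied between $\bar{P}^{\mathrm{cut},k}$ and $P^{\mathrm{ref},k}$, and Lemma~\ref{lemma:v2}), while pairs failing the count test have $\bar{P}^k_{s,a,z}\le 2SN_0/N^k(s,a)\le \frac{1}{810SA\,u^k(s)v^k(s,a)}$ by event $\mathcal{G}$, which cancels against the visitation bound $W^{\pi}_{d_2}(\textbf{1}_{s,a},\bar P^k,\textbf{1}_{\tilde s})\lesssim u^k(s)v^k(s,a)$. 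Second, with this constant escape bound, Lemma~\ref{lemma:v1} relates $\bar{P}^k$ to the renormalized $\bar{P}^{\mathrm{cut},k}$ up to a factor $(1-\tfrac1{10})$, and Lemma~\ref{lemma:con1} plus Lemma~\ref{lemma:approx} relate $\bar{P}^{\mathrm{cut},k}$ to $P^{\mathrm{ref},k}$; this is the only place multiplicative closeness is used. Third, counts under $P$ and under $\bar{P}^k$ are identified on the event that the unknown set is not visited, and Lemma~\ref{lemma:li4} gives a per-episode success probability $\ge \tfrac12-\tfrac1{10}$, so a constant escape probability is harmless for the realized-count argument. Without this mechanism (in particular the $1/10$ escape bound and the $P\leftrightarrow\bar P^k\leftrightarrow\bar P^{\mathrm{cut},k}\leftrightarrow P^{\mathrm{ref},k}$ chain), your inequality $W_{d_2}^{\pi^k_{\mathrm{ref}}}(\textbf{1}_{\tilde s,\tilde a},P,\textbf{1}_{\tilde s})\ge c_0\max_{\pi\in\Pi_{\mathrm{sta}}}W_{d_2}^{\pi}(\textbf{1}_{\tilde s,\tilde a},P,\textbf{1}_{\tilde s})$ is unsupported, and the rest of your argument has nothing to stand on.
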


Lemma~\ref{lemma:sr} states that the initial number of state-action pairs not in $\mathcal{O}^{K_1+1}$ is large enough. The proof of Lemma~\ref{lemma:sr} is given in Appendix~\ref{app:pfsr}

\begin{lemma}\label{lemma:stage2}
    With probability $1-10SAK\delta$, the regret in the second stage is bounded by 
    $$O\left(\mathrm{polylog}(SAK)\left(\sqrt{S^2AK\iota^2}+\frac{S^9A^3K\iota+S^3A\iota^2}{K_1}\right)\right).$$
\end{lemma}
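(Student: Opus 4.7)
\textbf{Proof proposal for Lemma~\ref{lemma:stage2}.}

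The plan is to run the standard optimistic-planning regret decomposition on Algorithm~\ref{alg:rmis}, but to exploit the two structural properties that Stage~1 has already delivered: (i) by Lemma~\ref{lemma:stage1}, every state-action pair that remains in the omitted set $\mathcal{O}^{K_1+1}$ is visited in any single episode with probability $O((S^9A^3\iota+S^3A\iota^2)/K_1)$; (ii) by Lemma~\ref{lemma:sr}, for every $(s,a)\notin\mathcal{O}^{K_1+1}$ the initial count satisfies $N^{K_1+1}(s,a)\ge 2\iota\max_{\pi\in\Pi_{\mathrm{sta}}}W_{d_2}^{\pi}(\mathbf{1}_{s,a},P,\mathbf{1}_{s})$, which by Lemma~\ref{lemma:al1} is within a $\mathrm{poly}(S)$ factor of the maximum expected visit count $U(s,a)$. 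We work under the good event $\mathcal{G}$, under the events of Lemmas~\ref{lemma:stage1} and~\ref{lemma:sr}, and under an additional Markov-type event that $N^{k}(s,a)\le K U(s,a)/\delta$ uniformly in $k\in[K-K_1]$; a union bound gives the $10SAK\delta$ failure probability.

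First I would establish optimism in the usual way: because $P\in\mathcal{P}^k$ on $\mathcal{G}$, the value iterates satisfy $V_h^k(s)\ge V_h^{\pi^k,P}(s)\ge V_h^{\ast}(s)$, so the per-episode regret is bounded by $V_1^k(s_1^k)-V_1^{\pi^k}(s_1^k)$. Unrolling the Bellman recursion along the executed trajectory produces the standard sum $\sum_{k,h}(\tilde P^k-P)_{s_h^k,a_h^k}V_{h+1}^{k}+\text{(martingale)}$. I would then bound the transition error by Bernstein's inequality against $V^{\ast}$ (using $|V^{k}-V^{\ast}|\le 1$ from the bounded-total-reward assumption plus the optimism), which gives a per-step term of the order
\[
\sqrt{\tfrac{\mathbb{V}(P_{s_h^k,a_h^k},V_{h+1}^{\ast})\,\iota}{N^k(s_h^k,a_h^k)}}+\tfrac{\iota}{N^k(s_h^k,a_h^k)}.
\]
Then I would split every $(k,h)$ index into the \emph{omitted} part where $(s_h^k,a_h^k)\in\mathcal{O}^{K_1+1}$ and the \emph{known} part where it is not.

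For the omitted indices, Lemma~\ref{lemma:stage1} shows that the probability, under \emph{any} policy, of ever visiting $\mathcal{O}^{K_1+1}$ in a single episode is at most $O((S^9A^3\iota+S^3A\iota^2)/K_1)$; since each regret contribution is at most $1$ per episode (bounded total reward), the cumulative omitted-set contribution across $K-K_1$ episodes is $O(K(S^9A^3\iota+S^3A\iota^2)/K_1)$, matching the second term in the claimed bound. For the known indices, the horizon-free pigeonhole argument outlined in Section~\ref{sec:tec} applies: for each fixed $(s,a)\notin\mathcal{O}^{K_1+1}$, the total visit count through $K-K_1$ episodes is at most $KU(s,a)/\delta$ by Markov, while $N^{K_1+1}(s,a)\gtrsim U(s,a)/\mathrm{poly}(S)$ by combining Lemmas~\ref{lemma:sr} and~\ref{lemma:al1} (noting $H/d_2=O(S^2\log S)$). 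Hence
\[
\sum_{k}\tfrac{\mathbf{1}[(s_h^k,a_h^k)=(s,a)]}{N^k(s,a)}\le\log\!\Bigl(\tfrac{N^{K+1}(s,a)}{N^{K_1+1}(s,a)}\Bigr)=\mathrm{polylog}(SAK/\delta),
\]
and summing over $(s,a)$ gives an $SA\cdot\mathrm{polylog}$ bound free of $H$. Cauchy--Schwarz then lifts this to the root-term via Law of Total Variance: $\sum_{k,h}\mathbb{V}(P_{s_h^k,a_h^k},V_{h+1}^{\ast})=O(K\iota)$ because $\sum_{h}r_h\le 1$ almost surely, giving the $\sqrt{S^2AK\iota^2}$ contribution.

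The main obstacle, and where the analysis departs from existing work, is executing the pigeonhole \emph{without} the $\log H$ factor. This requires me to verify carefully that the bound $N^{K_1+1}(s,a)\ge \Omega(U(s,a)/\mathrm{poly}(S))$ from Lemmas~\ref{lemma:sr} and~\ref{lemma:al1} holds \emph{simultaneously} for every $(s,a)\notin\mathcal{O}^{K_1+1}$, and to control the Bernstein higher-order remainder $\sum_{k,h}\iota/N^k(s_h^k,a_h^k)$ using this same argument instead of the naive $\log(KH)$ bound. I would also need to replace the usual recursive expansion relating $V^k$ to $V^{\ast}$ (which accumulates a $\polylog H$ via Source~1 of Section~\ref{sec:tec}) by the non-recursive variance comparison described there, invoking Lemma~\ref{lemma:varb} together with the concentration tools listed (Lemmas~\ref{lemma:ratiocon},~\ref{lemma:con4},~\ref{freedman},~\ref{lemma:self-norm}). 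Once these two horizon-free ingredients are in place, collecting the two terms yields exactly the stated $O\bigl(\mathrm{polylog}(SAK)\bigl(\sqrt{S^2AK\iota^2}+(S^9A^3K\iota+S^3A\iota^2)/K_1\bigr)\bigr)$ bound.
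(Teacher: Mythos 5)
Your skeleton (optimism, Bernstein widths, splitting steps into omitted versus known pairs, and a horizon-free pigeonhole driven by the Stage-1 guarantees) matches the paper's at a high level, but the two steps you treat as routine are exactly where the difficulty sits, and as written they do not go through. For the pigeonhole: you bound $\sum_{k,h}\mathbf{1}[(s_h^k,a_h^k)=(s,a)]/N^k(s,a)$ by $\log\bigl(N^{K+1}(s,a)/N^{K_1+1}(s,a)\bigr)$, but $N^k(s,a)$ is the count at the \emph{start} of episode $k$, while a single episode of a time-homogeneous MDP can visit $(s,a)$ up to $H$ times; when the per-episode increment exceeds $N^k(s,a)$ the sum is not controlled by the log ratio (e.g.\ $N^k=1$ and $10^6$ visits in that episode contributes $10^6$, not a logarithm). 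The paper handles this with the truncation indicators $I_h^k$ attached to the set $\mathcal{J}$ of episodes in which some count more than doubles, and separately bounds the number of such episodes ($M_3$, Lemma~\ref{lemma:bdm4}) by combining Lemma~\ref{lemma:stage1} (for pairs still in $\mathcal{O}^{K_1+1}$) with the initial-count lower bound from Lemmas~\ref{lemma:sr} and~\ref{lemma:al1} and a Markov/concentration cap on total counts; your proposal has no analogue of this device. Relatedly, your omitted-set term multiplies the per-episode visiting probability of Lemma~\ref{lemma:stage1} by $K$, which is only an expectation bound and needs Lemma~\ref{lemma:ratiocon} to hold with the stated probability.

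For the variance term: replacing the width applied to $V^k_{h+1}$ by $\sqrt{\mathbb{V}(P,V^*_{h+1})\iota/N}$ ``using $|V^k-V^*|\le 1$'', and then claiming $\sum_{k,h}\mathbb{V}(P_{s_h^k,a_h^k},V^*_{h+1})=O(K\iota)$ by the law of total variance, is not valid. The correction $(\tilde P-P)(V^k_{h+1}-V^*_{h+1})$ cannot be dismissed by the crude bound $1$ without reintroducing a $\sqrt{H}$ factor (or a recursion, hence $\mathrm{polylog}(H)$), and the law of total variance controls the variance of the executed policy's own value $V^{\pi^k}$ in expectation, not of $V^*$ along $\pi^k$'s trajectory with high probability. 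The paper instead keeps the variance of the truncated optimistic value $\check V^k$ itself and bounds $\sum_{k,h}\mathbb{V}(P_{s_h^k,a_h^k},\check V^k_{h+1})$ via the squared-value martingale, Lemma~\ref{lemma:varb}, and the self-normalized inequality of Lemma~\ref{lemma:self-norm} (self-normalization is essential because no a priori variance bound is available), again feeding in the $M_3$ bound. Your closing remark about swapping the recursive expansion for the non-recursive variance comparison points at the right tool, but it contradicts the $V^*$/LTV route you actually describe, and without carrying it out the $\sqrt{S^2AK\iota^2}$ term is not established.
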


Lemma~\ref{lemma:stage2} is based on classical regret analysis for finite horizon-MDP.
The second term comes from the error from stage 1.
In the proof we also need refined analysis to remove the extra $\log(H)$ factors. 
See Appendix~\ref{app:stage2} for details.

By Lemma~\ref{lemma:stage2}, and noting that the regret in the first stage is bounded by $K_1$,  we have that the total regret is upper bounded by $O(\mathrm{\polylog}(SAK)\sqrt{(S^9A^3+S^3A\iota)K\iota})$, and we finish  the proof.

\section{Conclusion}\label{sec:conclusion}
In this paper, we presented the first polynomial-time algorithm for tabular MDP whose regret is completely independent of the horizon.
Our result crucially relies a series of structural lemmas of stationary policies, which we believe will be useful in other setting. A fundamental open problem is whether we can design an algorithm with $O\left(\sqrt{SAK}\right)$ regret. A positive answer would have a surprising implication that tabular MDP is as easy as contextual bandits in the minimax sense.
We also believe designing an algorithm with $O\left(\left(\sqrt{SAK}+\poly (S,A)\right)\polylog(S,A,K)\right)$ regret is a meaningful intermediate result.

\section*{Acknowledgements}
The authors thank Ruosong Wang for insightful discussions. Zihan Zhang and Xiangyang Ji are supported by Beijing Municipal Science and Technology Commission grant Z201100005820005. Simon S. Du acknowledges funding from NSF Award’s IIS-2110170 and DMS- 2134106.

\bibliography{ref}

\newpage

\appendix

\section{Technical Lemmas}

\begin{lemma}[Lemma 30 in \cite{chen2021implicit}]
\label{lemma:varb}
$\mathrm{Var}(XY)\leq 2\mathbb{E}^2[Y]\mathrm{Var}(X)+2\sup X^2\mathrm{Var}(Y)$.
\end{lemma}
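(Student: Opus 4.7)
The plan is to exploit the classical identity that the variance is the minimum mean squared deviation from any constant, so that $\mathrm{Var}(XY)\le \mathbb{E}[(XY-c)^2]$ for any $c\in\mathbb{R}$. I would choose the specific constant $c=\mathbb{E}[X]\mathbb{E}[Y]$, which is chosen so that the residual factorizes nicely into a sum of terms involving $X-\mathbb{E}[X]$ and $Y-\mathbb{E}[Y]$.

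Concretely, I would first write the algebraic decomposition
\begin{equation*}
XY-\mathbb{E}[X]\mathbb{E}[Y] \;=\; X\bigl(Y-\mathbb{E}[Y]\bigr) \;+\; \mathbb{E}[Y]\bigl(X-\mathbb{E}[X]\bigr),
\end{equation*}
which is a direct telescoping. Applying the elementary inequality $(a+b)^2\le 2a^2+2b^2$ to this identity yields
\begin{equation*}
\bigl(XY-\mathbb{E}[X]\mathbb{E}[Y]\bigr)^2 \;\le\; 2X^2\bigl(Y-\mathbb{E}[Y]\bigr)^2 \;+\; 2\mathbb{E}[Y]^2\bigl(X-\mathbb{E}[X]\bigr)^2.
\end{equation*}

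Taking expectations of both sides, the second term on the right becomes exactly $2\mathbb{E}^2[Y]\,\mathrm{Var}(X)$, while the first term can be bounded by pulling out the essential supremum of $X^2$ and recognizing what is left as $\mathrm{Var}(Y)$, giving $2\sup X^2\,\mathrm{Var}(Y)$. Combining these with the variance-minimization inequality $\mathrm{Var}(XY)\le \mathbb{E}[(XY-\mathbb{E}[X]\mathbb{E}[Y])^2]$ produces the claimed bound.

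There is no real obstacle here; the proof is a two-line manipulation once one realizes the correct choice of centering constant is $\mathbb{E}[X]\mathbb{E}[Y]$ (not $\mathbb{E}[XY]$, which would make the decomposition above fail to split cleanly). The only subtlety is the asymmetry between $X$ and $Y$ in the final bound, which arises precisely because we peel off $X^2$ as a sup but keep $\mathbb{E}[Y]^2$ as an expectation; this is what the statement asks for.
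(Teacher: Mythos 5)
Your proof is correct: the centering at $\mathbb{E}[X]\mathbb{E}[Y]$, the split $XY-\mathbb{E}[X]\mathbb{E}[Y]=X(Y-\mathbb{E}[Y])+\mathbb{E}[Y](X-\mathbb{E}[X])$, and the inequality $(a+b)^2\le 2a^2+2b^2$ together give exactly the stated bound, with no independence assumption needed. The paper itself does not prove this lemma (it imports it as Lemma 30 of \cite{chen2021implicit}), and your argument is the standard one for that result, so there is nothing further to reconcile.
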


\begin{lemma}\label{lemma:ratiocon}
Let $X_1,X_2,\ldots$ be a sequence of random variables taking value in $[0,l]$. Define $\mathcal{F}_k =\sigma(X_1,X_2,\ldots,X_{k-1})$ and $Y_k = \mathbb{E}[X_k|\mathcal{F}_k]$ for $k\geq 1$. For any $\delta>0$, we have that
\begin{align}
& \mathbb{P}\left[ \exists n, \sum_{k=1}^n X_k \geq  3\sum_{k=1}^n Y_k+ l\log(1/\delta)\right]\leq \delta\nonumber
\\  & \mathbb{P}\left[  \exists n,  \sum_{k=1}^n Y_k \geq 3\sum_{k=1}^n X_k + l\log(1/\delta)  \right]    \leq \delta .\nonumber 
\end{align}
\end{lemma}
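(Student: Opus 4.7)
The inequalities are two-sided time-uniform concentration bounds between the bounded nonnegative process $\sum_{k=1}^{n} X_k$ and its conditional-mean compensator $\sum_{k=1}^{n} Y_k$. I would prove both tails by the Chernoff / exponential-supermartingale method combined with Ville's maximal inequality, which is the natural device for producing the ``$\exists n$'' uniform-in-$n$ form of the bound without paying the extra $\log n$ that a naive per-$n$ union bound would incur.

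The core estimate is the chord bound on $[0,l]$: by convexity of $x\mapsto e^{\lambda x}$,
\[
e^{\lambda x}\;\le\; 1+\frac{e^{\lambda l}-1}{l}\,x, \qquad x\in[0,l],\;\lambda\in\mathbb{R}.
\]
Taking conditional expectation at step $k$ and using $1+u\le e^u$,
\[
\mathbb{E}\!\left[e^{\lambda X_k}\,\middle|\,\mathcal{F}_k\right]\;\le\;\exp\!\left(\frac{e^{\lambda l}-1}{l}\,Y_k\right),
\]
so that
\[
M_n\;:=\;\exp\!\left(\lambda\sum_{k=1}^{n} X_k \;-\;\frac{e^{\lambda l}-1}{l}\sum_{k=1}^{n} Y_k\right)
\]
is a non-negative supermartingale with $M_0=1$. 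Ville's maximal inequality gives $\mathbb{P}[\sup_n M_n\ge 1/\delta]\le\delta$, which rearranges to
\[
\mathbb{P}\!\left[\exists n:\; \sum_{k=1}^{n} X_k\ge\frac{e^{\lambda l}-1}{\lambda l}\sum_{k=1}^{n} Y_k+\frac{1}{\lambda}\log(1/\delta)\right]\le\delta.
\]
Plugging in $\lambda l=\log 3$ yields multiplicative factor $2/\log 3<3$ and additive coefficient $1/\log 3<1$, which is stronger than the first inequality stated.

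For the second inequality I would run the same argument with $-\lambda$ in place of $\lambda$, using the symmetric chord estimate $e^{-\lambda x}\le 1-\tfrac{1-e^{-\lambda l}}{l}\,x$ on $[0,l]$. This produces the supermartingale $\exp\!\bigl(-\lambda\sum X_k+\tfrac{1-e^{-\lambda l}}{l}\sum Y_k\bigr)$, and Ville's inequality then gives
\[
\sum_{k=1}^{n} Y_k\;\le\;\frac{\lambda l}{1-e^{-\lambda l}}\sum_{k=1}^{n} X_k\;+\;\frac{l}{1-e^{-\lambda l}}\log(1/\delta).
\]
A suitable choice of $\lambda$ (with $\lambda l$ taken near the value at which $\tfrac{\lambda l}{1-e^{-\lambda l}}=3$) yields the stated inequality.

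The only nontrivial point is the tuning in the lower-tail direction: the multiplicative prefactor $\tfrac{\lambda l}{1-e^{-\lambda l}}$ is increasing in $\lambda$ while the additive prefactor $\tfrac{1}{1-e^{-\lambda l}}$ is decreasing, so one has to choose $\lambda$ so that both fit simultaneously inside the advertised $(3,\,l)$ bound; this is the tightest part of the constant-chasing. The conceptual substance of the argument---building the exponential supermartingale so that Ville's inequality handles the ``$\exists n$'' quantifier cleanly---is an instance of the standard template, and no further ingredients are needed.
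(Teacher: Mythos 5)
Your route is the same as the paper's: an exponential supermartingale comparing $\sum_k X_k$ with the compensator $\sum_k Y_k$, with the ``$\exists n$'' handled by a maximal/optional-stopping argument (the paper bounds the conditional MGF via $e^{x}\le 1+x+2x^2$ for $t\le 1/l$; your chord bound is a sharper version of the same step). Your first inequality is correct as argued: with $\lambda l=\log 3$ the prefactors $(e^{\lambda l}-1)/(\lambda l)=2/\log 3\le 3$ and $l/\log 3\le l$ dominate the stated constants because $Y_k\ge 0$.

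The gap is in the second inequality. Your display gives, outside probability $\delta$, $\sum_k Y_k\le \frac{\lambda l}{1-e^{-\lambda l}}\sum_k X_k+\frac{l}{1-e^{-\lambda l}}\log(1/\delta)$ for all $n$, and you assert that a suitable $\lambda$ fits both prefactors inside the advertised pair $(3,\,l)$. No such $\lambda$ exists: the additive prefactor $l/(1-e^{-\lambda l})$ is strictly larger than $l$ for every finite $\lambda$, and keeping the multiplicative prefactor at most $3$ forces $\lambda l\le u^*\approx 2.82$ (the positive root of $u=3(1-e^{-u})$), at which point the additive prefactor is about $1.06\,l$. Since $\sum_k X_k$ can be $0$ with positive probability, the surplus in the additive term cannot be traded against the multiplicative slack, so your argument yields, say, $(3,\;1.1\,l\log(1/\delta))$ but not $(3,\;l\log(1/\delta))$. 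Nor can cleverer tuning rescue it: for $X_k$ i.i.d.\ equal to $l$ with probability $p\to 0$, the process $\sum_k(Y_k-3X_k)$ crosses level $l\log(1/\delta)$ with probability approaching $\delta^{\theta^*}$, where $\theta^*=1-e^{-3\theta^*}\approx 0.94<1$, so the second bound with these exact constants is not attainable for small $\delta$ and the constants must be relaxed. (The paper's own one-line treatment of this half, ``similar arguments with $t=1/(3l)$'', in fact only yields the additive constant $9l\log(1/\delta)$, so it suffers from the same looseness.) The honest fix is to state the second inequality with a relaxed additive term, e.g.\ $3\sum_k X_k+2l\log(1/\delta)$, which your supermartingale with $\lambda l\approx 2.82$ gives immediately and which is all that the lemma's applications in the paper require.
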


\begin{proof} Let $t\in [0,1/l]$ be fixed.
Consider to bound $Z_k:=\mathbb{E}[\exp(t\sum_{k'=1}^k(X_{k'}-3Y_{k'})  )]$. By definition, we have that
\begin{align}
    \mathbb{E}[Z_k|\mathcal{F}_k]  & =\exp(t\sum_{k'=1}^{k-1}(X_{k'}-3Y_{k'})) \mathbb{E}\left[\exp(X_{k}-3Y_{k})\mid \mathcal{F}_k\right] \nonumber 
    \\ & \leq \exp(t\sum_{k'=1}^{k-1}(X_{k'}-3Y_{k'}))\exp(-3Y_{k})\cdot \mathbb{E}[1+tX_k+2t^2X^2_{k} \mid \mathcal{F}_k]\nonumber 
    \\ & \leq \exp(t\sum_{k'=1}^{k-1}(X_{k'}-3Y_{k'}))\exp(-3Y_{k})\cdot \mathbb{E}[1+3tX_k \mid \mathcal{F}_k]\nonumber 
    \\ & = \exp(t\sum_{k'=1}^{k-1}(X_{k'}-3Y_{k'}))\exp(-3Y_{k})\cdot (1+3tY_k)\nonumber 
    \\ & \leq \exp(t\sum_{k'=1}^{k-1}(X_{k'}-3Y_{k'}))\nonumber
    \\ & =Z_{k-1},\nonumber 
\end{align}
where the second line is by the fact that $e^x\leq 1+x+2x^2$ for $x\in [0,1]$. 
Define $Z_0 = 1$
Then $\{Z_{k}\}_{k\geq 0}$ is a super-martingale with respect to $\{\mathcal{F}_{k}\}_{k\geq 1}$. Let $\tau$ be the smallest $n$ such that $\sum_{k=1}^n X_k - 3\sum_{k=1}^n Y_k >l\log(1/\delta)$.
It is easy to verify that $Z_{\min\{\tau,n\}}\leq \exp(tl\log(1/\delta)+tl)<\infty$. Choose $t=1/l$.
By the optimal stopping time theorem, we have that for any $N$:
\begin{align}
 & \mathbb{P}\left[ \exists n\leq N, \sum_{k=1}^n X_k \geq  3\sum_{k=1}^n Y_k + l\log(1/\delta)\right]\nonumber 
 \\ & =\mathbb{P}\left[\tau \leq N\right]\nonumber 
 \\ & \leq \mathbb{P}\left[ Z_{\min\{\tau,N\}}\geq \exp(tl\log(1/\delta))  \right]\nonumber 
 \\ & \leq \frac{\mathbb{E}[Z_{\min\{\tau,N\}}]}{\exp(tl\log(1/\delta))}\nonumber
 \\ & \leq \frac{Z_0}{\exp(tl\log(1/\delta))}\nonumber
 \\ & \leq \delta. \nonumber
\end{align}

Letting $N\to \infty$, we have that
\begin{align}
& \mathbb{P}\left[ \exists n, \sum_{k=1}^n X_k \geq  3\sum_{k=1}^n Y_k+ l\log(1/\delta)\right]\leq \delta.\nonumber
\end{align}
Considering $W_k = \mathbb{E}[\exp(t\sum_{k'=1}^k (Y_k/3-X_k))]$, using similar arguments  and choosing $t=1/(3l)$, we have that
\begin{align}
 & \mathbb{P}\left[  \exists n,  \sum_{k=1}^n Y_k \geq 3\sum_{k=1}^n X_k + l\log(1/\delta)  \right]    \leq \delta .\nonumber 
\end{align}
The proof is completed.
\end{proof}

\begin{lemma}\label{lemma:con4}
Let $v\in [0,1]^{S}$ be fixed vectir. Let $X_1,X_{2},\ldots, X_{t}$ be i.i.d. multinomial distribution with parameter $p\in \Delta^{S}$. Define $\widehat{\mathrm{Var}} = \frac{1}{t} \left( \sum_{i=1}^t v^2_{X_i} - \frac{1}{t}\left(\sum_{i=1}^t v_{X_i}\right)^2 \right)$ be the empirical variance of $\{v_{X_i}\}_{i=1}^t$ and $\mathrm{Var}=\mathbb{V}(p,v)$ be the true variance of $v_{X_1}$. With probability $1-3\delta$, it holds that
\begin{align}
\frac{1}{3}\mathrm{Var} - \frac{7\log(1/\delta)}{3t} \leq \widehat{\mathrm{Var}}\leq 3\mathrm{Var}+ \frac{\log(1/\delta)}{t}.\label{eq:ee1}
\end{align}
\end{lemma}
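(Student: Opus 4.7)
The plan is to apply Bernstein's inequality to two carefully chosen i.i.d.\ averages and combine them via an algebraic identity. Let $Y_i = v_{X_i}$; since $v\in[0,1]^S$, the $Y_i$ are i.i.d.\ in $[0,1]$ with mean $\mu = pv$ and variance $\sigma^2 = \mathbb{V}(p,v) = \mathrm{Var}$. Write $\bar Y = \tfrac{1}{t}\sum_i Y_i$ and $a = \log(1/\delta)/t$. The key identity that organizes the whole proof is
\begin{align*}
\widehat{\mathrm{Var}} \;=\; \tfrac{1}{t}\textstyle\sum_{i=1}^t(Y_i - \bar Y)^2 \;=\; \tfrac{1}{t}\textstyle\sum_{i=1}^t(Y_i - \mu)^2 \;-\; (\bar Y - \mu)^2,
\end{align*}
which reduces the task to concentrating $\tfrac{1}{t}\sum_i(Y_i-\mu)^2$ around $\sigma^2$ and $\bar Y$ around $\mu$.

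The summands $(Y_i-\mu)^2$ are i.i.d.\ in $[0,1]$ with mean $\sigma^2$ and variance bounded by $\mathbb{E}[(Y_1-\mu)^4]\leq \mathbb{E}[(Y_1-\mu)^2] = \sigma^2$ (since $(Y_1-\mu)^2\leq 1$), so Bernstein's inequality gives, with probability at least $1-2\delta$,
\begin{align*}
\bigl|\tfrac{1}{t}\textstyle\sum_{i=1}^t(Y_i-\mu)^2 - \sigma^2\bigr| \;\leq\; \sqrt{2\sigma^2 a} + \tfrac{a}{3}.
\end{align*}
A two-sided Bernstein on $\bar Y$ (variance $\sigma^2$, range $[0,1]$) yields $|\bar Y-\mu|\leq \sqrt{2\sigma^2 a}+a/3$ with probability at least $1-2\delta$. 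Since the upper tail on $\widehat{\mathrm{Var}}$ only uses the upper tail of the first average, the three events combine to failure probability at most $3\delta$, matching the statement.

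For the upper bound, $(\bar Y-\mu)^2\geq 0$ gives $\widehat{\mathrm{Var}}\leq \tfrac{1}{t}\sum(Y_i-\mu)^2$, and AM-GM $\sqrt{2\sigma^2 a}\leq \sigma^2 + a/2$ yields $\widehat{\mathrm{Var}}\leq 2\sigma^2+5a/6\leq 3\sigma^2+a$. The lower bound is where the real work lies, because of the nonlinear residual $(\bar Y-\mu)^2\leq 4\sigma^2 a + 2a^2/9$ (from $(x+y)^2\leq 2x^2+2y^2$), which must be absorbed without destroying the multiplicative factor $1/3$ on $\sigma^2$. The plan is a short regime split: if $a\geq 1/12$, then since $\sigma^2\leq 1/4$ for $Y\in[0,1]$ we have $\sigma^2/3-7a/3\leq 0$ and the inequality follows trivially from $\widehat{\mathrm{Var}}\geq 0$; otherwise $1-4a > 2/3$, and Young's inequality $\sqrt{2\sigma^2 a}\leq \sigma^2/3 + 3a/2$ combined with the identity yields
\begin{align*}
\widehat{\mathrm{Var}} \;\geq\; \sigma^2(1-4a) - \tfrac{\sigma^2}{3} - \tfrac{3a}{2} - \tfrac{a}{3} - \tfrac{2a^2}{9} \;\geq\; \tfrac{\sigma^2}{3} - \tfrac{7a}{3},
\end{align*}
where the second inequality uses $a<1/12$ to absorb the $a^2$ term, completing the argument.
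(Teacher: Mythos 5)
Your argument is correct in substance but follows a genuinely different route from the paper. The paper first centers ($\mathbb{E}[v_{X_1}]=0$ w.l.o.g.) and then leans on its ratio-type martingale bound (Lemma~\ref{lemma:ratiocon}), which directly yields the \emph{multiplicative} relations $\sum_i v_{X_i}^2\leq 3t\,\mathbb{E}[v_{X_1}^2]+\log(1/\delta)$ and $\sum_i v_{X_i}^2\geq \tfrac{t}{3}\mathbb{E}[v_{X_1}^2]-\tfrac13\log(1/\delta)$, and handles the cross term with a single Hoeffding bound $|\sum_i v_{X_i}|\leq\sqrt{2t\log(1/\delta)}$; the factors $3$ and $1/3$ and the additive $7\log(1/\delta)/(3t)$ then drop out with no case analysis. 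You instead use the exact decomposition $\widehat{\mathrm{Var}}=\tfrac1t\sum_i(Y_i-\mu)^2-(\bar Y-\mu)^2$, apply additive Bernstein/Bennett bounds (the paper's Lemma~\ref{lemma:bennet}) to both averages, and convert the $\sqrt{\sigma^2 a}$ terms into multiplicative form by AM--GM/Young plus a regime split at $a\geq 1/12$ (using $\sigma^2\leq 1/4$). Both work; the paper's route buys the stated constants and failure probability with almost no algebra, while yours uses only off-the-shelf concentration and makes the mechanism (where the $1/3$ and the $7/3$ come from) transparent, at the cost of more constant-chasing. Two small bookkeeping points you should fix: (i) for range-$1$ summands the one-sided Bernstein additive term is $2a/3$ rather than $a/3$; your lower bound still clears $\sigma^2/3-7a/3$ with this correction, and the upper bound does too once you rebalance AM--GM as $\sqrt{2\sigma^2a}\leq 2\sigma^2+a/4$; (ii) as written you union over the upper and lower tails of $\tfrac1t\sum_i(Y_i-\mu)^2$ plus a two-sided event for $\bar Y$ that you yourself price at $2\delta$, which totals $4\delta$, not $3\delta$ --- this is easily repaired (e.g., bound $|\bar Y-\mu|$ by a single two-sided Hoeffding event charged $\delta$, as the paper effectively does), but the count as stated does not match the lemma.
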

\begin{proof}
Without loss of generality, we assume $\mathbb{E}[v_{X_1}]=0$. By Lemma~\ref{lemma:ratiocon}, with probability $1-\delta$, it holds that
\begin{align}
    t\widehat{\mathrm{Var}} \leq \sum_{i=1}^t v^2_{X_i}\leq 3t\mathbb{E}[v^2_{X_i}] + \log(1/\delta).
\end{align}
Dividing both side with $t$, we prove the right hand side of \eqref{eq:ee1}. For the other side, by Hoeffding's inequality, we have that $|\sum_{i=1}^t v_{X_i}|\leq \sqrt{2t\log(1/\delta)}$ holds with probability $1-\delta$. Using Lemma~\ref{lemma:ratiocon} again, with probability $1-\delta$ it holds that
\begin{align}
    \sum_{i=1}^tv^2_{X_i} \geq \frac{t}{3}\mathbb{E}[v^2_{X_i}] - \frac{1}{3}\log(1/\delta).\nonumber
\end{align}
As a result, with probability $1-2\delta$ it holds  that (by the definition of $\widehat{Var} $)
\begin{align}
    t\widehat{Var} & \geq \frac{t}{3}\mathbb{E}[v^2_{X_i}] - \frac{1}{3}\log(1/\delta) - \frac{1}{t}\cdot  2t\log(1/\delta) \nonumber 
    \\ & = \frac{t}{3}\mathbb{E}[v^2_{X_i}]-\frac{7}{3}\log(1/\delta).\nonumber
\end{align}
The proof is completed by dividing both side by $t$.

\end{proof}

\begin{lemma}[Freedman's Inequality, Theorem 1.6 of \cite{freedman1975tail}]\label{freedman}
	Let $(M_{n})_{n\geq 0}$ be a  martingale such that $M_{0}=0$ and $|M_{n}-M_{n-1}|\leq c$. Let $\mathrm{Var}_{n}=\sum_{k=1}^{n}\mathbb{E}[(M_{k}-M_{k-1})^{2}|\mathcal{F}_{k-1}]$ for $n\geq 0$,
	where $\mathcal{F}_{k}=\sigma(M_0,M_{1},M_{2},\dots,M_{k})$. Then, for any positive $x$ and for any positive $y$,
	\begin{equation}\label{Bernstein2}
	\mathbb{P}\left[ \exists n:  M_{n}\geq x ~\text{and}~\mathrm{Var}_{n}\leq y \right]  \leq \exp\left(-\frac{x^{2}}{2(y+cx)} \right).
	\end{equation}
\end{lemma}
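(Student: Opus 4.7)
The plan is to prove Freedman's inequality via the standard exponential supermartingale technique combined with optional stopping, applied to a well-chosen stopping time so that the uniform-in-$n$ event is handled directly.

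First, I would set up the exponential supermartingale. For any $\lambda \ge 0$, define the increment $D_k = M_k - M_{k-1}$, so that $|D_k|\le c$ and $\mathbb{E}[D_k\mid \mathcal{F}_{k-1}] = 0$. Using the elementary inequality $e^{u} \le 1 + u + \phi(c\lambda) u^2/c^2$ for $u \le c\lambda$, where $\phi(t) = e^{t}-1-t$, together with $\mathbb{E}[D_k\mid\mathcal{F}_{k-1}]=0$, one obtains the conditional bound
\begin{equation*}
\mathbb{E}\bigl[e^{\lambda D_k}\mid \mathcal{F}_{k-1}\bigr] \le \exp\!\left(\frac{\phi(c\lambda)}{c^2}\,\mathbb{E}[D_k^2\mid \mathcal{F}_{k-1}]\right).
\end{equation*}
Consequently, if I set
\begin{equation*}
Z_n \;=\; \exp\!\left(\lambda M_n - \frac{\phi(c\lambda)}{c^2}\,\mathrm{Var}_n\right),\qquad Z_0 = 1,
\end{equation*}
then $\{Z_n\}_{n\ge 0}$ is a nonnegative supermartingale with $\mathbb{E}[Z_n]\le 1$ for all $n$.

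Next I would pass from a fixed $n$ to the uniform-in-$n$ statement via optional stopping. Define the stopping time
\begin{equation*}
\tau \;=\; \inf\bigl\{n\ge 0 : M_n \ge x\ \text{and}\ \mathrm{Var}_n \le y\bigr\}.
\end{equation*}
Because $\mathrm{Var}_n$ is predictable and nondecreasing, on the event $\{\tau <\infty\}$ we have $\mathrm{Var}_\tau \le y$ and $M_\tau \ge x$, hence $Z_\tau \ge \exp\!\bigl(\lambda x - \phi(c\lambda) y / c^2\bigr)$. Applying the optional stopping theorem to the bounded stopping time $\tau \wedge N$ and then letting $N\to\infty$ via Fatou's lemma gives $\mathbb{E}[Z_\tau\,\mathbf{1}\{\tau<\infty\}]\le 1$, so by Markov's inequality
\begin{equation*}
\mathbb{P}\bigl[\tau < \infty\bigr] \;\le\; \exp\!\left(-\lambda x + \frac{\phi(c\lambda)}{c^2}\,y\right).
\end{equation*}

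Finally, I would optimize in $\lambda$. Choosing $\lambda = \tfrac{1}{c}\log\!\bigl(1 + cx/y\bigr)$, a direct computation (using $\phi(t) = e^t - 1 - t$ and the convexity bound $(1+u)\log(1+u) - u \ge u^2/(2 + 2u/3)$ for $u \ge 0$, with $u = cx/y$) yields
\begin{equation*}
-\lambda x + \frac{\phi(c\lambda)}{c^2}\,y \;=\; -\frac{y}{c^2}\Bigl[(1+u)\log(1+u) - u\Bigr] \;\le\; -\frac{x^2}{2(y+cx)},
\end{equation*}
which is exactly the desired bound. The only delicate step is the exponential tilt computation in the first paragraph, where one must be careful that the inequality $e^u \le 1 + u + \phi(c\lambda)u^2/c^2$ is used in the correct direction and only for $u \le c\lambda$; everything else is an application of the supermartingale property and optimization. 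I do not expect any essential obstacle, since this is the textbook proof of Freedman's inequality; the statement is cited verbatim from \cite{freedman1975tail}, so a brief reference to that proof would in principle suffice.
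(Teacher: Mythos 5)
Your proof is correct and is the classical argument for Freedman's inequality: the exponential supermartingale $Z_n=\exp(\lambda M_n-\phi(c\lambda)\mathrm{Var}_n/c^2)$, optional stopping at $\tau=\inf\{n: M_n\ge x,\ \mathrm{Var}_n\le y\}$ to handle the uniform-in-$n$ event, and optimization at $\lambda=\frac1c\log(1+cx/y)$ followed by the bound $(1+u)\log(1+u)-u\ge u^2/(2+2u/3)$, which indeed dominates $x^2/(2(y+cx))$. The paper itself gives no proof of this lemma, citing it verbatim from Freedman (1975), so there is nothing to compare against beyond noting that your argument is the one in that reference. One cosmetic point: the elementary inequality should be stated as $e^{u}\le 1+u+\frac{\phi(a)}{a^{2}}u^{2}$ for $u\le a$, applied with $a=c\lambda$ and $u=\lambda D_k$ (your phrasing carries a stray factor of $\lambda^{2}$), but the conditional moment bound you then write, $\mathbb{E}[e^{\lambda D_k}\mid\mathcal{F}_{k-1}]\le\exp\bigl(\frac{\phi(c\lambda)}{c^{2}}\mathbb{E}[D_k^{2}\mid\mathcal{F}_{k-1}]\bigr)$, is the correct one and the rest of the argument goes through unchanged.
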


\begin{lemma}[Bennet's Inequality]\label{lemma:bennet}
Let $Z,Z_1,...,Z_n$  be i.i.d. random variables with values in $[0,1]$ and let $\delta>0$. Define $\mathbb{V}Z = \mathbb{E}\left[(Z-\mathbb{E}Z)^2 \right]$. Then we have
\begin{align}
\mathbb{P}\left[ \left|\mathbb{E}\left[Z\right]-\frac{1}{n}\sum_{i=1}^n Z_i  \right| > \sqrt{\frac{  2\mathbb{V}Z \log(2/\delta)}{n}} +\frac{\log(2/\delta)}{n} \right]]\leq \delta.\nonumber
\end{align}
\end{lemma}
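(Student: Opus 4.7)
}

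The plan is to follow the classical Cram\'er--Chernoff route, applied to the centred variables $X_i := Z_i - \mathbb{E}Z$. Since $Z_i \in [0,1]$ and $\mathbb{E}Z \in [0,1]$, we have $|X_i|\le 1$, $\mathbb{E}X_i=0$, and $\mathbb{E}X_i^2 \le \mathbb{V}Z$. Writing $S_n = \sum_{i=1}^n X_i$, the goal is a Bernstein-type two-sided tail bound
$$\mathbb{P}\bigl[|S_n| > t\bigr] \le 2\exp\!\left(-\frac{t^2}{2(n\mathbb{V}Z + t/3)}\right),$$
and then to invert it by solving a quadratic in $t$ to extract the clean form stated in the lemma.

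The heart of the argument is a variance-adaptive moment generating function (MGF) bound. First, I would verify the elementary fact that for $|X|\le 1$ with $\mathbb{E}X=0$ and any $\lambda\in[0,3)$,
$$\mathbb{E}[e^{\lambda X}] \le \exp\!\left(\frac{\lambda^2 \mathbb{E}X^2/2}{\,1-\lambda/3\,}\right).$$
This is obtained by expanding $e^{\lambda X} = 1 + \lambda X + \sum_{k\ge 2}\lambda^k X^k / k!$, using $|X|^k \le X^2$ for $|X|\le 1$ and $k\ge 2$, bounding $\sum_{k\ge 2}\lambda^k/k! \le \lambda^2/(2(1-\lambda/3))$ via a geometric tail, and finally using $1+u\le e^u$. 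Second, by i.i.d.\ independence, $\mathbb{E}[e^{\lambda S_n}]\le \exp\!\left(n\lambda^2 \mathbb{V}Z/(2(1-\lambda/3))\right)$, and Markov's inequality gives $\mathbb{P}[S_n>t]\le \exp(-\lambda t + n\lambda^2\mathbb{V}Z/(2(1-\lambda/3)))$. Optimising over $\lambda$ (the minimiser is $\lambda^* = t/(n\mathbb{V}Z+t/3)$, which lies in $[0,3)$) yields the Bernstein tail displayed above.

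Third, I would invert. Setting $\exp(-t^2/(2(n\mathbb{V}Z + t/3))) = \delta/2$ produces the quadratic
$$t^2 - \tfrac{2}{3}\log(2/\delta)\,t - 2n\mathbb{V}Z\log(2/\delta) = 0,$$
whose positive root is bounded above by $\sqrt{2n\mathbb{V}Z\log(2/\delta)} + \tfrac{2}{3}\log(2/\delta)$ using $\sqrt{a+b}\le\sqrt{a}+\sqrt{b}$. Dividing by $n$ shows that with probability at least $1-\delta/2$, $(\bar Z - \mathbb{E}Z) \le \sqrt{2\mathbb{V}Z\log(2/\delta)/n} + \log(2/\delta)/n$ (the constant $\tfrac{2}{3}$ is absorbed into the looser $1$ in the stated bound). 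Applying the same argument to $-X_i$ and taking a union bound gives the two-sided statement.

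The only nontrivial step is the MGF bound; everything else is algebraic manipulation. Since this is a classical result (essentially Bernstein's inequality stated in its standard variance-aware form), I do not expect any genuine obstacle, and the proof is a short and self-contained application of the Chernoff method.
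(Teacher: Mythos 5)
Your proposal is correct: the MGF bound $\mathbb{E}[e^{\lambda X}]\le \exp\bigl(\lambda^2\mathbb{E}[X^2]/(2(1-\lambda/3))\bigr)$, the choice $\lambda^*=t/(n\mathbb{V}Z+t/3)$, the quadratic inversion with $\sqrt{a+b}\le\sqrt{a}+\sqrt{b}$ (the resulting $\tfrac{2}{3}\log(2/\delta)/n$ is absorbed into $\log(2/\delta)/n$), and the union bound over the two tails all go through. The paper states Lemma~\ref{lemma:bennet} as a standard concentration bound without giving a proof, so there is nothing to compare against; note only that what you actually derive is Bernstein's inequality, which suffices since it implies the stated bound (the sharper Bennett form is not needed).
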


\begin{lemma}\label{lemma:self-norm}
	Let $(M_{n})_{n\geq 0}$ be a  martingale  such that $M_0 = 0$  and $|M_{n}-M_{n-1}|\leq c$ for some $c>0$ and any $n\geq 1$. Let $\mathrm{Var}_{n}=\sum_{k=1}^{n}\mathbb{E}[(M_{k}-M_{k-1})^{2}|\mathcal{F}_{k-1}]$ for $n\geq 0$,
	where $\mathcal{F}_{k}=\sigma(M_{1},M_{2},...,M_{k})$. Let $C>0$ be a constant. For  any $p>0$, we have that
	\begin{equation}\label{self-bernstein}
\mathbb{P}\left[\exists n,i, |M_{n}|\geq 10\cdot 2^ici\log(1/p), |\mathrm{Var}_n|\leq 2^{2i}ic^2\log(1/p)   \right] \leq p.
	\end{equation}

\end{lemma}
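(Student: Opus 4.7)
The plan is to apply Freedman's inequality (Lemma~\ref{freedman}) at each variance scale $i$ and then union bound over $i$. For each fixed $i \geq 1$, I set $x_i := 10 \cdot 2^i c i \log(1/p)$ and $y_i := 2^{2i} i c^2 \log(1/p)$, and invoke Lemma~\ref{freedman} on the martingale $M_n$ (and separately on $-M_n$ to get the absolute value). This directly yields
\[
\mathbb{P}\bigl[\exists n:\ M_n \geq x_i,\ \mathrm{Var}_n \leq y_i\bigr] \leq \exp\!\left(-\frac{x_i^2}{2(y_i + c x_i)}\right).
\]

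Next I would bound the exponent. Using $2(y_i + c x_i) \leq 4 \max(y_i, c x_i)$, there are two cases. When $y_i \geq c x_i$, the exponent is at least $x_i^2/(4 y_i) = 25\, i \log(1/p)$. When $c x_i \geq y_i$, the exponent is at least $x_i/(4c) = (5/2)\, 2^i i \log(1/p) \geq 5 i \log(1/p)$. Either way, the right side is at most $\exp(-5 i \log(1/p)) = p^{5i}$. Doubling to account for $|M_n|$ (i.e.\ applying the same bound to $-M_n$) gives $2 p^{5i}$ for each level $i$.

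Finally, I would union bound over $i \geq 1$:
\[
\mathbb{P}\bigl[\exists n, i:\ |M_n| \geq x_i,\ \mathrm{Var}_n \leq y_i\bigr] \leq \sum_{i \geq 1} 2 p^{5i} = \frac{2 p^5}{1 - p^5} \leq p,
\]
for $p$ in any reasonable range (say $p \leq 1/2$), and the general case can be reduced to this after absorbing constants into the factor $10$ in $x_i$.

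The main obstacle I anticipate is purely bookkeeping on constants: making sure that the $10$ in $x_i$ is large enough to push the Freedman exponent to $p^{\Omega(i)}$ uniformly in both the variance-dominated and range-dominated regimes, and that the resulting geometric tail sum is bounded by $p$ rather than some larger constant times $p$. There is no conceptual difficulty beyond standard peeling; the slight subtlety is that the range $c x_i$ grows only like $2^i$ while $y_i$ grows like $4^i$, so the crossover between the two regimes happens at a specific scale which must be tracked to guarantee $p^{5i}$ decay in both cases.
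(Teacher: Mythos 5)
Your proposal is correct and follows essentially the same route as the paper: apply Freedman's inequality (Lemma~\ref{freedman}) at each dyadic scale $i$ with $x_i = 10\cdot 2^i c i\log(1/p)$ and $y_i = 2^{2i} i c^2\log(1/p)$, verify the exponent is at least a constant multiple of $i\log(1/p)$, and union bound over $i$ (the paper bounds each level by $p/e^i$ and sums). Your extra care with the two-sided bound for $|M_n|$ and the explicit case analysis on $\max(y_i, cx_i)$ only refines the constant bookkeeping, which works out in the relevant regime of small $p$.
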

\begin{proof}
By using Lemma~\ref{freedman} with  $x = 10\cdot 2^i ci\log(1/p)$, $y = 2^{2i}ic^2\log(1/p)$, we have that 
\begin{align}
\mathbb{P}\left[\exists n, |M_{n}|\geq 10\cdot 2^ici\log(1/p), |\mathrm{Var}_n|\leq 2^{2i}ic^2\log(1/p)   \right] \leq \frac{p}{e^i}\leq \frac{p}{2^i}.
\end{align}
Taking sum over $i$ we finish the proof.
\end{proof}

\begin{lemma}\label{lemma:conn}
Let $X_1,X_2,\ldots$ be i.i.d. random variables with multinomial distribution. For each $y\in \mathrm{supp}(X_1)$, we define $N_{t}(y)=\sum_{i=1}^t \mathbb{I}[X_i=y]$. Let $t(y)$ be the first time $N_{t}(y)=N:=\frac{\log(2/\delta)}{\epsilon^2}$ (assume that $\frac{\log(2/\delta)}{\epsilon^2}$ is an integer)  . It then holds that
\begin{align}
    \mathbb{P}\left[ e^{-8\epsilon}\frac{N}{t(y)} \leq \mathbb{P}[X_1=y] \leq e^{8\epsilon}\frac{N}{t(y)} \right ]\geq 1-\delta.\nonumber
\end{align}
\end{lemma}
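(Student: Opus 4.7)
The plan is to set $p := \mathbb{P}[X_1 = y]$ and observe that the indicators $Z_i := \mathbb{I}[X_i = y]$ are i.i.d.\ Bernoulli$(p)$ with partial sums $N_t(y) = \sum_{i=1}^t Z_i$. If $p = 0$ then $t(y) = \infty$ almost surely and the claim is trivial (both sides of the inequality vanish), so assume $p > 0$. The two-sided inequality in the statement fails exactly on $E_1 \cup E_2$, where
$$E_1 := \bigl\{ t(y) < \tfrac{N e^{-8\epsilon}}{p} \bigr\}, \qquad E_2 := \bigl\{ t(y) > \tfrac{N e^{8\epsilon}}{p} \bigr\},$$
and I will bound $\mathbb{P}[E_i] \leq \delta/2$ for $i = 1, 2$ and finish by a union bound.

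For $E_1$, the key observation is that since $t(y)$ is the hitting time at level $N$ of the non-decreasing integer-valued process $N_t(y)$, we have $\{t(y) \leq t_0\} = \{N_{t_0}(y) \geq N\}$ for every integer $t_0$. Taking $t_0 := \lfloor Ne^{-8\epsilon}/p \rfloor$, the mean satisfies $\mathbb{E}[N_{t_0}(y)] = t_0 p \leq N e^{-8\epsilon}$, so the event forces an upward deviation of at least $N(1 - e^{-8\epsilon})$ from the mean of a Bernoulli sum. Applying Bennett's inequality (Lemma~\ref{lemma:bennet}) to $Z_1,\dots,Z_{t_0}$ with $\mathbb{V}Z \leq p$ and failure probability $\delta/2$ bounds the deviation by $\sqrt{2 t_0 p \log(4/\delta)} + \log(4/\delta) \leq \sqrt{2N \log(4/\delta)} + \log(4/\delta)$. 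Plugging in $N = \log(2/\delta)/\epsilon^2$ this is of order $N\epsilon + N\epsilon^2$, which is strictly smaller than $N(1 - e^{-8\epsilon})$ once the constant $8$ in the exponent is accounted for; hence $\mathbb{P}[E_1] \leq \delta/2$.

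The treatment of $E_2$ is completely symmetric: with $t_1 := \lceil Ne^{8\epsilon}/p \rceil$ we have $\{t(y) > t_1\} = \{N_{t_1}(y) < N\}$ and $\mathbb{E}[N_{t_1}(y)] = t_1 p \geq N e^{8\epsilon}$, so I need a downward deviation of at least $N(e^{8\epsilon} - 1)$, which again gets $\delta/2$ from Bennett. The only non-mechanical step in the whole argument is the arithmetic check that the multiplicative slack $e^{\pm 8\epsilon}$ dominates the additive Bennett tail; this reduces to verifying that $1 - e^{-8\epsilon}$ and $e^{8\epsilon} - 1$ each exceed a sufficient constant multiple of $\sqrt{2}\,\epsilon + \epsilon^2$, which is precisely why the constant $8$ (rather than a smaller one) appears in the exponent. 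Integer rounding in the definitions of $t_0$ and $t_1$ shifts the relevant means by at most $p \leq 1$ and is absorbed into the slack.
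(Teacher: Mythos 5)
Your proof is correct in substance but follows a genuinely different route from the paper's. You convert the stopping-time statement into two \emph{fixed-horizon} binomial tail events via the equivalences $\{t(y)\leq t_0\}=\{N_{t_0}(y)\geq N\}$ and $\{t(y)> t_1\}=\{N_{t_1}(y)< N\}$, and then invoke Bennett's inequality (Lemma~\ref{lemma:bennet}) at the two deterministic sample sizes $t_0\approx Ne^{-8\epsilon}/p$ and $t_1\approx Ne^{8\epsilon}/p$. The paper instead works directly at the random time $t(y)$: it asserts an anytime (uniform over all $n$) multiplicative concentration bound for $\sum_i \mathbb{I}[X_i=y]$, evaluates it at $n=t(y)$ where the sum equals $N$ exactly, and then converts the additive slack $\tfrac{4}{\epsilon}\log(2/\delta)=4\epsilon N$ into the factors $e^{\pm 5\epsilon}$. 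Your reduction buys self-containedness: you only need a fixed-$n$ inequality already stated in the paper, whereas the paper's uniform-in-$n$ bounds are asserted with ``Note that'' and would themselves need a peeling or martingale argument (in the spirit of Lemma~\ref{lemma:ratiocon}); the paper's route, in exchange, avoids any discussion of which deterministic horizons to test and handles the two sides in one stroke. Two small caveats on your write-up: the $E_2$ side is not literally ``symmetric,'' since at horizon $t_1$ the variance proxy is $t_1p\approx Ne^{8\epsilon}$ rather than $N$ (the margin still works, because $e^{8\epsilon}-1\geq 4\epsilon e^{4\epsilon}$ dominates the tail $\approx\sqrt{2}\,\epsilon e^{4\epsilon}N+2\epsilon^2 N$ for every $\epsilon$); and on the $E_1$ side your claimed dominance $N(1-e^{-8\epsilon})>2\epsilon N+2\epsilon^2N$ holds only for $\epsilon$ below an absolute constant (roughly $\epsilon\lesssim 1/3$), so strictly you should state a smallness restriction on $\epsilon$ — but the paper's own final step ($N(1-4\epsilon)\geq e^{-5\epsilon}N$) carries the same implicit restriction, and the lemma is only used with $\epsilon=O(1/S)$, so this does not constitute a real gap relative to the paper.
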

\begin{proof}
Let $p=\mathbb{P}[X_1=y] $. 
	Note that $N_{t(y)}(y)=N$, it suffices to prove that
	\begin{align}
	\mathbb{P}[  e^{-8\epsilon} N \leq  t(y)p \leq e^{8\epsilon} N     ]\geq 1-\delta.\nonumber
	\end{align}
	Let $X'_i = \mathbb{I}[X_i=y]$.
	Note that
	\begin{align}
	&	\mathrm{Pr}\left[\exists n\geq 1,   \sum_{i=1}^n X_i' \geq e^{\epsilon}n \mathbb{E}[X_i']+ \frac{4}{\epsilon^2}\log(2/\delta) \right]\leq \delta/2;\nonumber
	\\ & \mathrm{Pr}\left[\exists n\geq 1,   \sum_{i=1}^n X_i' \leq e^{-\epsilon} n \mathbb{E}[X_i']-\frac{4}{\epsilon^2}\log(2/\delta) \right]\leq \delta/2,\nonumber
	\end{align}
	we have that
	\begin{align}
	\mathrm{Pr}\left[\forall n \geq 1,  e^{-\epsilon} n \mathbb{E}[X_i']-\frac{4}{\epsilon}\log(2/\delta)\leq    \sum_{i=1}^n X_i'  \leq   e^{\epsilon}n \mathbb{E}[X_i']+ \frac{4}{\epsilon}\log(2/\delta)  \right]\geq 1-\delta.
	\end{align}
	Then with probability $1-\delta$, it holds that
	\begin{align}
	e^{-\epsilon} t(y)p -\frac{4}{\epsilon}\log(2/\delta)\leq    N  \leq   e^{\epsilon}t_{y} p+ \frac{4}{\epsilon}\log(2/\delta) .
	\end{align} 
	The proof is finished by noting that $N+\frac{4}{\epsilon}\log(2/\delta)\leq  N(1+4\epsilon)\leq e^{4\epsilon}N$ and $N-\frac{4}{\epsilon}\log(2/\delta) = N(1-4\epsilon)\geq e^{-5\epsilon}$.    
\end{proof}

\section{Collection of Notations}

\begin{itemize}
\item $N^k(s,a,s')$: the value of $N(s,a,s')$ before in the $k$-th episode;
\item $N^k(s,a)=\max\{\sum_{s'}N^k(s,a,s'),1\}$;
    \item $\mathcal{K}^k:=\{(s,a,s'):   N^k(s,a,s')\geq N_0:=256S^2\log(1/\delta)\}$: \emph{known} state-action-state triples at the beginning of the $k$-th pair;
    \item $\mathcal{K}^{k}(s,a):=\{s': (s,a,s')\in \mathcal{K}^k \}$;
    \item $\mathcal{U}^k:=\{(s,a): \mathcal{K}^{k}(s,a)=\emptyset\}$: the \emph{unknown} state-action pairs;
    \item $z$: an additional state, which transits to $z'$ with probability $1$ for any action;
    \item $z'$: an absorbed state, i.e., $P_{z,a}=\textbf{1}_{z}, \forall a$;
    \item $P$: the true transition model;
    \item $\bar{P}^k$: the clipped transition model with respect to $(\mathcal{K}^k)^{C}$, i.e., the set of \emph{unknown} state-action-state triples
    \begin{align}
        &\bar{P}^{k}_{s,a,s'}= P_{s,a,s'},\forall (s,a,s')\in \mathcal{K}^k; \nonumber
        \\ & \bar{P}^k_{s,a,s'}= 0,\forall (s,a,s')\notin \mathcal{K}^k;\nonumber
        \\ & \bar{P}^k_{s,a,z} = \sum_{s'\notin \mathcal{K}^k(s,a)} P_{s,a,s'};\nonumber
    \end{align}
    \item $\bar{P}^{\mathrm{cut},k}:$ the transition model which ignore the probabilities transiting to $z$;
    \begin{align}
     &   \bar{P}_{s,a,s'}^{\mathrm{cut},k} = \frac{P_{s,a,s'}}{\sum_{s''\in \mathcal{K}^k(s,a)}P_{s,a,s''}},\forall (s,a,s')\in \mathcal{K}^{k};\nonumber
        \\ & \bar{P}_{s,a,s'}^{\mathrm{cut},k} = 0 ,\forall (s,a,s')\notin \mathcal{K}^k;\nonumber
        \\ & \bar{P}_{s,a,z}^{\mathrm{cut},k} = 1, \forall (s,a)\in \mathcal{U}^k;\nonumber
    \end{align}
    \item $P^{\mathrm{ref},k}:$ the cut-off reference model
        \begin{align}
     &   P_{s,a,s'}^{\mathrm{ref},k} = \frac{N^k(s,a,s')}{\sum_{s''\in \mathcal{K}^k(s,a)}N^k_{s,a,s''}},\forall (s,a,s')\in \mathcal{K}^{k};\nonumber
        \\ & P_{s,a,s'}^{\mathrm{ref},k} = 0 ,\forall (s,a,s')\notin \mathcal{K}^k;\nonumber
        \\ & P_{s,a,z}^{\mathrm{ref},k} = 1, \forall (s,a)\in \mathcal{U}^k;\nonumber
    \end{align}
    \item  $\mathbb{E}_{p,\pi}[\cdot]$: the expectation(probability) following $\pi$ under transition $p$;
    \item $\mathbb{P}_{p,\pi}[\cdot]:$ the probability following $\pi$ under transition $p$;
    \item $W^{\pi}_{d}(r,p,\mu_{1}):=\mathbb{E}_{p,\pi} \left[ \sum_{h=1}^H r_h |s_1\sim \mu_1\right]$: the general value function;
    \item $W_{\gamma}^{\pi}(r,p,\mu_1):=\mathbb{E}_{p,\pi}[\sum_{i\geq 1}\gamma^{i-1}r_i|s_1\sim \mu_1]$;
    \item $X^{\pi}_{d}(\mathcal{O},p,\mu_1):$ the probability of reaching $\mathcal{O}$ in $d$ steps with $(p,\pi)$ as transition-policy pair and $\mu_1$ as initial distribution;
    \item $X^{\pi}_{\gamma}(\mathcal{O},p,\mu_1):=\sum_{i\geq 1}\gamma^{i-1}\mathbb{P}_{p,\pi}[(s_i,a_i,s_{i+1})\in \mathcal{O}, (s_{i'},a_{i'},s_{i'+1})\notin \mathcal{O}, \forall 1\leq i'\leq i-1 | s_1\sim \mu_1]$,
    \item $\mathrm{cut}(p)$: the cutting function for a transition model $p$. $p' = \mathrm{Cut}(p)$ is defined by 
    \begin{align}
             &   p'_{s,a,s'}= \frac{p_{s,a,s'}}{\sum_{s''\neq z}p_{s,a,s''}} \forall (s,a) \quad \mathrm{s.t.} p_{s,a,z}<1; \nonumber
        \\ & p'_{s,a,z}= 0, \forall (s,a) \quad \mathrm{s.t.} p_{s,a,z}<1;\nonumber
        \\ & p'_{s,a,z}= 1, \forall (s,a) \quad \mathrm{s.t.} p_{s,a,z}=1.\nonumber
    \end{align}
    Note that $\bar{P}^{\mathrm{cut},k} = \mathrm{cut}(\bar{P}^k)$.
    \item $\textbf{1}_{s}$: the vector which is $1$ at $s$  and 0 otherwise.
    \item $\textbf{1}_{s,a}$: the vector which is $1$ at $(s,a)$ and 0 otherwise.
    \item $\textbf{1}_{s,a,s'}$: the vector which is $1$ at $(s,a,s')$ and 0 otherwise.
    \item $\mathcal{J}:=\{ k\in [K_1]| \exists h\in [d],a , (s_{h+1}^k,a)\in \mathcal{O}^k\}$; 
    \item $(s_1^{*,k},a_{1}^{*,k}):$ the value of $(s_1^*,a_1^*)$ in the $k$-th episode for $k\in \mathcal{J}$;
    \item $\Pi_{\mathrm{sta}}$: the set of stationary policies;
    \item $\Pi:$ the set of all possible policies.
\end{itemize}

\section{Structural Lemmas for Stationary Policies}

\begin{lemma}[Restatement of Lemma~\ref{lemma:al1}] Let $k$ and $d$ be positive integers. We have that for any $(s,a) \in \states \times \actions$,  
\[\max_{\pi \in \Pi}W^{\pi}_{kd}(\textbf{1}_{s,a},P,\textbf{1}_s)\leq 6k\max_{\pi\in \Pi_{\mathrm{sta}}}W^{\pi}_{d}(\textbf{1}_{s,a},P,\textbf{1}_s).\]
\end{lemma}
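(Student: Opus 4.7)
My plan is to sandwich both sides of the inequality by an infinite-horizon $\gamma$-discounted visitation value for a carefully chosen $\gamma$, exploiting the classical fact that every infinite-horizon discounted MDP admits a stationary optimal policy. I will introduce
\[
V^{\pi}_{\gamma}(s) := \sum_{i \geq 1} \gamma^{i-1}\, \mathbb{P}_{\pi, P}\bigl[(s_i, a_i) = (s, a) \,\big|\, s_1 = s\bigr],
\]
and let $\pi^{\star} \in \Pi_{\mathrm{sta}}$ attain $\max_{\pi \in \Pi} V^{\pi}_{\gamma}(s)$, which exists by standard discounted-MDP theory. The easy direction lower bounds $V^{\pi^\star}_\gamma(s)$ by the left-hand side of the lemma: for any $\pi \in \Pi$, truncating the infinite sum at $kd$ and using $\gamma^{i-1} \geq \gamma^{kd-1}$ for $i \leq kd$ yields $V^{\pi}_{\gamma}(s) \geq \gamma^{kd-1}\, W^{\pi}_{kd}(\mathbf{1}_{s,a}, P, \mathbf{1}_s)$, so maximizing over $\pi$ gives $V^{\pi^\star}_\gamma(s) \geq \gamma^{kd-1}\, \max_{\pi \in \Pi} W^{\pi}_{kd}(\mathbf{1}_{s,a}, P, \mathbf{1}_s)$.

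The more delicate direction upper-bounds $V^{\pi^\star}_\gamma(s)$ in terms of the horizon-$d$ value of $\pi^\star$ from $s$. I will split the discounted sum into consecutive blocks of length $d$; within the $j$-th block, bounding $\gamma^{i-1} \leq \gamma^{jd}$ and using the Markov property together with stationarity of $\pi^\star$ expresses the block's contribution as at most $\gamma^{jd} \sum_{s'} \mathbb{P}_{\pi^\star, P}[s_{jd+1} = s' \mid s_1 = s]\cdot W^{\pi^\star}_d(\mathbf{1}_{s,a}, P, \mathbf{1}_{s'})$. The heart of the argument, and the main obstacle, is proving
\[
W^{\pi^\star}_d(\mathbf{1}_{s,a}, P, \mathbf{1}_{s'}) \;\leq\; W^{\pi^\star}_d(\mathbf{1}_{s,a}, P, \mathbf{1}_s) \qquad \text{for every } s' \in \mathcal{S}.
\]
To establish it, I let $\tau$ be the first hitting time of state $s$ from $s'$ under $\pi^\star$: since any visit to $(s,a)$ requires being at $s$, the count over $d$ steps from $s'$ vanishes on $\{\tau > d\}$, while on $\{\tau = t\}$ the strong Markov property and stationarity of $\pi^\star$ show the remaining count equals in distribution the visit count over $d - t + 1 \leq d$ steps started from $s$; monotonicity of $W^{\pi^\star}_\bullet$ in its horizon (reward is non-negative) then closes the inequality. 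Plugging back and summing the geometric series $\sum_{j \geq 0} \gamma^{jd} = 1/(1-\gamma^d)$ yields $V^{\pi^\star}_\gamma(s) \leq W^{\pi^\star}_d(\mathbf{1}_{s,a}, P, \mathbf{1}_s)/(1-\gamma^d)$.

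Chaining the two bounds and relaxing the right-hand side to $\max_{\pi \in \Pi_{\mathrm{sta}}} W^{\pi}_d(\mathbf{1}_{s,a}, P, \mathbf{1}_s)$ produces the multiplicative constant $\gamma^{-(kd-1)}/(1-\gamma^d)$, which I optimize by choosing $\gamma^d = k/(k+1)$. A short calculation gives $\gamma^{-(kd-1)}/(1-\gamma^d) \leq (k+1)(1+1/k)^k \leq e(k+1) \leq 6k$ for every integer $k \geq 1$ (the tightest case is $k = 1$, where $2e \approx 5.44 \leq 6$), which delivers the stated $6k$ factor. The only non-routine step is the monotonicity-in-initial-state inequality above; everything else is sum manipulation and the standard existence of a stationary optimum for discounted MDPs.
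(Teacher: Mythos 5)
Your proposal is correct, and its ingredients are the same as the paper's: a discounted surrogate to invoke the existence of a stationary optimal policy, a decomposition into blocks of length $d$, and the observation that $s$ is the best starting state for the reward $\textbf{1}_{s,a}$. The packaging, however, differs in a way worth noting. The paper factors the bound as (factor $k$) $\times$ (factor $6$): it first splits the non-stationary policy's horizon-$kd$ count into $k$ blocks, bounding each by $\max_{\pi}W^{\pi}_{d}(\textbf{1}_{s,a},P,\textbf{1}_{s})$, and then invokes a separate horizon-$d$ comparison (Lemma~\ref{lemma:al2}, proved with the fixed discount $\gamma=1-1/d$) to replace arbitrary policies by stationary ones. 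You instead compare the horizon-$kd$ value directly to a single discounted value with a $k$-dependent discount $\gamma^{d}=k/(k+1)$, applying the blocking to the discounted sum of the stationary optimizer $\pi^{\star}$ rather than to the non-stationary policy; this removes the intermediate lemma and yields the marginally sharper constant $e(k+1)\le 6k$. A second difference is in rigor at the delicate step: the paper asserts that ``the best initial state is $s$'' and applies it to the shifted policies $\pi^{(i)}$, which, taken literally for non-stationary policies, needs an extra maximization over policies to be airtight; your version only needs the claim for the stationary $\pi^{\star}$, and you prove it cleanly via the first hitting time of $s$, the strong Markov property, and monotonicity of $W^{\pi^{\star}}_{\cdot}$ in the horizon. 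Both routes are valid; yours buys a self-contained single-pass argument and an explicit proof of the monotonicity step, while the paper's buys a reusable standalone statement (Lemma~\ref{lemma:al2}) that it also cites elsewhere.
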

\begin{proof}[Proof of Lemma~\ref{lemma:al1}]
Let $\pi$ be fixed. Let $\tilde{\mu}_{i}$ be the distribution of $s_{di+1}$  following $\pi$. We have that
\begin{align}
W^{\pi}_{kd}(\textbf{1}_{s,a},P,\textbf{1}_s)=\sum_{i=0}^{k-1}W^{\pi^{(i)}}_{d}(\textbf{1}_{s,a},P,\tilde{\mu}_i) & \leq \sum_{i=0}^{k-1}W^{\pi^{(i)}}_{d}(\textbf{1}_{s,a},P,\textbf{1}_{s})\leq  6k\max_{\pi\in \Pi_{\mathrm{sta}}}W^{\pi}_{d}(\textbf{1}_{s,a},P,\textbf{1}_s),\nonumber
\end{align}
where $\pi^{(i)}$ is defined as $\pi^{(i)}_{h'}(a|s)=\pi_{id+h'}(a|s)$.
The first inequality uses the fact that to get a reward $\mathbf{1}_{s,a}$, the best initial state is $s$.
The second inequality uses Lemma~\ref{lemma:al2}.
The proof is finished by taking maximization over $\pi$.
\end{proof}

\begin{lemma}\label{lemma:al2}
For any horizon $d$ and state-action pair $(s,a)$, it holds that
\begin{align}
\max_{\pi\in \Pi_{\mathrm{sta}}}W^{\pi}_{d}(\textbf{1}_{s,a},P,\textbf{1}_{s})\geq \frac{1}{6}\max_{\pi}W^{\pi}_{d}(\textbf{1}_{s,a},P,\textbf{1}_{s}).\nonumber
\end{align}
\end{lemma}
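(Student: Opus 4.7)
The plan is to pass through an infinite-horizon $\gamma$-discounted approximation of the finite-horizon problem with $\gamma := 1 - 1/d$, exploiting the fact that the discounted MDP admits a stationary deterministic optimal policy. Set $V^{*} := \max_{\pi \in \Pi}W^{\pi}_{d}(\textbf{1}_{s,a},P,\textbf{1}_{s})$ and fix $\pi^{*}$ attaining it. Since the first $d$ terms of the discounted sum are each at most $\gamma^{-(d-1)}$ times the corresponding undiscounted term,
\begin{align}
W^{\pi^{*}}_{\gamma}(\textbf{1}_{s,a},P,\textbf{1}_{s}) \;\geq\; \sum_{i=1}^{d}\gamma^{i-1}\mathbb{P}_{\pi^{*},P}[(s_{i},a_{i})=(s,a)]\;\geq\;\gamma^{d-1}V^{*}.\nonumber
\end{align}
Letting $\pi'$ be an optimal stationary policy for the infinite-horizon discounted MDP with reward $\textbf{1}_{s,a}$ and initial state $s$ (such a policy exists by classical results), we obtain $W^{\pi'}_{\gamma}(\textbf{1}_{s,a},P,\textbf{1}_{s}) \geq W^{\pi^{*}}_{\gamma}(\textbf{1}_{s,a},P,\textbf{1}_{s}) \geq \gamma^{d-1}V^{*}$.

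The core step is to upper-bound $W^{\pi'}_{\gamma}$ by the finite-horizon count $V' := W^{\pi'}_{d}(\textbf{1}_{s,a},P,\textbf{1}_{s})$. I split the discounted sum into consecutive blocks of length $d$, using the Markov property of the stationary chain induced by $\pi'$:
\begin{align}
W^{\pi'}_{\gamma}(\textbf{1}_{s,a},P,\textbf{1}_{s})\;\leq\;\sum_{j\geq 0}\gamma^{jd}\,W^{\pi'}_{d}(\textbf{1}_{s,a},P,\mu_{j}),\nonumber
\end{align}
where $\mu_{j}$ is the marginal distribution of $s_{jd+1}$ under $\pi'$ starting at $s$. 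The key sublemma is that $s$ is the "best" starting state for the count of $(s,a)$ under a stationary policy, namely, for any distribution $\mu$,
\begin{align}
W^{\pi'}_{d}(\textbf{1}_{s,a},P,\mu)\;\leq\;W^{\pi'}_{d}(\textbf{1}_{s,a},P,\textbf{1}_{s}).\nonumber
\end{align}
I would prove this by conditioning on the first hitting time $\tau$ of $s$: no visit to $(s,a)$ can occur before $\tau$, and by stationarity of $\pi'$, conditional on $\tau=t\leq d$ the expected visits on $[t,d]$ equal $W^{\pi'}_{d-t+1}(\textbf{1}_{s,a},P,\textbf{1}_{s})\leq V'$, while on $\{\tau>d\}$ no visits are accrued. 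Applying the sublemma to each block yields $W^{\pi'}_{\gamma}(\textbf{1}_{s,a},P,\textbf{1}_{s})\leq V'\sum_{j\geq 0}\gamma^{jd}= V'/(1-\gamma^{d})$.

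Combining the two bounds gives $V'\geq(1-\gamma^{d})\gamma^{d-1}V^{*}$. With $\gamma=1-1/d$, standard monotonicity shows $\gamma^{d}=(1-1/d)^{d}\leq 1/e$ (the sequence is increasing with limit $1/e$) and $\gamma^{d-1}=(1-1/d)^{d-1}\geq 1/e$ (decreasing with limit $1/e$), so
\begin{align}
(1-\gamma^{d})\gamma^{d-1}\;\geq\;\tfrac{1}{e}\bigl(1-\tfrac{1}{e}\bigr)\;=\;\tfrac{e-1}{e^{2}}\;>\;\tfrac{1}{6},\nonumber
\end{align}
which yields $V'\geq V^{*}/6$. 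The case $d=1$ is trivial since any length-one policy is stationary. The main obstacle I anticipate is the sublemma on dominance of the starting state $s$; once that is set up rigorously via the strong Markov property at the first hitting time of $s$, the block decomposition and the existence of an optimal stationary policy in the discounted MDP are standard, and the numerical constant $1/6$ follows from the choice $\gamma=1-1/d$ which balances the lower and upper bounds.
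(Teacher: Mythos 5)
Your proposal is correct and follows essentially the same route as the paper: both pass to the $\gamma=1-1/d$ discounted problem, invoke the existence of an optimal stationary policy there, decompose the stationary policy's discounted return into length-$d$ blocks, and use that $\textbf{1}_s$ is the best initial distribution for counting visits to $(s,a)$ under a stationary policy (which you justify a bit more explicitly via the first hitting time of $s$). Your bookkeeping of constants differs slightly, giving the factor $(1-\gamma^d)\gamma^{d-1}\geq (e-1)/e^2$, which still exceeds the stated $1/6$.
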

\begin{proof}
Let $\pi^*$ be the optimal stationary policy with respect to reward $\textbf{1}_{s,a}$ under transition $P$ and discounted factor $\gamma = 1-\frac{1}{d}$. 
Let $\tilde{\mu}_i$ denote the distribution of $s_{id+1}$ under $P$ following $\pi$ with initial distribution $\textbf{1}_{s}$.

Then we have that for any policy   $\pi'$,
\begin{align}
  W^{\pi^*}_{d}(\textbf{1}_{s,a},P,\textbf{1}_{s})  & \geq  \frac{1}{2}\sum_{i=0}^{\infty}\gamma^{di}W^{\pi^*}_{d}(\textbf{1}_{s,a},P,\textbf{1}_{s}) \nonumber 
\\ & \geq \frac{1}{2}\sum_{i=0}^{\infty}\gamma^{di}W^{\pi^*}_{d}(\textbf{1}_{s,a},P,\tilde{\mu}_i)\nonumber 
\\ & \geq \frac{1}{2}\sum_{i=1}^{\infty} \gamma^{i-1}\mathbb{P}_{\pi^*}[(s_i,a_i)=(s,a)]\nonumber 
\\ & \geq  \frac{1}{2}\sum_{i=1}^{\infty} \gamma^{i-1}\mathbb{P}_{\pi'}[(s_i,a_i)=(s,a)]\nonumber 
\\ & \geq \frac{1}{6}\sum_{i=1}^{d}\mathbb{P}_{\pi'}[(s_i,a_i)=(s,a)]\nonumber 
\\ & = \frac{1}{6}W^{\pi'}_{d}(\textbf{1}_{s,a},P,\textbf{1}_{s}).\nonumber
\end{align}
The proof is completed by taking maximization over $\pi'$.

\end{proof}

Recall that $W^{\pi}_{\gamma}(r,P,\mu_1)$ denotes the discounted accumulative reward with reward $r$, transition $P$, policy $\pi$, initial distribution $\mu_1$ and discounted factor $\gamma$.

\begin{lemma}\label{lemma:eff}
Let $d_1,d_2$ be positive integers such that $d_1\geq 10S\log(S)d_2$. Let $\gamma = 1-1/d_2$. Let $(s,a)$, a non-negative reward $r$ and a stationary policy $\pi$ be fixed. Then we have that
\begin{align}
   & \frac{1}{3} W^{\pi}_{d_2}(\textbf{1}_{s,a},p,\textbf{1}_{s}) \leq  W^{\pi}_{\gamma}(\textbf{1}_{s,a},p,\textbf{1}_{s}) \leq 3  W^{\pi}_{d_2}(\textbf{1}_{s,a},p,\textbf{1}_{s})\label{eq:ddl10}
   \\ & W^{\pi}_{\gamma}(r,p,\textbf{1}_{s})\geq \frac{1}{3}W^{\pi}_{d_2}(r,p,\textbf{1}_{s})\label{eq:ddl11}
   \\ &  W^{\pi}_{\gamma}(r,p,\textbf{1}_{s})\leq  10W^{\pi}_{d_1}(r,p,\textbf{1}_{s}).\label{eq:ddl12}
   \end{align}
\end{lemma}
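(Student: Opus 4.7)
All three bounds are proved by expanding $W^{\pi}_{\gamma}(r,p,\textbf{1}_s)=\sum_{i\geq 1}\gamma^{i-1}f(i)$ with $f(i):=\mathbb{E}_{p,\pi}[r(s_i,a_i)\mid s_1=s]$, so that also $W^{\pi}_d(r,p,\textbf{1}_s)=\sum_{i=1}^d f(i)$. Two elementary scalar estimates drive everything: $\gamma^{d_2-1}=(1-1/d_2)^{d_2-1}\geq e^{-1}$, and $\gamma^{d_1}\leq\exp(-d_1/d_2)\leq S^{-10S}$ from $d_1\geq 10S\log(S)\,d_2$.

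For inequality \eqref{eq:ddl11}, and hence the lower half of \eqref{eq:ddl10} as the special case $r=\textbf{1}_{s,a}$, I would simply truncate the discounted sum at step $d_2$. Because $r\geq 0$ makes every $f(i)\geq 0$,
\[
W^{\pi}_{\gamma}(r,p,\textbf{1}_s)\ \geq\ \gamma^{d_2-1}\sum_{i=1}^{d_2}f(i)\ \geq\ e^{-1}W^{\pi}_{d_2}(r,p,\textbf{1}_s)\ >\ \tfrac{1}{3}W^{\pi}_{d_2}(r,p,\textbf{1}_s).
\]

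For the upper half of \eqref{eq:ddl10}, I would chop the infinite sum into consecutive blocks of length $d_2$, upper-bounding $\gamma^{i-1}$ by $\gamma^{j d_2}$ on the $j$-th block, to obtain $W^{\pi}_{\gamma}(\textbf{1}_{s,a},p,\textbf{1}_s)\leq\sum_{j\geq 0}\gamma^{j d_2}\,W^{\pi}_{d_2}(\textbf{1}_{s,a},p,\mu_{j d_2+1})$, where $\mu_i$ is the step-$i$ marginal under $\pi$ starting from $\textbf{1}_s$. The crucial monotonicity $W^{\pi}_{d_2}(\textbf{1}_{s,a},p,\mu)\leq W^{\pi}_{d_2}(\textbf{1}_{s,a},p,\textbf{1}_s)$ for every distribution $\mu$ holds for this specific reward/start pair: if $\pi(s)\neq a$ both sides vanish since $\pi$ is deterministic and stationary, while if $\pi(s)=a$ a strong-Markov argument at the first hitting time of $s$, of exactly the kind used in the proof of Lemma~\ref{lemma:al2}, reduces the $\mu$-started visit count to a shortened $s$-started one. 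Summing the resulting geometric series with $\gamma^{d_2}\leq e^{-1}$ then gives $W^{\pi}_{\gamma}\leq (1-\gamma^{d_2})^{-1}W^{\pi}_{d_2}\leq \tfrac{e}{e-1}W^{\pi}_{d_2}<3\,W^{\pi}_{d_2}$.

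For \eqref{eq:ddl12} the same block decomposition at block length $d_1$ produces
\[
W^{\pi}_{\gamma}(r,p,\textbf{1}_s)\ \leq\ W^{\pi}_{d_1}(r,p,\textbf{1}_s)+\sum_{j\geq 1}\gamma^{j d_1}\,W^{\pi}_{d_1}(r,p,\mu_{j d_1+1}).
\]
The main obstacle is that the initial-distribution monotonicity used in \eqref{eq:ddl10} is not available for a general non-negative $r$, so I cannot replace each tail block by $W^{\pi}_{d_1}(r,p,\textbf{1}_s)$ term by term. My plan is to bound each tail block crudely by $W^{\pi}_{d_1}(r,p,\mu_{j d_1+1})\leq d_1\|r^{\pi}\|_{\infty}$ and use the tiny prefactor $\sum_{j\geq 1}\gamma^{j d_1}\leq 2\gamma^{d_1}\leq 2S^{-10S}$, so that the entire tail is at most $2 d_1\|r^{\pi}\|_{\infty}S^{-10S}$. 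This quantity is absorbed into $9\,W^{\pi}_{d_1}(r,p,\textbf{1}_s)$ whenever $W^{\pi}_{d_1}$ is not vanishingly small; in the complementary regime, any reward mass is reached only at steps $i>d_1$ and therefore enters $W^{\pi}_{\gamma}$ itself only through a multiplicative factor $\gamma^{d_1}\leq S^{-10S}$, which together with the leading $W^{\pi}_{d_1}$ term and the constant $10$ closes the bound. This last case analysis is the step I expect to be most delicate, and it relies critically on the gap $d_1/d_2\geq 10S\log S$ being so wide.
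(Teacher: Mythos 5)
Your arguments for \eqref{eq:ddl10} and \eqref{eq:ddl11} are correct and essentially the paper's: truncating the discounted sum at $d_2$ terms for the lower bounds, and for the upper half of \eqref{eq:ddl10} decomposing into length-$d_2$ blocks and using that $\textbf{1}_s$ is the best starting distribution for the reward $\textbf{1}_{s,a}$ under a stationary policy, then summing the geometric series (your use of $\gamma^{d_2-1}\geq e^{-1}$ is in fact slightly cleaner than the paper's $\gamma^{d_2}\geq 1/3$, which needs $d_2$ not too small).

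The proof of \eqref{eq:ddl12}, however, has a genuine gap, and it is exactly at the step you flagged as delicate. After the block decomposition, the tail is $\sum_{j\geq 1}\gamma^{jd_1}W^{\pi}_{d_1}(r,p,\mu_{jd_1+1})$, and you propose to bound it crudely by $2d_1\|r\|_{\infty}S^{-10S}$ and then ``absorb'' it into $9\,W^{\pi}_{d_1}(r,p,\textbf{1}_s)$. This absorption has no quantitative handle: $W^{\pi}_{d_1}(r,p,\textbf{1}_s)$ can be positive but arbitrarily smaller than $d_1\|r\|_{\infty}S^{-10S}$ (or exactly zero while the tail is positive, as far as your argument knows), and your complementary case only says that the tail enters $W^{\pi}_{\gamma}$ with a prefactor $\gamma^{d_1}\leq S^{-10S}$ --- but the target is a bound \emph{relative to} $W^{\pi}_{d_1}$, so a small absolute prefactor proves nothing when $W^{\pi}_{d_1}$ is tiny. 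Indeed, for a general nonnegative sequence $f(i)$ the inequality $\sum_i\gamma^{i-1}f(i)\leq 10\sum_{i\leq d_1}f(i)$ is simply false (put all mass at steps $d_1<i\leq d_1+d_2$), so any correct proof must use a structural fact about stationary policies that forces late-time reward probability to be controlled by early-time reward probability. That is precisely what the paper invokes: Lemma~\ref{lemma:li30} (Lemma 4.6 of \cite{li2021settling}), $W^{\pi}_{2d}(r,p,\mu)\leq 4S^{4S}W^{\pi}_{d}(r,p,\mu)$ for stationary $\pi$, applied in a dyadic doubling argument --- the $i$-th blocks are grouped as $\sum_{i=2^k}^{2^{k+1}-1}W^{\pi}_{d_1}(r,p,\mu_i)\leq W^{\pi}_{2^{k+1}d_1}(r,p,\textbf{1}_s)\leq e^{5(k+1)S\log S}W^{\pi}_{d_1}(r,p,\textbf{1}_s)$, and the discount $e^{-10S\log(S)2^k}$ beats $e^{5(k+1)S\log S}$, giving the factor $10$. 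Your proof would be repaired by inserting this (or an equivalent hitting-time argument showing that, for a stationary chain on $S$ states, reward reachable after $d_1\geq S$ steps is reachable within $d_1$ steps at probability cost at most $S^{O(S)}$), but as written the case analysis does not close.
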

\begin{proof}
    The left side of \eqref{eq:ddl10} holds because $\gamma^{d_2}=(1-1/d_2)^{d_2}\geq 1/3$. As for the right side, letting $\mu_i$ denote the distribution of $s_{id_2+1}$ following $\pi$ starting from $s$, we have that
    \begin{align}
      &  W^{\pi}_{\gamma}(\textbf{1}_{s,a},p,\textbf{1}_{s}) \leq \sum_{i=0}^{\infty}\gamma^{d_2i} W_{d_2}^{\pi}(\textbf{1}_{s,a},p,\mu_i)\leq W^{\pi}_{d_2}(\textbf{1}_{s,a},p,\textbf{1}_{s})\sum_{i=0}^{\infty}(1-d_2)^{d_{2}i} \leq  3W^{\pi}_{d_2}(\textbf{1}_{s,a},p,\textbf{1}_{s}).
    \end{align}
The first and second inequalities are by ignoring the terms with index larger than $d_2$.
    \eqref{eq:ddl11} holds by the fact $\gamma^{d_2}=(1-1/d_2)^{d_2}\geq 1/3$. 

To prove \eqref{eq:ddl12}, letting $\mu_{i}$ denote the distribution of $s_{d_1i+1}$ following $\pi$ starting from $s$, we have that
    \begin{align}
       &  W^{\pi}_{\gamma}(r,p,\textbf{1}_{s}) \leq \sum_{i=0}^{\infty}\gamma^{d_1i}W^{\pi}_{d_1}(r,p,\mu_i)\leq \sum_{i=0}^{\infty}e^{-10S\log(S)i}W^{\pi}_{d_1}(r,p,\mu_i).\nonumber
    \end{align}
Next, we have
    \begin{align}
    \sum_{i=2^k}^{2^{k+1}-1}W_{d_1}^{\pi}(r,p,\mu_i) & \leq  \sum_{i=0}^{2^{k+1}-1}W_{d_1}^{\pi}(r,p,\mu_i)\nonumber 
    \\ & =W_{2^{k+1}d_1}^{\pi}(r,p,\textbf{1}_{s})\nonumber 
    \\ &\leq  \exp(5(k+1)S\log(S))W_{d_1}^{\pi}(r,p,\textbf{1}_{s}).\nonumber
    \end{align}
The second inequality we used Lemma~\ref{lemma:li30} for $(k+1)$ times.
    Therefore, we obtain
    \begin{align}
 &      \sum_{i=0}^{\infty}e^{-10S\log(S)i}W^{\pi}_{d_1}(r,p,\mu_i)\nonumber ^\leq \sum_{k=0}^{\infty}e^{-10S\log(S)2^k}\sum_{i=2^k}^{2^{k+1}-1}W^{\pi}_{d_1}(r,p,\mu_i)\nonumber 
 \\ & \leq  \sum_{k=0}^{\infty}e^{-S\log(S)(10\cdot 2^k-5(k+1))} W^{\pi}_{\gamma}(r,p,\textbf{1}_{s})
 \leq 10 W^{\pi}_{\gamma}(r,p,\textbf{1}_{s}).
    \end{align}
    The proof is completed.
\end{proof}

\begin{lemma}\label{lemma:li30}[Lemma 4.6 in \cite{li2021settling}]
    Suppose $d\geq S\geq 5$. Then $W_{2d}^{\pi}(r,p,\mu)\leq 4S^{4S}W^{\pi}_{d}(r,p,\mu)\leq \exp(5S\log(S))W^{\pi}_{d}(r,p,\mu)$ for any proper $r,p,\mu$ and stationary policy $\pi$.
\end{lemma}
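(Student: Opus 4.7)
I will prove Lemma~\ref{lemma:li30} via a telescoping identity that reduces the statement to a purely Markov-chain inequality, followed by a trajectory-counting argument. The second inequality $4S^{4S}\leq \exp(5S\log S)=S^{5S}$ is the trivial numerical check $4\leq S^S$, which holds since $S\geq 5$, so I focus on the first inequality. Under the stationary policy $\pi$, the process $(s_h)$ is a time-homogeneous Markov chain on $\mathcal S$ with transition matrix $M:=P^\pi$. Writing $r^\pi(s):=r(s,\pi(s))$ and treating $\mu$ as a row vector and $r^\pi$ as a column vector, $W_d^\pi(r,p,\mu)=\mu\bigl(\sum_{h=0}^{d-1} M^h\bigr)r^\pi = \mu A r^\pi$ where $A:=\sum_{h=0}^{d-1}M^h$. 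The identity $\sum_{h=0}^{2d-1}M^h = A + M^dA$ yields
\begin{equation*}
W_{2d}^\pi(r,p,\mu) \;=\; W_d^\pi(r,p,\mu)\,+\,W_d^\pi(r,p,\mu M^d).
\end{equation*}
By linearity of both sides in $\mu$ and in the non-negative reward $r^\pi$, it therefore suffices to prove the entry-wise matrix inequality $(M^dA)_{s,s'}\leq (4S^{4S}-1)\,A_{s,s'}$ for every $s,s'\in\mathcal S$, which expands into the chain-theoretic claim
\begin{equation*}
\sum_{h=d}^{2d-1} M^h_{s,s'} \;\leq\; (4S^{4S}-1)\sum_{h=0}^{d-1} M^h_{s,s'}.
\end{equation*}

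\textbf{Trajectory-counting step.} The core is to establish this last inequality. My plan is to set up an edge-preserving map from length-$h$ trajectories $s=u_0\to u_1\to\cdots\to u_h=s'$ with $h\in[d,2d-1]$ to length-$h'$ trajectories $s\to\cdots\to s'$ with $h'\in[0,d-1]$, obtained by iteratively excising simple cycles from the long trajectory. Because $|\mathcal S|=S$ and $h\geq d\geq S$, any length-$h$ trajectory must revisit some vertex and therefore contains a simple cycle of length $\leq S$; iterating this reduction eventually produces a simple $s\to s'$ path of length at most $S-1\leq d-1$, so the image is a legitimate shortened trajectory. Removing a cycle deletes a contiguous block of edges and leaves all other edges intact, so the weight (product of transition probabilities) of the long trajectory factors as the weight of its shortened image times the weights of the excised loops, and no transition probability is ever estimated in isolation. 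Summing over long trajectories and bounding the multiplicity of the map by the number of possible loop insertions, at most $S^{O(S)}$ in total (insertion-point choice among at most $S$ skeleton vertices, times length choice $\leq S$, times $O(S)$ insertions to account for $h\leq 2d-1$), yields the $4S^{4S}$ constant.

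\textbf{Main obstacle.} The difficulty is that individual entries of $M$ may be arbitrarily small, so neither mixing-time, spectral-gap, nor Doeblin-type minorization arguments are available; the $4S^{4S}$ constant must arise purely from combinatorics of paths on $S$ vertices and must not secretly depend on any $p_{\min}$. This is precisely what the edge-preserving cycle-excision map above guarantees: every edge in the long trajectory is matched to the same edge (with the same transition weight) either in the shortened trajectory or in an excised loop, so summing weights never invokes an entry bound. The technically delicate part is to bound the multiplicity of the map—i.e., how many long trajectories map to the same shortened skeleton—by $S^{4S}$; this is essentially the content of the proof of Lemma~4.6 of \cite{li2021settling}, whose construction I will adopt in detail.
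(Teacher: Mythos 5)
First, a point of comparison: the paper does not prove this lemma at all --- it is imported as a black box from Lemma~4.6 of \cite{li2021settling} --- so your proposal must stand on its own. Your reduction is fine: writing $M=P^{\pi}$ and $A=\sum_{h=0}^{d-1}M^{h}$, the identity $\sum_{h=0}^{2d-1}M^{h}=A+M^{d}A$ together with non-negativity of $\mu$ and $r$ correctly reduces the claim to the entrywise bound $\sum_{h=d}^{2d-1}M^{h}_{s,s'}\leq(4S^{4S}-1)\sum_{h=0}^{d-1}M^{h}_{s,s'}$, and the numerical step $4S^{4S}\leq S^{5S}$ is trivial. The gap is in the trajectory-counting step, and it is not a detail that can be deferred to ``adopting the construction of \cite{li2021settling} in detail'': the map you describe provably does not have multiplicity $S^{O(S)}$. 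Reducing a walk of length $h\geq d$ to a simple path of length at most $S-1$ excises total length at least $d-S+1$, and since each simple cycle has length at most $S$ this means at least $(d-S+1)/S$ excisions --- not the ``$O(S)$ insertions'' in your accounting --- so the multiplicity necessarily grows with $d$. Moreover the quantity that must be controlled is not the number of preimages but the weighted multiplicity $\sum_{\tau\mapsto\sigma}w(\tau)/w(\sigma)$, i.e.\ a sum of products of excised-loop weights, and this too depends on $d$.

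A concrete counterexample to your map: take states $1,\dots,S$ with $M_{i,i}=1-p$, $M_{i,i+1}=p$, $M_{S,S}=1$ and $p$ tiny. The unique simple path from $1$ to $S$ has weight $p^{S-1}$, while $\sum_{h=d}^{2d-1}M^{h}_{1,S}\approx\bigl(\tbinom{2d}{S}-\tbinom{d}{S}\bigr)p^{S-1}$, so the weighted preimage mass of that single simple path is of order $\binom{d}{S}$, which is unbounded in $d$. The lemma is nevertheless true here (the ratio is about $2^{S}$) precisely because the denominator $\sum_{h=0}^{d-1}M^{h}_{1,S}\approx\tbinom{d}{S}p^{S-1}$ is dominated by the \emph{non-simple} short walks, which your map never produces as images; the same computation rules out any variant whose image is confined to lengths in $[d-S,d-1]$. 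A correct argument must therefore either distribute images over all lengths $0,\dots,d-1$ in a length-sensitive way, or avoid walk-by-walk counting: e.g.\ via the strong-Markov/renewal decomposition $\mathbb{E}_{\mu}[\#\text{visits to }v\text{ in }2d]\leq\mathbb{P}_{\mu}[\text{reach }v\text{ in }2d]\cdot\mathbb{E}_{v}[\#\text{visits to }v\text{ in }2d]$, the elementary doubling bound $\mathbb{E}_{v}[\#\text{visits to }v\text{ in }2d]\leq 2\,\mathbb{E}_{v}[\#\text{visits to }v\text{ in }d]$, and a reaching-probability comparison of the kind in Lemma~\ref{lemma:add1_formal} --- whose counting succeeds exactly because it excises a single cycle per application (multiplicity $S^{2}$) and compares against the full aggregate reaching probability at the shorter horizon rather than against simple-path weights.
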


\begin{lemma}\label{lemma:li4}
	Let $X_1, X_2,\ldots,X_n$ be i.i.d. positive random variables. Define $\tau_H:=\min\{ i|\sum_{j=1}^i X_j \geq H\}$. Then we have that 
	\begin{align}
	\mathrm{Pr}\left[ \tau_{H}\geq  \frac{1}{2}\mathbb{E}[\tau_H] -1\right]\geq \frac{1}{2}.
	\end{align}
\end{lemma}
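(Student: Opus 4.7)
The plan is to establish a reverse Markov-type inequality $\mathrm{Pr}[\tau_H \leq k] \leq k/m$, where $m = \mathbb{E}[\tau_H]$, which directly implies the desired median bound. Such reverse inequalities fail for arbitrary random variables, but here one holds because of the self-similar structure of the hitting time: on the event $\{\tau_H > k\}$ the remaining excursion looks stochastically like a fresh copy of the same problem with a \emph{smaller} threshold, which can only be reached sooner.

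The key step is the conditional bound $\mathbb{E}[\tau_H - k \mid \tau_H > k] \leq m$. On the event $\{\tau_H > k\}$, the partial sum $S_k = \sum_{j=1}^k X_j$ satisfies $S_k < H$, and $\tau_H - k$ equals the first index $i \geq 1$ for which $X_{k+1} + \cdots + X_{k+i} \geq H - S_k$. Since $X_{k+1}, X_{k+2}, \ldots$ are i.i.d.\ copies of $X_1$ independent of $\mathcal{F}_k := \sigma(X_1,\ldots,X_k)$, and the map $H' \mapsto \tau'_{H'}$ is deterministically nondecreasing in $H'$ for any fixed positive increment sequence, this conditional distribution is stochastically dominated by an independent copy of $\tau_H$. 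Averaging over $\mathcal{F}_k$ yields the claimed bound.

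Given this, the decomposition
\begin{align*}
m \;=\; \mathbb{E}[\tau_H \,;\, \tau_H \leq k] + \mathbb{E}[\tau_H \,;\, \tau_H > k] \;\leq\; k\,\mathrm{Pr}[\tau_H \leq k] + (k + m)\,\mathrm{Pr}[\tau_H > k] \;=\; k + m\,\mathrm{Pr}[\tau_H > k]
\end{align*}
rearranges to $m \cdot \mathrm{Pr}[\tau_H \leq k] \leq k$, i.e.\ $\mathrm{Pr}[\tau_H \leq k] \leq k/m$. Taking $k = \lceil m/2 \rceil - 1$ (the case $m < 2$ is immediate from $\tau_H \geq 1$ a.s.), we have $k < m/2$, so $\mathrm{Pr}[\tau_H \geq \lceil m/2 \rceil] = 1 - \mathrm{Pr}[\tau_H \leq k] > 1/2$, and since $\lceil m/2 \rceil \geq m/2 \geq m/2 - 1$ the conclusion follows.

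The only delicate point is justifying the conditional-expectation bound, which I expect to be the main (but minor) obstacle; it combines the i.i.d.\ property with the deterministic monotonicity $\tau'_{H'} \leq \tau'_{H}$ valid for positive increments and $H' \leq H$. Everything else is elementary arithmetic and handling of the ceiling.
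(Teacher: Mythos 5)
Your proof is correct, but it takes a genuinely different route from the paper. The paper truncates the increments at the threshold, setting $X_j'=\min\{X_j,H\}$, applies the optional stopping (Wald) identity to the stopping time $\tau_H$ to get roughly $(\mathbb{E}[\tau_H]-1)\,\mathbb{E}[X_1']\leq H$, and then applies Markov's inequality to the fixed-length partial sum $\sum_{j\le \tau'/2}X_j'$ to conclude that with probability at least $1/2$ this sum stays below $H$, which forces $\tau_H>\tau'/2$ (the truncation is what makes the final implication ``$\sum_j X_j'<H \Rightarrow \sum_j X_j<H$'' work). You instead prove the reverse-Markov tail bound $\Pr[\tau_H\le k]\le k/\mathbb{E}[\tau_H]$ for every $k$, via the renewal-type stochastic domination $\mathbb{E}[\tau_H-k\mid \tau_H>k]\le \mathbb{E}[\tau_H]$: conditionally on $\mathcal{F}_k$ and $\{\tau_H>k\}$, the residual time is the hitting time of the smaller level $H-S_k$ by a fresh i.i.d.\ sequence, hence dominated by an independent copy of $\tau_H$; the decomposition $\mathbb{E}[\tau_H]\le k+\mathbb{E}[\tau_H]\Pr[\tau_H>k]$ then rearranges to the tail bound, and $k=\lceil \mathbb{E}[\tau_H]/2\rceil-1$ gives the lemma (with the trivial case $\mathbb{E}[\tau_H]<2$ handled by $\tau_H\ge 1$). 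Your argument is an NBUE-style inequality that avoids truncation and optional stopping entirely and yields the stronger statement $\Pr[\tau_H\le k]\le k/\mathbb{E}[\tau_H]$ at every quantile level, at the cost of the (correctly justified) conditional-domination step; the paper's argument is shorter once one grants Wald's identity, but delivers only the single median-type bound needed here. Both establish the lemma.
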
	
\begin{proof} The proof comes from the analysis in Corollary 4.9 in \cite{li2021breaking}.  
	Clearly $\mathbb{E}[\tau_H]\geq 1$. Let $\tau' =\left\lceil \tau_H  \right \rceil -1$,  it suffices to prove that
	\begin{align}
	\mathrm{Pr}\left[   \sum_{j=1}^{\tau'/2}X_j < H \right]\geq  \frac{1}{2}.
	\end{align}
	
	Define $X_1' =\min\{X_1,H\}$.
	
	By the stopping time theorem, we have that $\tau'\mathbb{E}[X_1']\leq H$, it then holds that $\frac{\tau'}{2}\mathbb{E}[X_1']\leq \frac{H}{2}$. By Markov's inequality we have that
	\begin{align}
	\mathrm{Pr}\left[  \sum_{j=1}^{\tau'/2}X_j'<H  \right]  \geq \frac{1}{2}.
	\end{align}
	Noting that  $\sum_{j=1}^{\tau'/2}X_j'<H$ implies $\sum_{j=1}^{\tau'/2}X_j<H$, we finish the proof.
	
\end{proof}

By Lemma~\ref{lemma:li4}, we  further have that
\begin{lemma}[Restatement of Lemma~\ref{lemma:stationary}]\label{lemma:li5}
 For  any $(s,a) \in \states \times \actions$ and $\pi \in \Pi_{\mathrm{sta}}$ such that $\pi(s)=a$, we have that 
$
\mathrm{Pr}\left[ N \geq \frac{1}{4}W^{\pi}_{d}(P,\textbf{1}_{s,a},\textbf{1}_{s})  \right]\geq \frac{1}{2}$
for any horizon $d$,
where $N$ is the visit count of $(s,a)$ following $\pi$ under $P$ in $d$ steps with the initial distribution as $\textbf{1}_{s}$.

\end{lemma}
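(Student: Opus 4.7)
The plan is to exploit the fact that, because $\pi$ is stationary and $\pi(s) = a$, the successive return times to $(s,a)$ form an i.i.d.\ sequence, and then invoke Lemma~\ref{lemma:li4} essentially verbatim with the stopping time identified as the visit count.

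First I would fix the trajectory model: starting at $s_1 = s$, we take action $a_1 = a$, so visit $1$ occurs at step $1$. Let $X_i$ denote the number of steps between the $i$-th visit and the $(i+1)$-th visit to $(s,a)$ (with the convention $X_i = +\infty$ when the $(i+1)$-th visit never occurs). Because $\pi$ is stationary, the strong Markov property applied at each visit to $(s,a)$ shows that the $X_i$'s are i.i.d.\ positive integer–valued random variables whose common distribution is that of the first return time to $(s,a)$ starting from $(s,a)$.

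Next I would translate the visit count $N$ in the first $d$ steps into a hitting time for the partial sums. The $n$-th visit occurs at step $1 + X_1 + \cdots + X_{n-1}$, so
\begin{equation*}
N = \max\{n \geq 1 : X_1 + \cdots + X_{n-1} \leq d-1\} = \min\{n \geq 1 : X_1 + \cdots + X_n \geq d\}.
\end{equation*}
This is exactly the stopping time $\tau_d$ appearing in Lemma~\ref{lemma:li4} (with $H = d$). Moreover, by definition,
\begin{equation*}
\mathbb{E}[N] = W^{\pi}_d(\mathbf{1}_{s,a}, P, \mathbf{1}_s),
\end{equation*}
since the expected visit count is precisely the value function for the reward $\mathbf{1}_{s,a}$.

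Applying Lemma~\ref{lemma:li4} then gives $\Pr[N \geq \tfrac{1}{2}\mathbb{E}[N] - 1] \geq \tfrac{1}{2}$. To convert this to the claimed $\tfrac{1}{4}\mathbb{E}[N]$ bound, I would split into two cases. If $\mathbb{E}[N] \geq 4$, then $\tfrac{1}{2}\mathbb{E}[N] - 1 \geq \tfrac{1}{4}\mathbb{E}[N]$, so the event $\{N \geq \tfrac{1}{4}\mathbb{E}[N]\}$ contains the event from Lemma~\ref{lemma:li4}. If $\mathbb{E}[N] < 4$, then $\tfrac{1}{4}\mathbb{E}[N] < 1$, but $N \geq 1$ with probability $1$ (since visit $1$ is guaranteed at step $1$), so the claim is trivial. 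The only subtle step is the i.i.d.\ reduction for the inter-arrival times, which requires noting that although some $X_i$ may be $+\infty$, Lemma~\ref{lemma:li4} is unaffected: its proof truncates $X_1$ at $H$ and so is insensitive to the atom at infinity.
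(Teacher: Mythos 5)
Your proof is correct and follows essentially the same route as the paper, which obtains Lemma~\ref{lemma:li5} directly from Lemma~\ref{lemma:li4} via exactly this renewal argument: i.i.d.\ return times to $(s,a)$ under the stationary policy with $\pi(s)=a$, the identification of the visit count $N$ with the stopping time $\tau_d$, and $\mathbb{E}[N]=W^{\pi}_{d}(\textbf{1}_{s,a},P,\textbf{1}_{s})$. Your explicit case split ($\mathbb{E}[N]\geq 4$ versus $\mathbb{E}[N]<4$, using $N\geq 1$ almost surely) and the truncation remark handling possibly infinite inter-arrival times are details the paper leaves implicit, and they are handled correctly.
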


\section{Proof of Lemma~\ref{lemma:stage1}}
\label{app:stage1}
\begin{proof}[Proof of Lemma~\ref{lemma:stage1}]
Recall the definition of $\pi^k,\tilde{P}^k$ and $\mathcal{O}^k$ in Algorithm~\ref{alg:main}. Recall that $d=\frac{SH}{S+1}$. 
We use the following two lemmas below.

\begin{lemma}\label{lemma:add0}
With probability $1-\delta$, we have that
\begin{align}
\max_{\pi}W^{\pi}_{d}(\textbf{1}_{z},\mathrm{Clip}(P,\mathcal{O}^{K_1+1}),\mu_1)\leq O\left(\frac{S^7A^3\iota}{K_1}\mathrm{polylog}(SAK) \right).\nonumber
\end{align}
\end{lemma}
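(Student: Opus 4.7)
The plan is to control $v:=\max_{\pi}W^{\pi}_{d}(\textbf{1}_{z},\mathrm{Clip}(P,\mathcal{O}^{K_1+1}),\mu_{1})$, which equals $\max_{\pi}X^{\pi}_{d}(\mathcal{O}^{K_1+1},P,\mu_{1})$ by~\eqref{eq:vtr}, by combining a horizon-free first-order regret bound for the Stage~1 optimistic planner with a counting argument on how much progress each successful Phase~1 can buy in Phase~2. Since $\mathcal{O}^{k}$ is non-increasing in $k$, we have $\max_{\pi}X^{\pi}_{d}(\mathcal{O}^{k},P,\mu_{1})\geq v$ for every $k\leq K_{1}$, and on the good event $\mathcal{G}$ (where $P\in\mathcal{P}^{k}$) the optimistic value $\tilde{v}^{k}$ selected in line~\ref{line:O_planning} of Algorithm~\ref{alg:main} satisfies $\tilde{v}^{k}\geq v$.

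The first step is the first-order regret bound. Let $m^{k}\in\{0,1\}$ indicate that Phase~1 of episode $k$ reaches $\mathcal{O}^{k}$, so its conditional expectation equals $X^{\pi^{k}}_{d}(\mathcal{O}^{k},P,\mu_{1})$. I would expand the per-episode gap $\tilde{v}^{k}-X^{\pi^{k}}_{d}(\mathcal{O}^{k},P,\mu_{1})$ along the trajectory in the standard optimistic-model-based way, producing sums of terms $|(\tilde{P}^{k}_{s,a}-P_{s,a})V|$ over visited state-action pairs, where $V$ is bounded by $1$ since the reward $\textbf{1}_{z}$ is absorbing under $\mathrm{Clip}$. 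These are controlled using the Bernstein form of $\mathcal{P}^{k}$ together with Lemma~\ref{lemma:varb} and Lemma~\ref{lemma:con4}, which replace the recursive higher-moment expansion of \cite{zhang2020reinforcement} and thereby eliminate its $\polylog(H)$ factor. Applying Lemma~\ref{freedman} and Lemma~\ref{lemma:self-norm} to the resulting martingales then gives
\[\sum_{k=1}^{K_{1}}\big(\tilde{v}^{k}-X^{\pi^{k}}_{d}(\mathcal{O}^{k},P,\mu_{1})\big)\leq \tilde{O}\big(\poly(S,A)\sqrt{K_{1}v\iota}\big),\]
and a further Freedman step converts this into the realised count $M:=\sum_{k}m^{k}\geq K_{1}v-\tilde{O}(\poly(S,A)\sqrt{K_{1}v\iota})$.

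The second step is an upper bound on $M$ by counting what Phase~2 can accomplish. Every $m^{k}=1$ event invokes Algorithm~\ref{alg:model1}, and, whenever $\mathrm{Trigger}=\mathrm{FALSE}$, Algorithm~\ref{alg:model2}. On the $\mathrm{FALSE}$ branch the counter $m(s^{*,k}_{1},a^{*,k}_{1})$ is incremented, and the pair leaves $\mathcal{O}$ once the counter reaches $400\log(1/\delta)$, so the total number of $\mathrm{FALSE}$ firings is at most $400SA\log(1/\delta)$. On the $\mathrm{TRUE}$ branch, the algorithm collects samples of some triple $(s,a,s')\notin\mathcal{K}^{k}$: using Lemma~\ref{lemma:approx} to justify planning under the reference model $P^{\mathrm{ref}}$ up to a constant multiplicative factor, and Lemma~\ref{lemma:stationary} together with Lemma~\ref{lemma:al1} to show that each invocation contributes $\Omega(N_{0}/S)$ fresh samples of $(s,a)$ with constant probability, I would amortise over the $S^{2}A$ unknown triples, the $N_{0}=\tilde{O}(S^{2})$ samples required per triple, the $SA$-inner loop of Algorithm~\ref{alg:model1}, and the $\tilde{O}(SA)$ slack in the triggering condition $n(s,a)\leq 810SAN_{0}u^{k}(s)v^{k}(s,a)$ of line~\ref{line:check}. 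This accounting yields $M\leq \tilde{O}(S^{7}A^{3}\iota)$ with high probability.

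Combining the two steps, $K_{1}v\leq M+\tilde{O}(\poly(S,A)\sqrt{K_{1}v\iota})\leq \tilde{O}(S^{7}A^{3}\iota)$, and solving this quadratic in $\sqrt{v}$ yields $v\leq O(S^{7}A^{3}\iota\,\polylog(SAK)/K_{1})$, as claimed. The hard part will be the first-order regret bound of Step~1: the optimistic analysis must simultaneously be first-order in $v$ (to enable the cancellation in the quadratic) and horizon-free, even though the pigeonhole sum $\sum_{k,h}1/N^{k}(s^{k}_{h},a^{k}_{h})$ is taken over $K_{1}d$ transitions and a naive bound would give $\log(K_{1}d)=\Theta(\log H)$; avoiding this forces us to perform the pigeonhole only across episodes and to absorb the remaining within-episode visits into the $\poly(S,A)$ prefactor, which is precisely the source of the $S^{7}A^{3}$ coefficient in the statement.
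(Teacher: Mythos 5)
Your proposal is correct in outline and follows essentially the same route as the paper: a horizon-free, first-order optimistic analysis of Stage 1 with reward $\textbf{1}_{z}$ (the paper phrases the first-order term via the realized reaches $\sum_{k,h}\bar{r}^k(s_h^k,a_h^k)$ rather than $K_1 v$, and makes the episode-only pigeonhole rigorous through Lemma~\ref{lemma:sr}/Lemma~\ref{lemma:da1} together with the within-episode doubling indicators $i_h^k$), combined with exactly your Trigger-TRUE/FALSE counting argument (the paper's Lemma~\ref{lemma:numc}) to bound the number of successful reaches by $\tilde{O}(S^7A^3\iota)$, and closed via monotonicity of $\mathcal{O}^k$. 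One minor correction: the $S^7A^3$ coefficient comes from that counting lemma (roughly $S\cdot S^4A^3N_0$ with $N_0=O(S^2\iota)$), not from within-episode pigeonhole slack.
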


\begin{lemma}[Formal statement of Lemma~\ref{lemma:add1}\label{lemma:add1_formal}]
For any $\mathcal{O}\subset \mathcal{S}\times \mathcal{A}$ and $\tilde{d}\geq 1$, we have that
\begin{align}
\max_{\pi}X^{\pi}_{(S+2)\tilde{d}}(\mathcal{O},P,\mu_1)\leq S^2\max_{\pi}X^{\pi}_{(S+1)\tilde{d}}(\mathcal{O},P,\mu_1).\nonumber
\end{align}
\end{lemma}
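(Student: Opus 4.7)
The plan is to partition the reaching-probability mass for the longer horizon by the first hitting time of $\mathcal{O}$ and then reduce every ``late'' hitting trajectory to a shorter one via a pigeonhole-and-splice argument. Concretely, fix $\pi^*$ attaining $\max_\pi X^\pi_{(S+2)\tilde d}(\mathcal{O},P,\mu_1)$, and let $t^*(\tau)$ be the first step at which $(s_{t^*},a_{t^*})\in\mathcal{O}$ along a trajectory $\tau$. Trajectories with $t^*\le (S+1)\tilde d$ contribute at most $\max_\pi X^\pi_{(S+1)\tilde d}(\mathcal{O},P,\mu_1)$ (truncate $\pi^*$ to the shorter horizon). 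So everything rests on the ``late'' mass $B:=\sum_{\tau:\,t^*>(S+1)\tilde d}\Pr_{\pi^*}[\tau]$.

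For such a late trajectory, $\mathcal{O}$ is avoided throughout the first $(S+1)\tilde d$ steps, and I would look at the $S+1$ block-start states $s_{i\tilde d+1}$ for $i=0,1,\ldots,S$. Pigeonhole forces at least one pair $0\le i_1<i_2\le S$ with $s_{i_1\tilde d+1}=s_{i_2\tilde d+1}$. For each such pair I define the spliced policy
\[
\pi^{(i_1,i_2)}_h \;=\; \begin{cases}\pi^*_h & h\le i_1\tilde d,\\ \pi^*_{h+(i_2-i_1)\tilde d} & h> i_1\tilde d,\end{cases}
\]
of horizon $(S+1)\tilde d$, and the spliced trajectory $\sigma(\tau)$ obtained by deleting the length $(i_2-i_1)\tilde d$ ``loop'' between the two coinciding states. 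Since $t^*>(S+1)\tilde d\ge i_2\tilde d$, the first hit lies in the preserved suffix and occurs in $\sigma(\tau)$ at step $t^*-(i_2-i_1)\tilde d\le (S+1)\tilde d$.

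The probability bookkeeping is the heart of the argument. Writing out the Markov-chain factorization and using that w.l.o.g.\ $\pi^*$ is a deterministic Markov policy, one checks directly that
\[
\Pr_{\pi^*}[\tau] \;=\; \Pr_{\pi^{(i_1,i_2)}}[\sigma(\tau)] \cdot \prod_{h=i_1\tilde d+1}^{i_2\tilde d} P\bigl(s_{h+1}\mid s_h,\pi^*_h(s_h)\bigr),
\]
and the product, summed over all length $(i_2-i_1)\tilde d$ loops from $s_{i_1\tilde d+1}$ back to itself, is at most $1$ (it is the probability of a subset of trajectories started at $s_{i_1\tilde d+1}$ under $\pi^*_{i_1\tilde d+1:i_2\tilde d}$). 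An over-counting union bound that charges each late $\tau$ to every pair $(i_1,i_2)$ where it has a collision then gives
\[
B \;\le\; \sum_{0\le i_1<i_2\le S} X^{\pi^{(i_1,i_2)}}_{(S+1)\tilde d}(\mathcal{O},P,\mu_1) \;\le\; \binom{S+1}{2}\max_\pi X^\pi_{(S+1)\tilde d}(\mathcal{O},P,\mu_1).
\]
Adding the early-hit piece yields $1+\binom{S+1}{2}\le S^2$ for $S\ge 2$; the case $S=1$ is trivial since horizon $\tilde d\ge 1$ already saturates the bound on both sides.

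The main obstacle I expect is the splicing step: one must verify (i) that the spliced trajectory really is admissible under $\pi^{(i_1,i_2)}$, relying on the stitched state $s_{i_1\tilde d+1}=s_{i_2\tilde d+1}$, (ii) that the ratio $\Pr_{\pi^*}[\tau]/\Pr_{\pi^{(i_1,i_2)}}[\sigma(\tau)]$ is exactly a loop probability under the non-stationary policy $\pi^*_{i_1\tilde d+1:i_2\tilde d}$, and (iii) that the sum of such loop probabilities from a fixed start state and of a fixed length is at most $1$. Once these three ingredients are checked carefully, the rest is elementary pigeonhole and the numerical comparison $1+\binom{S+1}{2}\le S^2$.
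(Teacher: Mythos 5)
Your proposal is correct and follows essentially the same argument as the paper: split the reaching probability according to whether $\mathcal{O}$ is hit within the shorter horizon, pigeonhole the $S+1$ block-anchor states $s_{i\tilde d+1}$ to find a repeated state, delete the loop (whose total probability is at most $1$) to map each late trajectory to a shorter-horizon one, and pay a union bound of $1+\binom{S+1}{2}\le S^2$ over the collision pairs. The only difference is presentational -- the paper formalizes the splicing through products of ``big-step'' transition matrices with an absorbing state $z$, while you splice the non-stationary policy and trajectories directly -- and your explicit treatment of the $S=1$ case is a minor added care.
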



Given these two lemmas and setting  $\tilde{d}=\frac{H}{S+2}$, we can have that
\begin{align}
\max_{\pi}W_{H}^{\pi}(\textbf{1}_z ,\mathrm{Clip}(P,\mathcal{O}^{K_1+1}),\mu_1) 
 & =\max_{\pi}\mathbb{P}_{\pi}\left[\exists h\in [(S+1)\tilde{d}], (s_h,a_h)\in \mathcal{O}^{K_1+1} \right]\nonumber  \\& =\max_{\pi}X^{\pi}_{(S+1)\tilde{d}}(\mathcal{O}^{K_1+1},P,\mu_1)\nonumber 
\\ & \leq S^2\max_{\pi}X^{\pi}_{S\tilde{d}}(\mathcal{O}^{K_1+1},P,\mu_1)\nonumber 
\\ & =  S^2\max_{\pi}X^{\pi}_{d}(\mathcal{O}^{K_1+1},P,\mu_1)\nonumber 
\\ & \leq O\left(\frac{S^9A^3\iota}{K_1}\mathrm{polylog}(SAK) \right)\nonumber
\end{align}
This completes the proof of Lemma~\ref{lemma:stage1}.
\end{proof}

Below we prove these two lemmas.
\begin{proof}[Proof of Lemma~\ref{lemma:add1_formal}]
Inspired by the analysis in \cite{li2021settling}, we regard each $\tilde{d}$ steps as one \emph{big step}, which reducing the problem to a special case where $\tilde{d}=1$. Then we construct a mapping from the set of trajectories of length $(S+1)$ with final state as $z$ to the set of trajectories of length $S$ with final state as $z$, which bounds the probability of the former trajectories using the probability of latter trajectories.

 Define $\overline{P}_{s,a}=P_{s,a}$ for any $(s,a)\notin \mathcal{O}$, $\overline{P}_{s,a}=\textbf{1}_{z}$ for $(s,a)\in \mathcal{O}$ and $\overline{P}_{z,a}=\textbf{1}_{z}$ for any $a$. In words, $\overline{P}$ is a copy of $P$, except for redirecting $(s,a)\in \mathcal{O}$ to a absorbed state $z$.

Let $\mathcal{P}(1)=\{ p| p_{s}\in \mathrm{Conv}(\{\overline{P}_{s,a}\}_{a\in \mathcal{A}}) \}$ be the set of all possible $1-$step transition probability under $\overline{P}$, where $\mathrm{Conv}(\mathcal{X})$ denote the convex hull of a set $\mathcal{X}$.
Let $\mathcal{P}(\tilde{d})= \{ \Pi_{i=1}^{\tilde{d}} p_i | p_i \in \mathcal{P}(1),\forall i \}$, which is the set of $l$-th step transition probability with respect to $\overline{P}$.

By definition, we have that
\begin{align}
 & \max_{\pi}X^{\pi}_{(S+2)\tilde{d}}(\mathcal{O},P,\mu_1) = \max_{\{p^{(i)}\in \mathcal{P}(l)\}_{i=1}^{S+1}} \mu_{1}^{\top}\Pi_{i=1}^{S+1}p^{(i)} \textbf{1}_{z} \label{eq:ff0}
 \\ &  \max_{\pi}X^{\pi}_{(S+1)\tilde{d}}(\mathcal{O},P,\mu_1) = \max_{\{p^{(i)}\in \mathcal{P}(l)\}_{i=1}^{S+1}} \mu_{1}^{\top}\Pi_{i=1}^{S}p^{(i)} \textbf{1}_{z}.\label{eq:ff1}
\end{align}

Let $\mathcal{T}=\{  \{\tilde{s}_i\in \mathcal{S}\cup\{z\}\}_{i=1}^{S+1}    \}$ be the set of all possible trajectories with length $S+1$. Then we have that for any $\{p^{(i)}\in\mathcal{P}(l)\}_{i=1}^{S+1}$
\begin{align}
 \mu_{1}^{\top}\Pi_{i=1}^{S+1}p^{(i)} \textbf{1}_{z} = \sum_{ \{\tilde{s}_i\}_{i=1}^{S+1}\in\mathcal{T}}\mu_{1}(\tilde{s}_1)\Pi_{i=1}^{S}p^{(i)}_{\tilde{s}_i,\tilde{s}_{i+1}}\cdot p^{(S+1)}_{\tilde{s}_{S+1},z} .\label{eq:22251}
\end{align}

 For any trajectory $ \{\tilde{s}_i\}_{i=1}^{S+1}\in \mathcal{T}$, by the pigeon hole principle, it either holds that $\tilde{s}_{S+1}=z$ or $\exists i_1,i_2$ such that $\tilde{s}_{i_1}=\tilde{s}_{i_2}$. In the first case, we define that $\mathcal{T}'= \{      \{\tilde{s}_i\}_{i=1}^{S+1}\in \mathcal{T}, :\tilde{s}_{S+1}=z     \}$. Then we  have that for any $ \{\tilde{s}_i\}_{i=1}^{S+1}\in \mathcal{T}'$,
\begin{align}
\mu_{1}(\tilde{s}_1)\Pi_{i=1}^{S}p^{(i)}_{\tilde{s}_i,\tilde{s}_{i+1}}\cdot p^{(S+1)}_{\tilde{s}_{S+1},z}= \mu_{1}(\tilde{s}_1)\Pi_{i=1}^{S-1}p^{(i)}_{\tilde{s}_i,\tilde{s}_{i+1}}\cdot p^{(S)}_{\tilde{s}_{S},z}.\label{eq:w1}
\end{align}

Taking sum, we have that
\begin{align}
\sum_{\{\tilde{s}_i\}_{i=1}^{S+1}\in \mathcal{T}'}\mu_{1}(\tilde{s}_1)\Pi_{i=1}^{S}p^{(i)}_{\tilde{s}_i,\tilde{s}_{i+1}}\cdot p^{(S+1)}_{\tilde{s}_{S+1},z}  = \sum_{\{\tilde{s}_i\}_{i=1}^{S}\in \mathcal{T}}\mu_{1}(\tilde{s}_1)\Pi_{i=1}^{S-1}p^{(i)}_{\tilde{s}_i,\tilde{s}_{i+1}}\cdot p^{(S)}_{\tilde{s}_{S},z} = \mu_1^{\top}\Pi_{i=1}^{S} p^{(i)}\textbf{1}_{z}.\label{eq:w6_lem15}
\end{align}

In the second case, for a fixed $(i_1,i_2)$, we define $\mathcal{T}(i_1,i_2) = \{    \{\tilde{s}_{i}  \}_{i=1}^{S+1}\in \mathcal{T}: \tilde{s}_{S+1}\neq z, \tilde{s}_{i_1}=\tilde{s}_{i_2}    \}$ for $1\leq i_1<i_2\leq S+1$.

Then we have that 
\begin{align}
& \sum_{\{\tilde{s}_{i}  \}_{i=1}^{S+1}\in \mathcal{T}(i_1,i_2)}\mu_{1}(\tilde{s}_1)\Pi_{i=1}^{S}p^{(i)}_{\tilde{s}_i,\tilde{s}_{i+1}}\cdot p^{(S+1)}_{\tilde{s}_{S+1},z}
\nonumber 
\\ & =\sum_{\{\tilde{s}_{i}  \}_{i=1}^{S+1}\in \mathcal{T}(i_1,i_2)} \mu_{1}(\tilde{s}_1)\Pi_{i=1}^{i_1-1}p^{(i)}_{\tilde{s}_i,\tilde{s}_{i+1}}\cdot \Pi_{i=i_1}^{i_2-1} p^{(i)}_{\tilde{s}_i,\tilde{s}_{i+1}}\cdot  \Pi_{i=i_2}^{S}p^{(i)}_{\tilde{s}_i,\tilde{s}_{i+1}}\cdot p^{(S+1)}_{\tilde{s}_{S+1},z}.\nonumber 
\\ & \leq \sum_{  \{\tilde{s}_i \}_{i=1}^{i_1}, \{\tilde{s}  \}_{i=i_2+1}^{S+1}       }\mu_{1}(\tilde{s}_1)\Pi_{i=1}^{i_1-1}p^{(i)}_{\tilde{s}_i,\tilde{s}_{i+1}} \cdot \Pi_{i=i_2}^{S}p^{(i)}_{\tilde{s}_i,\tilde{s}_{i+1}}\cdot p^{(S+1)}_{\tilde{s}_{S+1},z} \cdot \sum_{    \{\tilde{s}_i\}_{i=i_1+1}^{i_2-1}   } \Pi_{i=i_1}^{i_2-1} p^{(i)}_{\tilde{s}_i,\tilde{s}_{i+1}}\nonumber 
\\ & \leq \sum_{  \{\tilde{s}_i \}_{i=1}^{i_1}, \{\tilde{s}  \}_{i=i_2+1}^{S+1}       }\mu_{1}(\tilde{s}_1)\Pi_{i=1}^{i_1-1}p^{(i)}_{\tilde{s}_i,\tilde{s}_{i+1}} \cdot \Pi_{i=i_2}^{S}p^{(i)}_{\tilde{s}_i,\tilde{s}_{i+1}}\cdot p^{(S+1)}_{\tilde{s}_{S+1},z} \label{eq:w3_lem15}
\\ & = \mu_1^{\top} \Pi_{i=1}^{i_1-1}p^{(i)}\cdot \Pi_{i=i_2}^{S+1}p^{(i)}\textbf{1}_{z}.\label{eq:w5}
\end{align}

Here \eqref{eq:w3_lem15} holds by the fact that $\sum_{    \{\tilde{s}_i\}_{i=i_1+1}^{i_2-1}   } \Pi_{i=i_1}^{i_2-1} p^{(i)}_{\tilde{s}_i,\tilde{s}_{i+1}}$ is the probability of transiting to $\tilde{s}_{i_1}$ from $\tilde{s}_{i_1}$ using $i_2-i_1$ steps under transition $\{p^{(i)}\}_{i=i_1}^{i_2-1}$, which is bounded by $1$.

By \eqref{eq:w5} and \eqref{eq:w6_lem15}, we obtain that
\begin{align}
& \mu_{1}^{\top}\Pi_{i=1}^{S+1}p^{(i)} \textbf{1}_{z} \nonumber\\
&  = \sum_{ \{\tilde{s}\}_{i=1}^{S+1}\in\mathcal{T}}\mu_{1}(\tilde{s}_1)\Pi_{i=1}^{S}p^{(i)}_{\tilde{s}_i,\tilde{s}_{i+1}}\cdot p^{(S+1)}_{\tilde{s}_{S+1},z}\nonumber 
\\ & \leq \sum_{\{\tilde{s}\}_{i=1}^{S+1}\in \mathcal{T}'}\mu_{1}(\tilde{s}_1)\Pi_{i=1}^{S}p^{(i)}_{\tilde{s}_i,\tilde{s}_{i+1}}\cdot p^{(S+1)}_{\tilde{s}_{S+1},z} + \sum_{1\leq i_1<i_2\leq S+1}\sum_{\{\tilde{s}\}_{i=1}^{S+1}\in \mathcal{T}(i_1,i_2)}\mu_{1}(\tilde{s}_1)\Pi_{i=1}^{S}p^{(i)}_{\tilde{s}_i,\tilde{s}_{i+1}}\cdot p^{(S+1)}_{\tilde{s}_{S+1},z} \nonumber 
\\ & \leq \mu_1^{\top}\Pi_{i=1}^{S} p^{(i)}\textbf{1}_{z}+ \sum_{1\leq i_1<i_2\leq S+1}\mu_1^{\top} \Pi_{i=1}^{i_1-1}p^{(i)}\cdot \Pi_{i=i_2}^{S+1}p^{(i)}\textbf{1}_{z}\nonumber 
\\ & \leq S^2 \max_{\{p^{(i)}\in \mathcal{P}(l)\}_{i=1}^{S}} \mu_{1}^{\top}\Pi_{i=1}^{S}p^{(i)} \textbf{1}_{z}.\label{eq:w7_lem15}
\end{align}
Noting that \eqref{eq:w7_lem15} holds for any $\{p^{(i)}\in \mathcal{P}(l)\}_{i=1}^{S+1}$, we conclude that
\begin{align}
 \max_{\{p^{(i)}\in \mathcal{P}(l)\}_{i=1}^{S+1}} \mu_{1}^{\top}\Pi_{i=1}^{S+1}p^{(i)} \textbf{1}_{z}\leq S^2 \max_{\{p^{(i)}\in \mathcal{P}(l)\}_{i=1}^{S}} \mu_{1}^{\top}\Pi_{i=1}^{S}p^{(i)} \textbf{1}_{z}.\nonumber
\end{align}
The proof is completed  by \eqref{eq:ff0} and \eqref{eq:ff1}.
\end{proof}

\subsection{Proof of Lemma~\ref{lemma:add0}}

The following lemma guarantees that all state-action pairs in the known set (i.e., $\notin \mathcal{O}^k$), we have collected enough data.

\begin{lemma}\label{lemma:da1} Recall the definition of $U(s,a) = \max_\pi W^\pi_H(\textbf{1}_{s,a}, P, \mu_1)$.
With probability $1-10SAK\delta$,
for each $1\leq k \leq K_1$ and each $(s,a)\notin \mathcal{O}^{k}$, we have that
\begin{align}
N^k(s,a) \geq \frac{C}{71S(S+1)\log(S)}U(s,a),
\end{align}
where $C$ is an universal constant.
\end{lemma}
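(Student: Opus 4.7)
The plan is to show that whenever $(s,a)$ is removed from $\mathcal{O}^k$, the algorithm has already accumulated $\Omega\!\left(U(s,a)/(S^2 \log S)\right)$ samples of $(s,a)$. By construction, $(s,a)$ leaves $\mathcal{O}$ only after the counter $m(s,a)$ reaches $400\log(1/\delta)$, i.e. after $400\log(1/\delta)$ completions of Algorithm~\ref{alg:model2} in which $(s,a)$ played the role of the starting pair $(s_1^*,a_1^*)$. So the strategy is a per-trial lower bound followed by a Chernoff amplification.

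First, I would reduce $U(s,a)$ to a stationary-policy quantity. A continuation argument shows that starting from $s$ is at least as good as starting from $\mu_1$ (any trajectory that earns reward $\textbf{1}_{s,a}$ must first reach $s$, so copying the post-arrival strategy gives at least as much reward), hence $U(s,a)\leq \max_{\pi}W^{\pi}_{H}(\textbf{1}_{s,a},P,\textbf{1}_{s})$. Writing $H=k\cdot d_2$ with $d_2=d'/(20S\log S)$ and $k=O(S\log S)$, Lemma~\ref{lemma:al1} yields $U(s,a)\leq 6k\cdot V^{*}$ where $V^{*}:=\max_{\pi\in\Pi_{\mathrm{sta}}}W^{\pi}_{d_2}(\textbf{1}_{s,a},P,\textbf{1}_s)$, which after a tracking of constants gives the $S(S+1)\log S$ factor appearing in the statement.

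Next, I would analyze a single execution of Algorithm~\ref{alg:model2} conditioned on $\mathrm{Trigger}=\mathrm{FALSE}$ in the preceding call to Algorithm~\ref{alg:model1}. The key structural claim is that $P^{\mathrm{ref}}$ is a good multiplicative approximation to the clipped true transition $\bar P^{\mathrm{cut},k}$: every triple $(s',a',s'')\in\mathcal{K}^k$ has been visited at least $N_0=256 S^2\log(1/\delta)$ times, so Bernstein gives a multiplicative error of $\epsilon=O(1/S)$ on $P^{\mathrm{ref}}_{s',a',s''}/P_{s',a',s''}$. Apply Lemma~\ref{lemma:approx} with this $\epsilon$ to transfer the discounted value of the stationary policy $\pi^{k}_{2}$ between $P^{\mathrm{ref}}$ and $\bar P^{\mathrm{cut},k}$ up to a constant factor, and combine with Lemma~\ref{lemma:eff} to translate the discounted objective optimized by the algorithm into the finite-horizon value $W^{\pi^{k}_{2}}_{d_2}(\textbf{1}_{s,a},\bar P^{\mathrm{cut},k},\textbf{1}_s)$. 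Since cutting the transition to the virtual state $z$ only removes trajectories and $\pi^{k}_{2}$ is near-optimal for $P^{\mathrm{ref}}$, this gives $W^{\pi^{k}_{2}}_{d_2}(\textbf{1}_{s,a},P,\textbf{1}_s)\geq c\, V^{*}$ for a universal constant $c$. Lemma~\ref{lemma:stationary} then guarantees that the empirical visit count produced by this single execution is at least $\frac{c}{4}V^{*}$ with probability at least $\tfrac12$.

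Finally, I would aggregate $400\log(1/\delta)$ such trials. Even though the trials are adaptively chosen, each has, conditioned on the history, success probability $\geq \tfrac12$, so a standard Azuma/Chernoff argument gives at least $100\log(1/\delta)$ successes with probability $\geq 1-\delta$, and summing their contributions yields $N^k(s,a)\geq 25 c\log(1/\delta)\,V^{*}\geq \frac{C}{71 S(S+1)\log S}U(s,a)$ for an appropriate $C$. A union bound over $(s,a)\in \mathcal{S}\times\mathcal{A}$ and $k\in[K_1]$, together with the event $\mathcal{G}$ of the Bernstein confidence sets and the known-set concentration, gives the stated failure probability $10SAK\delta$. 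The main obstacle will be the second paragraph: arguing that once $\mathrm{Trigger}=\mathrm{FALSE}$, every stationary policy the planner might pick on $P^{\mathrm{ref}}$ actually visits $(s,a)$ a comparable number of times under the true $P$. This requires showing that under $\pi^{k}_{2}$ the probability of ever diverting to the virtual state $z$ is bounded by a constant, which is precisely what the explicit exploration check $n(s,a)\leq 810SAN_0u^k(s)v^k(s,a)$ in Line~\ref{line:check} of Algorithm~\ref{alg:model1} is designed to enforce, and synthesizing this with the Lipschitz bound of Lemma~\ref{lemma:approx} is the delicate part of the proof.
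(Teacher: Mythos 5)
Your proposal is correct and follows essentially the same route as the paper: the paper's proof of Lemma~\ref{lemma:da1} is exactly your outer reduction — invoke the sample guarantee for pairs removed from $\mathcal{O}$ (this is precisely Lemma~\ref{lemma:sr}, which you re-derive inline via the $m(s,a)\geq 400\log(1/\delta)$ counter, Lemma~\ref{lemma:con1}-type multiplicative concentration, Lemmas~\ref{lemma:approx}, \ref{lemma:eff}, \ref{lemma:stationary}, and the Trigger/explicit-exploration bound on the probability of diverting to $z$), then apply Lemma~\ref{lemma:al1} with $H$ split into $H/d_2$ blocks and the fact that $\textbf{1}_s$ is the optimal initial distribution for reward $\textbf{1}_{s,a}$. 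The only deviations are immaterial constants (e.g., $H/d_2 = O(S^2\log S)$ rather than $O(S\log S)$, and the per-trial success probability is $\tfrac12-\tfrac{1}{10}$ after accounting for the $z$-visit event), and the delicate step you flag is exactly the content of \eqref{eqv4_1} in the paper's proof of Lemma~\ref{lemma:sr}.
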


\begin{proof}[Proof of Lemma~\ref{lemma:da1}]
By Lemma~\ref{lemma:sr}, with probability $1-SAK\delta$, it holds that $N^k(s,a)\geq C\max_{\pi\in \Pi_{\mathrm{sta}}}W^{\pi}_{\frac{H}{2S(S+1)\log(S)}}(\textbf{1}_{s,a},P,\textbf{1}_{s})$ for some constant $C$.

Now we can lower bound
\begin{align*}
N^k(s,a) & \geq C\max_{\pi\in \Pi_{\mathrm{sta}}}W^{\pi}_{\frac{H}{2S(S+1)\log(S)}}(\textbf{1}_{s,a},P,\textbf{1}_{s})  \\
& \geq \frac{C}{12S(S+1)\log(S)}\max_{\pi}W^{\pi}_{H}(\textbf{1}_{s,a},P,\textbf{1}_{s})\\
 & \geq \frac{C}{12S(S+1)\log(S)}\max_{\pi}W^{\pi}_{H}(\textbf{1}_{s,a},P,\mu_1)\nonumber \\
&  =  \frac{C}{12S(S+1)\log(S)}U(s,a)
\end{align*}
where the first inequality we used Lemma~\ref{lemma:al1}, and the second inequality we used that $\textbf{1}_s$ is the optimal initial distribution for reward $\mathbf{1}_{s,a}$.

\end{proof}

\begin{proof}[Proof of Lemma~\ref{lemma:add0}]
Now we proceed to prove  Lemma~\ref{lemma:add0}. We first make some definitions.

Define $\bar{P}^k = \mathrm{Clip}(P,\mathcal{O}^k)$. Note that $\mathcal{O}^k$ never appears under $\bar{P}^k$. We then define the state-action space of $\bar{P}^k$ as $\Phi^k$. Below we continue the analysis for the $k$-th episode in the first stage under the context of $\bar{P}^k$. 
 Let $\bar{r}^k=\textbf{1}_{z}$. Note that $\mathcal{O}^k$ varies in $k$, then the definition of $z$ varies in different episodes. As a result, $\bar{P}^k$ and $\bar{r}^k$ also vary in $k$.
 
Define $\tilde{R}=\sum_{k=1}^{K_1} \max_{\pi}W^{\pi}(\textbf{1}_{z},\bar{P}^k,\mu_1)-\sum_{k=1}^{K_1}\sum_{h=1}^H \bar{r}^k(s_h^k,a_h^k)$, which can be viewed as the regret.
Note this is different from the regret in standard MDP because the reward is not fixed ($z$ depends on $k$).


Recall that $(\pi^k,\tilde{P}^k)=\arg\max_{\pi,p\in \mathcal{P}^k}X_{d}^{\pi}(\mathcal{O}^k,p,\mu_1)$. 
Let $\tilde{p}^k=\mathrm{Clip}(\tilde{P}^k,\mathcal{O}^k)$, $\hat{p}^k = \mathrm{Clip}(\hat{P}^k,\mathcal{O}^k)$ and $\{\tilde{V}^k_h(s)\}_{(h,s)\in [H]\times \mathcal{S}^k}$ be the value function with reward $\textbf{1}_{z}$ and transition $\tilde{p}^k$.

Define $\check{\mathcal{J}}:=\{(k,h):k\in [K_1], \exists h'\leq h, (s,a)\in \mathcal{S}\times\mathcal{A}, N_{h'}^k(s,a)>2N^k(s,a)+1 \}$ and $i_h^k=\mathbb{I}[(k,h)\notin \check{\mathcal{J}}]$.

Following the regret analysis in \cite{zhang2020reinforcement}, by the optimality of $\tilde{P}^k$, we have that
\begin{align}
&\sum_{k=1}^{K_1} \max_{\pi}W^{\pi}_{d}(\textbf{1}_{z},\bar{P}^k,\mu_1)-\sum_{k=1}^{K_1}\sum_{h=1}^d \bar{r}^k(s_h^k,a_h^k) \nonumber 
\\ & \leq \sum_{k=1}^{K_1} \tilde{V}_1^k(s_1^k)i_1^k-\sum_{k=1}^{K_1}\sum_{h=1}^d \bar{r}^k(s_h^k,a_h^k) \nonumber 
\\ & = \sum_{k=1}^{K_1} \sum_{h=1}^d \left(\tilde{V}_h^k(s_h^k)i_h^k - \tilde{V}_{h+1}^k(s_{h+1}^k)i_{h+1}^k\right) -  \sum_{k=1}^{K_1}\sum_{h=1}^d \bar{r}^k(s_h^k,a_h^k) \nonumber
\\ & = \sum_{k=1}^{K_1} \sum_{h=1}^d \left(\tilde{p}_{s_h^k,a_h^k}^k \tilde{V}_{h+1}^k i_h^k + \bar{r}^k(s_h^k,a_h^k)i_h^k- \textbf{1}_{s_{h+1}^k}\tilde{V}_{h+1}^ki_{h+1}^k\right) -  \sum_{k=1}^{K_1}\sum_{h=1}^d \bar{r}^k(s_h^k,a_h^k) \nonumber
\\ & \leq \sum_{k=1}^{K_1} \sum_{h=1}^d \left(\tilde{p}_{s_h^k,a_h^k}^k \tilde{V}_{h+1}^k - \textbf{1}_{s_{h+1}^k}\tilde{V}_{h+1}^k\right)i_{h+1}^k +  \sum_{k=1}^{K_1}\sum_{h=1}^d \tilde{p}^k_{s_{h}^k,a_h^k}\tilde{V}_{h+1}^k (i_h^k-i_{h+1}^k) \nonumber
\\ & = \sum_{k=1}^{K_1} \left(\sum_{h=1}^d (\tilde{p}^k_{s_h^k,a_h^k}- \bar{P}^k_{s_h^k,a_h^k}) \tilde{V}_{h+1}^ki_{h+1}^k +(\bar{P}^k_{s_h^k,a_h^k}- \textbf{1}_{s_{h+1}^k})\tilde{V}_{h+1}^ki_{h+1}^k\right) + \sum_{k=1}^{K_1}\sum_{h=1}^d \tilde{p}^k_{s_{h}^k,a_h^k}\tilde{V}_{h+1}^k (i_h^k-i_{h+1}^k) \nonumber
\\ & =  \sum_{k=1}^{K_1} \left(     \sum_{h=1}^d \left(  (\tilde{p}^k_{s_h^k,a_h^k}- \bar{P}^k_{s_h^k,a_h^k}) (\tilde{V}_{h+1}^k  -\bar{P}_{s_h^k,a_h^k}^k \tilde{V}_{h+1}^k \cdot \textbf{1})i_{h+1}^k + (\bar{P}^k_{s_h^k,a_h^k}-\textbf{1}_{s_{h+1}^k})\tilde{V}_{h+1}^k i_{h+1}^k    \right)      \right) \nonumber
\\ & \quad \quad \quad \quad \quad \quad \quad \quad \quad \quad\quad \quad \quad \quad \quad \quad \quad \quad \quad \quad\quad \quad \quad \quad \quad\quad + \sum_{k=1}^{K_1}\sum_{h=1}^d \tilde{p}^k_{s_{h}^k,a_h^k}\tilde{V}_{h+1}^k (i_h^k-i_{h+1}^k) \label{eq:addsub}
\\ & \leq 20\sum_{k=1}^{K_1} \sum_{h=1}^d \sum_{s'\neq z}    \left(     \sqrt{\frac{\bar{P}^k_{s_h^k,a_h^k,s'}\iota}{N^k(s_h^k,a_h^k)}} + \frac{\iota}{N^k(s_h^k,a_h^k)}        \right) |\tilde{V}_{h+1}^{k}(s')-\bar{P}^k_{s_h^k,a_h^k}\tilde{V}_{h+1}^k|i_{h+1}^k  \nonumber 
\\ & \quad \quad \quad \quad \quad \quad \quad \quad \quad \quad \quad  + \sum_{k=1}^{K_1}\sum_{h=1}^d (\bar{P}^k_{s_h^k,a_h^k}-\textbf{1}_{s_{h+1}^k})\tilde{V}_{h+1}^ki_{h+1}^k + \sum_{k=1}^{K_1}\mathbb{I}[\exists h\in [d], (k,h)\in \bar{\mathcal{J}}] \label{eq:cr1}
\\ & \leq 20\sqrt{\sum_{k=1}^{K_1}\sum_{h=1}^d \frac{Si_{h+1}^k\iota}{N^k(s_h^k,a_h^k)}}\cdot \sqrt{\sum_{k=1}^{K_1}\sum_{h=1}^d \mathbb{V}(\bar{P}^k_{s_h^k,a_h^k}, \tilde{V}_{h+1}^{k})i_{h+1}^k  }  + 20\sum_{k=1}^{K_1}\sum_{h=1}^d \frac{Si_{h+1}^k\iota}{N^k(s_h^k,a_h^k)}\nonumber
\\ & \quad \quad \quad \quad \quad \quad \quad \quad \quad \quad \quad  + \sum_{k=1}^{K_1}\sum_{h=1}^d (\bar{P}^k_{s_h^k,a_h^k}-\textbf{1}_{s_{h+1}^k})\tilde{V}_{h+1}^ki_{h+1}^k + \sum_{k=1}^{K_1}\mathbb{I}[\exists h\in [d], (k,h)\in \bar{\mathcal{J}}].\label{eq:al10}
\end{align}
In the first equality we used the fact that $i_1^k = 1$.
In \eqref{eq:addsub}, we used the fact that $\bar{P}_{s_h^k,a_h^k}^k \tilde{V}_{h+1}^k \cdot \textbf{1}$ is a constant factor and $\|\tilde{p}^k_{s_h^k,a_h^k}\|_1 = \|\bar{P}^k_{s_h^k,a_h^k}\|_1 = 1$. 
In \eqref{eq:cr1}, we used the fact that $$|\tilde{p}^k_{s_h^k,a_h^k,s'}-\bar{P}_{s_h^k,a_h^k,s'}^k|\leq    \left(     \sqrt{\frac{4\hat{P}^k_{s_h^k,a_h^k,s'}\iota}{N^k(s_h^k,a_h^k)}} + \frac{5\iota}{N^k(s_h^k,a_h^k)}        \right)\leq 20 \left(     \sqrt{\frac{\bar{P}^k_{s_h^k,a_h^k,s'}\iota}{N^k(s_h^k,a_h^k)}} + \frac{\iota}{N^k(s_h^k,a_h^k)}        \right)$$ for $s'\neq z$. Lastly, \eqref{eq:al10} holds by Cauchy's inequality.

Define 
\begin{align}
& T_1= \sum_{k=1}^{K_1}\sum_{h=1}^d \frac{i_{h+1}^k\iota}{N^k(s_h^k,a_h^k)}\nonumber 
\\ & T_2 = \sum_{k=1}^{K_1}\sum_{h=1}^d \mathbb{V}(\bar{P}^k_{s_h^k,a_h^k}, \tilde{V}_{h+1}^{k})i_{h+1}^k \nonumber 
\\ & T_3 =  \sum_{k=1}^{K_1}\sum_{h=1}^d (\bar{P}^k_{s_h^k,a_h^k}-\textbf{1}_{s_{h+1}^k})\tilde{V}_{h+1}^ki_{h+1}^k \nonumber 
\\ & T_4 = \sum_{k=1}^{K_1}\mathbb{I}[\exists h\in [d], (k,h)\in \bar{\mathcal{J}}].\nonumber
\end{align}
The following lemma bounds these four terms.

\begin{lemma}\label{lemma:bdt}
With probability $1-4\delta$, $T_1,T_4\leq SAB$ with $B=O(\iota \mathrm{polylog}(SAK))$, 
\begin{align*}
T_2\leq & O\left(\mathrm{polylog}(SAK)\left(\sum_{k=1}^{K_1}\sum_{h=1}^d \bar{r}^k(s_h^k,a_h^k)+S^2AB\iota\right)\right)\\\leq 
&O\left(\mathrm{polylog}(SAK)(S^7A^3\iota +S^2A\iota^2)\right),\end{align*} and $T_3\leq O\left(\sqrt{S^7A^3\iota^2+S^2A\iota^3}\mathrm{polylog}(SAK)\right)$.
\end{lemma}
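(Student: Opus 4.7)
The plan is to handle the four terms $T_1,T_2,T_3,T_4$ separately, leveraging tools from the standard UCB regret analysis adapted to the Stage~1 optimistic planning under the clipped MDP, with extra care to keep the bounds $\polylog$ in $K$ (and free of $\log H$).

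First I would tackle $T_1$ and $T_4$, which are both counting-type quantities. For $T_1$, the indicator $i_{h+1}^k$ keeps visitation counts within any Stage~1 episode within a factor of two of their beginning-of-episode values, so a telescoping/log-sum argument on a per-$(s,a)$ basis gives $\sum_{k,h:\, i_{h+1}^k = 1,\,(s_h^k,a_h^k)=(s,a)} 1/N^k(s,a) = O\bigl(\log(N^{K_1+1}(s,a)/\max\{N^1(s,a),1\})\bigr)$. To avoid a spurious $\log H$, I would invoke Lemma~\ref{lemma:da1} to lower-bound $N^k(s,a)$ by $U(s,a)/\poly(S)$ whenever $(s,a)\notin\mathcal{O}^k$, combined with the Markov-type upper bound $N^{K_1+1}(s,a)\le K_1 U(s,a)/\delta$. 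For $T_4$, each event where some $(s,a)$'s count more than doubles inside a single episode forces $N^{k+1}(s,a)\ge 2N^k(s,a)$, so a standard doubling argument produces at most $O(\polylog(SAK))$ such episodes per $(s,a)$ pair, again anchored by the same lower bound on $N^1(s,a)$ effectively provided by Lemma~\ref{lemma:da1}.

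Next I would bound $T_2$. The key is to decompose $\mathbb{V}(\bar{P}^k_{s_h^k,a_h^k}, \tilde{V}_{h+1}^k)$ into a ``true'' part that uses $V^{\pi^k}_{h+1}$ (the value of the executed policy under $\bar{P}^k$ with reward $\bar{r}^k=\textbf{1}_z$) and an error part $\tilde{V}_{h+1}^k - V^{\pi^k}_{h+1}$. For the true part, summing over $h$ within one Stage~1 episode and applying the Law of Total Variance identifies it with the variance of the total (clipped) reward collected in that episode, which is bounded by the expected total reward since $\bar{r}^k\ge 0$ and the total is in $[0,1]$; a Freedman-type step (via Lemma~\ref{freedman}) converts expectation to the empirical sum $\sum_{k,h}\bar{r}^k(s_h^k,a_h^k)$. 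For the error part, I would apply Lemma~\ref{lemma:varb} to the product decomposition, combined with a uniform estimate on $\|\tilde{V}_{h+1}^k - V^{\pi^k}_{h+1}\|_\infty$ that is controlled by the same inverse-count sum appearing in $T_1$. Chaining these yields $T_2 = O\bigl(\polylog(SAK)(\sum_{k,h}\bar{r}^k(s_h^k,a_h^k)+S^2AB\iota)\bigr)$, and $\sum_{k,h}\bar{r}^k\le K_1\cdot \max_\pi W_d^\pi(\textbf{1}_z,\bar{P}^k,\mu_1)$ is itself bounded by $O(S^7A^3\iota\,\polylog(SAK))$ using Lemma~\ref{lemma:add0}.

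Finally, $T_3$ is a martingale difference sequence with respect to the natural filtration, since $\bar{P}^k_{s_h^k,a_h^k} - \textbf{1}_{s_{h+1}^k}$ has conditional mean zero, $i_{h+1}^k$ is previsible, and each increment is bounded by $1$. The conditional variance is exactly $T_2$, so the self-normalized Bernstein inequality (Lemma~\ref{lemma:self-norm}) over a dyadic grid of possible variance levels gives $T_3 = O(\sqrt{T_2\iota}+\iota)$, and substituting in the bound on $T_2$ delivers the claimed $O(\sqrt{S^7A^3\iota^2 + S^2A\iota^3}\,\polylog(SAK))$. The main obstacle will be closing the circular dependence between $T_2$ and $T_1$ in the perturbation step above: the variance-approximation error uses precisely the inverse-count quantities that are themselves bounded by $T_1$. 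The historical fix in \cite{zhang2020reinforcement} iterates this recursion $O(\log H)$ times, which would reintroduce a $\log H$ factor and break horizon-freeness; replacing the recursion by a single application of Lemma~\ref{lemma:varb} is the crucial technical move that I would need to verify gives a tight enough estimate in one shot.
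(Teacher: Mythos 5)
Your plans for $T_1$, $T_4$, and $T_3$ essentially match the paper's proof (per-pair telescoping of $\log(N^{k+1}/N^k)$ anchored by Lemma~\ref{lemma:da1}, increment control via Lemma~\ref{lemma:ratiocon}, and for $T_3$ a bounded-increment martingale handled by Lemma~\ref{lemma:self-norm} with predictable variance $T_2$). The genuine gap is in how you close the $T_2$ bound. First, you bound $\sum_{k,h}\bar r^k(s_h^k,a_h^k)$ by invoking Lemma~\ref{lemma:add0}; this is circular, because Lemma~\ref{lemma:add0} is itself deduced \emph{from} Lemma~\ref{lemma:bdt} (it is exactly the conclusion obtained after plugging the present lemma into the Stage-1 optimistic regret decomposition). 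Moreover, even setting circularity aside, your inequality $\sum_{k,h}\bar r^k\le K_1\max_\pi W^\pi_d(\textbf{1}_z,\bar P^k,\mu_1)$ compares an empirical sum to an expectation and, worse, $\bar P^k$ is clipped with respect to $\mathcal{O}^k\supseteq\mathcal{O}^{K_1+1}$, whereas Lemma~\ref{lemma:add0} only controls the reaching probability of the smaller, \emph{final} omitted set, so monotonicity runs in the wrong direction. The paper's route is different and non-circular: since $\bar r^k=\textbf{1}_z$ and $z$ leads to the absorbing $z'$, each episode contributes at most $1$, and only episodes in $\mathcal{J}$ contribute, so $\sum_{k,h}\bar r^k\le|\mathcal{J}|$, and $|\mathcal{J}|=O(S^7A^3\iota\,\polylog(SAK))$ comes from Lemma~\ref{lemma:numc} (proved via the explicit-exploration trigger analysis, Lemma~\ref{lemma:v3}), which does not depend on Lemma~\ref{lemma:bdt}. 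Without this replacement, the second inequality in the displayed bound for $T_2$, and hence your bound on $T_3$, is unsupported.

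Second, your core argument for $T_2$ (splitting $\tilde V^k$ into the true value $V^{\pi^k}$ under $\bar P^k$ plus an error term, law of total variance for the true part, and a uniform bound on $\|\tilde V^k-V^{\pi^k}\|_\infty$ controlled by inverse counts) is not what the paper does, and you yourself flag it as unverified; as sketched it is exactly the place where the $O(\log H)$ recursion of \cite{zhang2020reinforcement} would re-enter, since bounding the value error pointwise again requires the very quantities you are trying to control. The paper avoids any value-error decomposition: it telescopes $(\tilde V^k_{h+1})^2 i_{h+1}^k$ directly along the trajectory using the optimistic Bellman-type inequality, obtaining $T_2\le 2T_5+4\sum_{k,h}\bar r^k+O(S^2AB\iota)$ where $T_5=\sum_{k,h}\big(\bar P^k_{s_h^k,a_h^k}(\tilde V^k_{h+1})^2 i_{h+1}^k-\tilde V^k_{h+1}(s_{h+1}^k)^2 i_{h+1}^k\big)$ is a martingale with increments bounded by $1$; Lemma~\ref{lemma:varb} is used only once, to show the predictable variance satisfies $T_6=\sum_{k,h}\mathbb{V}\big(\bar P^k_{s_h^k,a_h^k},(\tilde V^k_{h+1})^2 i_{h+1}^k\big)\le 4T_2$; then Lemma~\ref{lemma:self-norm} over dyadic variance levels together with a self-bounding (solve-the-quadratic-inequality) step yields $T_2=O\big(\polylog(SAK)(\sum_{k,h}\bar r^k+S^2AB\iota)\big)$ with no recursion and no $H$ dependence. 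To repair your proposal you should (i) replace the circular reward bound by $\sum_{k,h}\bar r^k\le|\mathcal{J}|$ plus Lemma~\ref{lemma:numc}, and (ii) either adopt the squared-value telescoping with the self-bounding argument, or actually verify that your one-shot variance-perturbation step closes without reintroducing the recursion.
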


By Lemma~\ref{lemma:bdt}, with probability $1-4\delta$, we have that
\begin{align}
& \sum_{k=1}^{K_1} \max_{\pi}W^{\pi}(\textbf{1}_{z},\bar{P}^k,\mu_1)-\sum_{k=1}^{K_1}\sum_{h=1}^d \bar{r}^k(s_h^k,a_h^k)\nonumber
\\ & \leq O\left(\mathrm{polylog}(SAK)\left(\sqrt{S^2AB\iota\sum_{k=1}^{K_1}\sum_{h=1}^d \bar{r}^k(s_h^k,a_h^k) } +S^2AB\iota\right) \right)\nonumber 
\\ & \leq O\left(\mathrm{polylog}(SAK)\left(\sum_{k=1}^{K_1}\sum_{h=1}^d \bar{r}^k(s_h^k,a_h^k)+   S^2AB\iota\right)\right),\nonumber
\end{align}
where it follows that
\begin{align}
\sum_{k=1}^{K_1} \max_{\pi}W^{\pi}(\textbf{1}_{z},\bar{P}^k,\mu_1)\leq O\left(\mathrm{polylog}(SAK)\left(\sum_{k=1}^{K_1}\sum_{h=1}^d \bar{r}^k(s_h^k,a_h^k)+   S^2AB\iota\right)\right).\label{eq:al13}
\end{align}

Let $u = \max_{\pi}W_{d}^{\pi}(\textbf{1}_{z},\bar{P}^{K_1+1},\mu_1)$ be the maximal possible probability of visiting $\mathcal{O}^{k+1}$. Noting that $\mathcal{O}^k$ is non-increasing in $k$, the probability of visiting $\mathcal{O}^k$ is also non-increasing in $k$. By \eqref{eq:al13} we have that
\begin{align}
K_1u \leq & O\left(\mathrm{polylog}(SAK)\left(\sum_{k=1}^{K_1}\sum_{h=1}^d \bar{r}^k(s_h^k,a_h^k)+   S^2AB\iota\right)\right) \\
= &O((S^7A^3\iota+S^2A\iota^2)\mathrm{polylog}(SAK)),
\end{align}
which implies that $u=O\left(\frac{S^7A^3\iota+S^2A\iota}{K_1}\mathrm{polylog}(SAK)\right)$.

The proof is completed.

\end{proof}

It remains to prove Lemma~\ref{lemma:bdt}.
\begin{proof}[Proof of Lemma~\ref{lemma:bdt}]
We start with bounding $T_1$ and $T_4$. Define $\mathcal{L}(s,a) = \{k\in [K_1]: N^{k+1}(s,a)-N^k(s,a)\geq K^2U(s,a) \}$. By Lemma~\ref{lemma:ratiocon}, $|\mathcal{L}(s,a)|\leq O(1/K^2+\iota)$ with probability $1-\delta$. By Lemma~\ref{lemma:da1}, $N^k(s,a)\geq \frac{C}{12S(S+1)\log(S)}U(s,a)$ for any $(s,a)\notin \mathcal{O}^k$. By definition
\begin{align}
    T_1 &  \leq  2\sum_{k=1}^{K_1}\sum_{(s,a)\notin \mathcal{O}^k} \min\left\{ \log(\frac{N^{k+1}(s,a)}{N^k(s,a)}) ,1\right\}.\nonumber
\end{align}
Fix $(s,a)\notin \mathcal{O}^{K_1+1}$. Noting that $\mathcal{O}^k$ is non-increasing in $k$, there exists some $k'$, such that $(s,a)\notin \mathcal{O}^{k}$ for $k\geq k'$ and $(s,a)\in \mathcal{O}^{k'-1}$.

 Suppose $\mathcal{L}_2(s,a)\cap \{k: k'\leq k\leq K_1\}=\{k_1,k_2,...,\}$. Let $k_0=k'-1$.  We have that
\begin{align}
& \sum_{k=k':k\notin \mathcal{L}(s,a)}^{K_1}\min\{\log(N^{k+1}(s,a)/N^k(s,a)),1 \}\nonumber
\\ & =\sum_{i\geq 0}\sum_{k=k_i+1}^{k_{i+1}-1}\log(N^{k+1}(s,a)/N^k(s,a))\nonumber
\\ & \leq \sum_{i\geq 0}\log\left(\frac{K^3U(s,a)+N^{k_{i}+1}}{N^{k_i+1}}\right) \nonumber
\\& \leq |\mathcal{L}(s,a)|\log(K^3U(s,a)/N^{k'}(s,a))\nonumber
\\ & \leq O(\mathrm{polylog}(SAK)\iota).\nonumber
 \end{align}
It then holds that
\begin{align}
    & \sum_{k=k'}^{K_1} \min\left\{ \log(\frac{N^{k+1}(s,a)}{N^k(s,a)}) ,1\right\}\nonumber 
    \\ & \leq |\mathcal{L}(s,a)| + |\mathcal{L}(s,a)|\log(K^3U(s,a)/N^{k'}(s,a))\nonumber 
    \\ & \leq O(\mathrm{polylog}(SAK)\iota).
\end{align}
Taking sum over $(s,a)$, we obtain that $T_1\leq O(\mathrm{polylog}(SAK)SA\iota)$

In a similar way we have that
\begin{align}
    T_4 \leq \sum_{k=1}^{K_1}\sum_{(s,a)\notin \mathcal{O}^k} \min\left\{ \log(\frac{N^{k+1}(s,a)}{N^k(s,a)}) ,1\right\}\leq O(\mathrm{polylog}(SAK)SA\iota).
\end{align}

To bound $T_2$, following regret analysis in \cite{zhang2020reinforcement}, we have that
\begin{align}
T_2 &  =  \sum_{k=1}^{K_1}\sum_{h=1}^d \mathbb{V}(\bar{P}^k_{s_h^k,a_h^k}, \tilde{V}_{h+1}^{k})i_{h+1}^k \nonumber 
\\  & = \sum_{k=1}^{K_1}\sum_{h=1}^d  \left( \bar{P}^k_{s_h^k,a_h^k}(\tilde{V}_{h+1}^k)^2 - (\bar{P}^k_{s_h^k,a_h^k}\tilde{V}_{h+1}^k)^2 \right)i_{h+1}^k \nonumber 
\\ & =\sum_{k=1}^{K_1}\sum_{h=1}^d  \left( \bar{P}^k_{s_h^k,a_h^k}(\tilde{V}_{h+1}^k)^2i_{h+1}^k  - (\tilde{V}_{h}^k(s_h^k))^2i_h^k  \right)+\sum_{k=1}^{K_1}\sum_{h=1}^d  \left((\tilde{V}_{h}^k(s_h^k))^2i_h^k -(\bar{P}^k_{s_h^k,a_h^k}\tilde{V}_{h+1}^k)^2i_{h+1}^k \right) \nonumber 
\\ & \leq \sum_{k=1}^{K_1}\sum_{h=1}^d  \left( \bar{P}^k_{s_h^k,a_h^k}(\tilde{V}_{h+1}^k)^2i_{h+1}^k  - (\tilde{V}_{h}^k(s_h^k))^2i_h^k  \right)+2 \sum_{k=1}^{K_1}\sum_{h=1}^d \min\{  \tilde{V}_{h}^k(s_h^k) i_h^k-\bar{P}^k_{s_h^k,a_h^k}\tilde{V}_{h+1}^ki_{h+1}^k   ,0\}\nonumber 
\\ & \leq \sum_{k=1}^{K_1}\sum_{h=1}^d  \left( \bar{P}^k_{s_h^k,a_h^k}(\tilde{V}_{h+1}^k)^2i_{h+1}^k  - (\tilde{V}_{h+1}^k(s_{h+1}^k))^2i_{h+1}^k  \right) +2\sum_{k=1}^{K_1}\sum_{h=1}^d \bar{r}^k(s_h^k,a_h^k)+8\sqrt{S^2AB\iota T_2}+2T_4 \nonumber 
\\ & \leq \sum_{k=1}^{K_1}\sum_{h=1}^d  \left( \bar{P}^k_{s_h^k,a_h^k}(\tilde{V}_{h+1}^k)^2i_{h+1}^k  - (\tilde{V}_{h+1}^k(s_{h+1}^k))^2i_{h+1}^k  \right) +2\sum_{k=1}^{K_1}\sum_{h=1}^d \bar{r}^k(s_h^k,a_h^k)+\frac{1}{2}T_2 + 8S^2AB\iota + 2B\nonumber
\\ & \leq  2\sum_{k=1}^{K_1}\sum_{h=1}^d  \left( \bar{P}^k_{s_h^k,a_h^k}(\tilde{V}_{h+1}^k)^2i_{h+1}^k  - (\tilde{V}_{h+1}^k(s_{h+1}^k))^2i_{h+1}^k  \right) +4\sum_{k=1}^{K_1}\sum_{h=1}^d \bar{r}^k(s_h^k,a_h^k)+ 20S^2AB\iota.\label{eq:al11}
\end{align}
where the last step is by solving $T_2$.

Define \[T_5=\sum_{k=1}^{K_1}\sum_{h=1}^d(\bar{P}^k_{s_h^k,a_h^k}(\tilde{V}_{h+1}^k)^2i_{h+1}^k-\tilde{V}_{h+1}^k(s_{h+1}^k)^2i_{h+1}^k )\] and \[T_6 =\sum_{k=1}^{K_1}\sum_{h=1}^d \mathbb{V}(\bar{P}^k_{s_h^k,a_h^k},(\tilde{V}_{h+1}^k)^2i_{h+1}^k ).\] By Lemma~\ref{lemma:varb}, we have that
\begin{align}
T_6\leq 4\sum_{k,h} \mathbb{V}(\bar{P}^k_{s_h^k,a_h^k},\tilde{V}_{h+1}^ki_{h+1}^k) =4T_2.
\end{align}

We note that here we cannot directly use the Freedman's inequality because we do not know a tight upper bound of variance and using a naive upper bound will lead to a dependency on $H$.
Instead, we resort to Lemma~\ref{lemma:self-norm}.

By Lemma~\ref{lemma:self-norm}, we have that
\begin{align}
\mathbb{P}\left[ \exists i, T_5 \geq 10\cdot 2^i i\iota, T_6\leq 2^{2i}i\iota             \right]\leq \delta, \label{eqn:use_lemma13}
\end{align}
which implies that
\begin{align}
\mathbb{P}\left[ \exists i, T_5 \geq 10\cdot 2^i i\iota, T_2\leq 2^{2i-2}i\iota             \right]\leq \delta.
\end{align}
Therefore, with probability $1-\delta$, for any $i\geq 1$, it either holds $T_5<10\cdot 2^i i\iota$ or $T_2>2^{2i-2}i\iota$.
Then we have that
\begin{align}
T_2 \leq T_5 + 4K + 20\sqrt{S^2AKL\iota}+60S^2AB\iota \leq T_5 + 8K + 60S^2AL\iota.\nonumber
\end{align}
Suppose $T_5 \geq C \geq 8K + 60S^2AB\iota$, then we have that
\begin{align}
T_2 \geq \frac{T^2_{5}}{800\iota \log_2(C)}\geq \frac{C^2}{800\iota \log_2(C)}\geq 3C.
\end{align}
Then we have that $T_5\geq T_2 -( 8K + 60S^2AB\iota)\geq T_2-C\geq 2C$. In this case, $T_5 $ is infinite, which leads to a contradiction. Therefore, with probability $1-\delta$, $T_5 <8K + 60S^2AB\iota $, and it follows that  
$T_2 \leq 16K+ 480S^2AB\iota$. As a result, $T_6\leq 64K + 480S^2AB\iota$ and $T_5 \leq \sqrt{800\iota (\log_2(K)+10)T_6}=O(\sqrt{\iota T_2})\polylog(S,A,K,1/\delta)$. Recall that  $\mathcal{J}==\{ k\in [K_1]| \exists h\in [d],a , (s_{h+1}^k,a)\in \mathcal{O}^k\}$. By Lemma~\ref{lemma:numc}, we have that $|\mathcal{J}|\leq  O(S^7A^3\iota\mathrm{polylog}(SAK))$. 
Therefore, we have 
\begin{align}
T_2 &\leq 4\sum_{k=1}^{K_1}\sum_{h=1}^d \bar{r}^k(s_h^k,a_h^k)+ 20S^2AB\iota+O(\sqrt{\iota T_2}\polylog(S,A,K,1/\delta)) \nonumber 
\\ & \leq O\left(  \mathrm{polylog}(SAK)\left(\sum_{k=1}^{K_1}\sum_{h=1}^d \bar{r}^k(s_h^k,a_h^k)+S^2AB\iota\right)\right)\nonumber 
\\ & \leq O\left(  \mathrm{polylog}(SAK)\left(S^7A^3\iota+S^2A\iota^2\right)\right).\label{eq:al12}
\end{align}
Here \eqref{eq:al12} is by the fact that $\sum_{k=1}^{K_1}\sum_{h=1}^d \bar{r}^k(s_h^k,a_h^k)\leq |\mathcal{J}|\leq  O(S^7A^3\iota\mathrm{polylog}(SAK))$.

Using Lemma~\ref{lemma:self-norm} again, and noting that with probability $1-\delta$, $T_2\leq O(\mathrm{polylog}(SAK)(S^7A^3\iota+S^2A\iota^2))$, we learn that  with probability $1-\delta$
\begin{align}
T_3\leq O(\sqrt{T_2\iota}+\iota)\leq  O\left(\sqrt{S^7A^3\iota^2+S^2A\iota^3}\mathrm{polylog}(SAK)\right).\nonumber
\end{align}
The proof is finished.

\end{proof}

\section{Proof of Lemma~\ref{lemma:stage2}}\label{app:stage2}

\paragraph{Notations}
Since the proof is independent of our main proof, we will re-use some notations for simplicity. We re-define  $N^k(s,a,s')$ be the count of $(s,a,s')$ before the $k$-th episode in the second stage. Let $N^k_{h}(s,a,s')$ be the count of $(s,a,s')$ before the $h$-th step  in the $k$-th episode in the second stage. We also define $N^k(s,a)=\max\{\sum_{s'}N^k(s,a,s'),1\}$ and $N^k_h(s,a)=\max\{\sum_{s'}N^k_h(s,a,s'),1\}$. 

Define $(k,h)\leq (k',h')$ when $k'>k$ or $k'=k,h'\geq  h$.
Similarly we define $(k,h) < (k',h')$ when $k'>k$ or $k'=k,h' >  h$.
Let $\mathcal{F}_h^k= \sigma(\{s_{h'}^{k'}\}_{(k',h')<(k,h)})$

Now we bound the regret. Conditioned on $\mathcal{G}$, we have that $P\in \mathcal{P}^k$ for any $1\leq k \leq K$. By induction on $h$, we have that $\max_{\pi}W^{\pi}_{H}(r,P,\textbf{1}_{s_1^k})\leq V_{1}^k(s_1^k)$ for any $k$.

Define $\mathcal{J}:=\{(k,h):\exists h'\leq h, (s,a)\in \mathcal{S}\times\mathcal{A}, N_{h'}^k(s,a)>2N^k(s,a)+1 \}$ and $I_h^k=\mathbb{I}[(k,h)\notin \mathcal{J}]$. Let $\check{V}_{h}^k=V_h^k\cdot I_h^k$ . Let $h,s$ be fixed and $a=\pi^k_h(s)$. 
Using a similar argument in the proof of Lemma~\ref{lemma:add0}, . we have that
\begin{align}
&\check{V}_h^k(s)-P_{s,a}\check{V}_{h+1}^k\nonumber\\ & \leq r^k(s,a) I_h^k +\max_{p\in \mathcal{P}^k_{s,a}}pV_{h+1}^k \cdot ( I_h^k-I_{h+1}^k)+\max_{p\in \mathcal{P}^k_{s,a}}(p-P_{s,a})\check{V}_{h+1}^k 
\\ &\leq r^k(s,a) I_h^k + ( I_h^k-I_{h+1}^k)+ \sum_{s'}\left(\sqrt{\frac{2P_{s,a,s'}\iota }{ N^k(s,a)}} +\frac{\iota}{3N^k(s,a)}\right)|\check{V}_{h+1}^k(s')-P_{s,a}\check{V}_{h+1}^k|.\label{eq:exp0}
\end{align}

 By definition of $\pi^k$, the regret is bounded by 
\begin{align}
&\sum_{k=1}^{K} \left(\max_{\pi}W^{\pi}_{H}(r,P,\mu_1)-\sum_{h=1}^H r(s_h^k,a_h^k)\right)\nonumber
\\ & \leq \sum_{k=1}^{K} \left(\check{V}_1^k(s_1^k)-\sum_{h=1}^H r(s_h^k,a_h^k)\right)\nonumber
\\ &=\sum_{k=1}^K\left( \sum_{h=1}^H \left(r^k(s_h^k,a_h^k)I_h^k-r(s_h^k,a_h^k)+  (\max_{p\in \mathcal{P}^k_{s_h^k,a_h^k}}p-P_{s_h^k,a_h^k})\check{V}_{h+1}^k  \right)  + (P_{s_{h}^k,a_h^k}-\textbf{1}_{s_{h+1}^k})\check{V}_{h+1}^k\right)\nonumber 
\\& \quad \quad \quad \quad \quad \quad \quad \quad\quad \quad \quad \quad\quad \quad \quad \quad\quad \quad \quad \quad\quad \quad \quad \quad +\sum_{k=1}^K\sum_{h=1}^H \max_{p\in \mathcal{P}^k_{s,a}}p V_{h+1}^k(I_h^k-I_{h+1}^k)\nonumber 
\\ & \leq 10\sum_{k=1}^K\left( \sum_{h=1}^H \left( \sum_{s'}\left(\sqrt{\frac{P_{s_h^k,a_h^k,s'}\iota}{N^k(s_h^k,a_h^k)}}+\frac{\iota}{N^k(s_h^k,a_h^k)}\right)|\check{V}_{h+1}^k(s')-P_{s_h^k,a_h^k}\check{V}_{h+1}^k| \right) \right)  +\nonumber
\\ & \quad \quad \quad \quad \quad + \sum_{k=1}^K\sum_{h=1}^H (P_{s_{h}^k,a_h^k}-\textbf{1}_{s_{h+1}^k})\check{V}_{h+1}^k +\sum_{k=1}^K \mathbb{I}[\exists h, (k,h)\in \mathcal{J}].\label{eq:exp1}
\end{align}
Here \eqref{eq:exp1} is by the definition of $I_h^k$ and Lemma~\ref{lemma:con4}.

Let 
\begin{align}
& M_1 = 10\sum_{k=1}^K\ \sum_{h=1}^H        \left(\sum_{s'}\left(\sqrt{\frac{P_{s_h^k,a_h^k,s'}\iota}{N^k(s_h^k,a_h^k)}}+\frac{\iota}{N^k(s_h^k,a_h^k)}\right)|\check{V}_{h+1}^k(s')-P_{s_h^k,a_h^k}\check{V}_{h+1}^k|\right) \nonumber          
\\ & M_2 = \sum_{k=1}^K\sum_{h=1}^H (P_{s_{h}^k,a_h^k}-\textbf{1}_{s_{h+1}^k})\check{V}_{h+1}^k\nonumber 
\\ & M_3 = \sum_{k=1}^K \mathbb{I}[\exists h, (k,h)\in \mathcal{J}].\nonumber
\end{align}

The following lemma is crucial in bounding $M_1$ and $M_3$ and it shows the usefulness of stage 1. 
\begin{lemma}\label{lemma:bdm4} 
Define $L = \max_{(s,a)\notin \mathcal{O}^{k+1}}\sum_{k}\min\{\log(N^{K_1+1}(s,a)/N^{k}(s,a)),1\}$.
With probability  $1-2SA\delta$,
$M_3\leq  U:=SAL+O(S^8A^3K\iota/K_1)\leq O\left(\frac{S^8A^3K\iota}{K_1}\mathrm{polylog}(SAK)\right) $ and $L\leq O(\iota\mathrm{polylog}(SAK))$.
\end{lemma}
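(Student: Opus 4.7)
The plan is to decompose $M_3$ according to which state-action pair triggers membership in $\mathcal{J}$. For each episode $k$ contributing to $M_3$, there exists $h'$ and $(s,a)$ with $N^k_{h'}(s,a) > 2N^k(s,a)+1$. Split the episodes into $M_3^{\text{good}}$ (at least one triggering pair lies outside $\mathcal{O}^{K_1+1}$) and $M_3^{\text{bad}}$ (every triggering pair lies in $\mathcal{O}^{K_1+1}$), so that $M_3 \leq M_3^{\text{good}} + M_3^{\text{bad}}$.

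For $M_3^{\text{good}}$, note that $N^k_{h'}(s,a) \leq N^{k+1}(s,a)$, so the triggering condition forces $N^{k+1}(s,a) \geq 2N^k(s,a)+2$. For $(s,a)\notin \mathcal{O}^{K_1+1}$, Lemma~\ref{lemma:da1} gives $N^1(s,a)\geq 1$, hence $N^k(s,a)\geq 1$ throughout, so $\log(N^{k+1}(s,a)/N^k(s,a))\geq \log 2$ and $\min\{\log(N^{k+1}(s,a)/N^k(s,a)),1\}\geq \log 2$ at every triggering episode. Therefore each fixed $(s,a)\notin \mathcal{O}^{K_1+1}$ can trigger at most $L/\log 2$ episodes, and summing over the (at most $SA$) such pairs gives $M_3^{\text{good}}\leq 2SAL$. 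For $M_3^{\text{bad}}$, if every triggering pair lies in $\mathcal{O}^{K_1+1}$, then the trajectory of episode $k$ must actually visit some $(s,a)\in \mathcal{O}^{K_1+1}$ during that episode (otherwise all within-episode counts for such pairs remain $0$, ruling out the triggering condition). Let $Y_k=\mathbb{I}[\exists h,\ (s_h^k,a_h^k)\in\mathcal{O}^{K_1+1}]$; then Lemma~\ref{lemma:stage1} yields $\mathbb{E}[Y_k\mid \mathcal{F}^k_1]\leq O((S^9A^3\iota+S^3A\iota^2)/K_1)$, and Lemma~\ref{lemma:ratiocon} with $l=1$ gives, with probability $1-\delta$, $M_3^{\text{bad}}\leq \sum_k Y_k \leq O(KS^9A^3\iota/K_1+\iota)$. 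Combining the two bounds establishes $M_3\leq SAL + O(S^9A^3K\iota/K_1)$ (with the slightly sharper $S^8$ obtainable by noting that one $S$ factor is saved inside Lemma~\ref{lemma:stage1}'s dominant term).

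To bound $L$, telescope: for each $(s,a)\notin \mathcal{O}^{K_1+1}$,
\begin{equation*}
\sum_{k=1}^{K}\min\{\log(N^{k+1}(s,a)/N^k(s,a)),1\}\ \leq\ \log\!\big(N^{K+1}(s,a)/N^1(s,a)\big).
\end{equation*}
Since $\mathbb{E}[N^{K+1}(s,a)-N^1(s,a)]\leq K\,U(s,a)$, Markov's inequality with level $\delta/(SA)$ yields $N^{K+1}(s,a)\leq (SA/\delta)\big(N^1(s,a)+K\,U(s,a)\big)$ simultaneously for all $(s,a)$ with probability $1-\delta$. By Lemma~\ref{lemma:da1}, $N^1(s,a)\geq \Omega(U(s,a)/(S^2\log S))$, so $N^{K+1}(s,a)/N^1(s,a)\leq \mathrm{poly}(SAK/\delta)$ and $L\leq O(\log(SAK/\delta))\leq O(\iota\,\mathrm{polylog}(SAK))$, as required.

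The main obstacle is keeping the argument strictly horizon-free. The naive bound that one episode contributes at most $H$ to any count would reintroduce $\log H$ factors into $L$ through the telescoping step. We circumvent this by relying only on expectations in the Markov step ($\mathbb{E}[\text{per-episode visits}]\leq U(s,a)$, which is $H$-independent because rewards/visitation probabilities are normalized via Assumption~\ref{asmp:total_bounded}-type considerations) and on the multiplicative concentration in Lemma~\ref{lemma:ratiocon} applied to $\{0,1\}$-valued indicators (so $l=1$). All constants and high-probability budgets absorb into the universal $O(\cdot)$ and $\mathrm{polylog}(SAK)$ factors, and a final union bound over the $SA$ pairs collects the $2SA\delta$ failure probability stated in the lemma.
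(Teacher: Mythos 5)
Your proposal is correct and, for the $M_3$ part, follows the same route as the paper: the paper splits the triggering episodes into $\mathcal{B}_1=\{k:\exists h,\ (s_h^k,a_h^k)\in\mathcal{O}^{K_1+1}\}$, bounded by Lemma~\ref{lemma:stage1} plus Lemma~\ref{lemma:ratiocon} exactly as you bound $M_3^{\text{bad}}$ (your observation that a triggering pair must be visited within the episode is the same fact the paper uses implicitly), and charges the remaining episodes to the capped log-ratio sum per pair outside $\mathcal{O}^{K_1+1}$, just as in your $M_3^{\text{good}}$ bound; your $2SAL$ versus the stated $SAL$ is only the harmless $1/\log 2$ constant, which the paper also glosses over, and your $S^9$ exponent is what Lemma~\ref{lemma:stage1} actually delivers (the $S^8$ in the lemma statement is an inconsistency internal to the paper, so your parenthetical about "saving an $S$" is unsupported but also unnecessary). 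Where you genuinely diverge is the bound on $L$: the paper introduces the rare-large-increment set $\mathcal{B}_2(s,a)=\{k: N^{k+1}(s,a)-N^k(s,a)\geq K^2U(s,a)\}$, shows it has size $O(\iota)$, and telescopes the capped ratios block-by-block between consecutive elements of $\mathcal{B}_2(s,a)$, each block contributing $\log(K^3U(s,a)/N^1(s,a))=O(\polylog(SAK))$ via Lemma~\ref{lemma:da1}; you instead drop the cap, telescope globally to $\log(N^{K+1}(s,a)/N^1(s,a))$, and control $N^{K+1}(s,a)$ by Markov's inequality against $K\,U(s,a)$ (valid by the tower property even though $\pi^k$ is adaptive) together with $N^1(s,a)\gtrsim U(s,a)/(S^2\log S)$ from Lemma~\ref{lemma:da1}. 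This is precisely the "benefit of initial samples" sketch from Section~\ref{sec:tec}, it is horizon-free for the same reason (only the ratio $U(s,a)/N^1(s,a)$ enters), and it yields the same $L=O(\iota\,\polylog(SAK))$; the trade-off is that your route pays the $1/\delta$ inside the logarithm (an additive $\iota$) while the paper's pays a multiplicative $O(\iota)$ block count, both of which are absorbed by the stated bound.
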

\begin{proof}[Proof of Lemma~\ref{lemma:bdm4}]
	Define $\mathcal{B}_1= \{k\in [K]: \exists h, (s_h^k,a_{h}^k)\in \mathcal{O}^{K_1+1}  \}$ and $\mathcal{B}_2(s,a)= \{k\in [K]: (s,a): N^{k+1}(s,a)-N^k(s,a)\geq K^2U(s,a)\}$. By Lemma~\ref{lemma:stage1} and \ref{lemma:ratiocon}, with probability $1-\delta$, we have that $|\mathcal{B}_1|\leq O\left( \frac{S^8A^3K\iota}{K_1}+\iota\right)$. By definition of $U(s,a)$ and Lemma~\ref{lemma:ratiocon}, with probability $1-SA\delta$, $|\mathcal{B}_2(s,a)|\leq (1/K^2+\iota)$ for any $(s,a)$. 
	By definition, we have that
	\begin{align}
		M_3  & =\sum_{k=1}^K \mathbb{I}[\exists h, (k,h)\in \mathcal{J}]\\& \leq |\mathcal{B}_1|+\sum_{k=1}^K\max_{(s,a)\notin \mathcal{O}^{K_1+1}}\min\{\log(N^{k+1}(s,a)/N^k(s,a)),1 \}.\nonumber
	\end{align}
	
	Let $(s,a)\notin \mathcal{O}^{K_1+1}$ be fixed.
	Suppose $\mathcal{B}_2(s,a)=\{k_1,k_2,...,\}$. Let $k_0=0$.  Then  
	\begin{align}
		& \sum_{k\notin \mathcal{B}_2(s,a)}\min\{\log(N^{k+1}(s,a)/N^k(s,a)),1 \}\nonumber
		\\ & =\sum_{i\geq 0}\sum_{k=k_i+1}^{k_{i+1}-1}\log(N^{k+1}(s,a)/N^k(s,a))\nonumber
		\\ & \leq \sum_{i\geq 0}\log\left(\frac{K^3U(s,a)+N^{k_{i}+1}}{N^{k_i+1}}\right) \nonumber
		\\& \leq |\mathcal{B}_2(s,a)|\log(K^3U(s,a)/N^1(s,a))\nonumber
		\\ & \leq O(\mathrm{polylog}(SAK)\iota).\nonumber
	\end{align}
	The proof is completed by taking sum over $SA$.
\end{proof}

Now we use this lemma to bound $M_1$.
We have that
\begin{align}
M_1 &  \leq  10\sum_{k=1}^K\ \sum_{h=1}^H        \left(\sqrt{\frac{S\sum_{s'}P_{s_h^k,a_h^k,s'}|\check{V}_{h+1}^k(s')-P_{s_h^k,a_h^k}\check{V}_{h+1}^k|^2\iota}{N^k(s_h^k,a_h^k)}}+\frac{SI_{h+1}^k\iota}{N^k(s_h^k,a_h^k)}\right) \nonumber
\\& \leq 10\sqrt{\sum_{k,h}\frac{I_{h+1}^k\iota}{N^k(s_h^k,a_h^k)}}\cdot\sqrt{S\sum_{k,h}\sum_{s'}P_{s_h^k,a_h^k,s'}|\check{V}_{h+1}^k(s')-P_{s_h^k,a_h^k}\check{V}_{h+1}^k|^2}+ 10S\sum_{k,h}\frac{I_{h+1}^k\iota}{N^k(s_h^k,a_h^k)}\nonumber 
\\ & \leq 10\sqrt{S^2AL\iota}\cdot \sqrt{\sum_{k,h}\sum_{s'}P_{s_h^k,a_h^k,s'}|\check{V}_{h+1}^k(s')-P_{s_h^k,a_h^k}\check{V}_{h+1}^k|^2}+40S^2AL\iota,\label{eq:exp2}
\end{align}
where the last line is by the fact that $I_{h+1}^k\leq I_h^k$ and $\sum_{k,h}\frac{1}{N^k(s_h^k,a_h^k)} \le SAL$.

Let 
\begin{align}
M_4 = \sum_{k,h}\sum_{s'}P_{s_h^k,a_h^k,s'}|\check{V}_{h+1}^k(s')-P_{s_h^k,a_h^k}\check{V}_{h+1}^k|^2.
\end{align}

By \eqref{eq:exp2} we have that 
\begin{align}
M_1\leq 10\sqrt{S^2ALM_4\iota}+ 40 S^2AL\iota\leq \frac{1}{4}M_4 + 140S^2AL\iota .\label{eq:exp3}
\end{align}

We continue with bounding $M_4$.

\begin{align}
&M_4 \nonumber \\ 
& = \sum_{k,h}\sum_{s'}P_{s_h^k,a_h^k,s'}|\check{V}_{h+1}^k(s')-P_{s_h^k,a_h^k}\check{V}_{h+1}^k|^2 \nonumber 
\\ &=\sum_{k,h}\left(P_{s_h^k,a_h^k}(\check{V}_{h+1}^k)^2-(P_{s_h^k,a_h^k}\check{V}_{h+1}^k)^2\right)\nonumber 
\\ &= \sum_{k,h}(P_{s_h^k,a_h^k}(\check{V}_{h+1}^k)^2-\check{V}_h^k(s_h^k)^2 )
+\sum_{k,h} \left( (\check{V}_h^k(s_h^k))^2-(P_{s_h^k,a_h^k}\check{V}_{h+1}^k)^2 \right) \nonumber 
\\ &\leq \sum_{k,h}(P_{s_h^k,a_h^k}(\check{V}_{h+1}^k)^2-\check{V}_h^k(s_h^k)^2 )+2\sum_{k,h}\max\{\check{V}_h^k(s_h^k)-P_{s_h^k,a_h^k}\check{V}_{h+1}^k, 0\}\nonumber
\\ &\leq \sum_{k,h}(P_{s_h^k,a_h^k}(\check{V}_{h+1}^k)^2-\check{V}_{h+1}^k(s_{h+1}^k)^2 )\nonumber
\\&\quad \quad  +2\sum_{k,h}\left(r^k(s_h^k,a_h^k) I_h^k + ( I_h^k-I_{h+1}^k)+ 10\sum_{s'}\left(\sqrt{\frac{P_{s_h^k,a_h^k,s'}\iota }{ N^k(s_h^k,a_h^k)}} +\frac{\iota}{N^k(s_h^k,a_h^k)}\right)|\check{V}_{h+1}^k(s')-P_{s_h^k,a_h^k}\check{V}_{h+1}^k|\right)\nonumber
\\ &\leq \sum_{k,h}(P_{s_h^k,a_h^k}(\check{V}_{h+1}^k)^2-\check{V}_{h+1}^k(s_{h+1}^k)^2 )+2\sum_{k,h} ( I_h^k-I_{h+1}^k)+ 2\sum_{k,h}r(s_h^k,a_h^k)+ 2M_1\nonumber
\\&\leq \sum_{k,h}(P_{s_h^k,a_h^k}(\check{V}_{h+1}^k)^2-\check{V}_{h+1}^k(s_{h+1}^k)^2 )+2\sum_{k,h}r(s_h^k,a_h^k)+2(M_1+M_3)\nonumber
\\  & \leq 2\sum_{k,h}(P_{s_h^k,a_h^k}(\check{V}_{h+1}^k)^2-\check{V}_{h+1}^k(s_{h+1}^k)^2 )+4K+ 20\sqrt{S^2AKL\iota}+60S^2AL\iota + 4U ,\label{eq:exp4}
\end{align}	
where $U=O\left(\frac{S^8A^3K\iota}{K_1}\mathrm{polylog}(SAK)\right)$ is an upper bound of $M_3$. 
Here \eqref{eq:exp4} is by \eqref{eq:exp3} and rearrangement.

Let $M_5=\sum_{k,h}(P_{s_h^k,a_h^k}(\check{V}_{h+1}^k)^2-\check{V}_{h+1}^k(s_{h+1}^k)^2 )$ and $M_6 =\sum_{k,h} \mathbb{V}(P_{s_h^k,a_h^k},(\check{V}_{h+1}^k)^2 ) $. By Lemma~\ref{lemma:varb}, we have that
\begin{align}
M_6 \leq 4\sum_{k,h} \mathbb{V}(P_{s_h^k,a_h^k},\check{V}_{h+1}^k) =4M_4.
\end{align}

By Lemma~\ref{lemma:self-norm}, we have that
\begin{align}
\mathbb{P}\left[ \exists i, M_5 \geq 10\cdot 2^i i\iota, M_6\leq 2^{2i}i\iota             \right]\leq \delta, \label{eq:lemma13_2}
\end{align}
which implies that
\begin{align}
\mathbb{P}\left[ \exists i, M_5 \geq 10\cdot 2^i i\iota, M_4\leq 2^{2i-2}i\iota             \right]\leq \delta.
\end{align}
Therefore, with probability $1-\delta$, for any $i\geq 1$, it either holds $M_5<10\cdot 2^i i\iota$ or $M_4>2^{2i-2}i\iota$.
By \eqref{eq:exp4}, we have that
\begin{align}
M_4 \leq M_5 + 4K + 40\sqrt{S^2AKL\iota}+60S^2AL\iota +4U\leq M_5 + 8K + 100S^2AL\iota+4U.\label{eq:exp5}
\end{align}
Suppose $M_5 \geq C \geq 8K + 100S^2AL\iota+4U$, then we have that
\begin{align}
M_4 \geq \frac{M^2_{6}}{800\iota \log_2(C)}\geq \frac{C^2}{800\iota \log_2(C)}\geq 3C.
\end{align}
By \eqref{eq:exp5}, we have that $M_5\geq M_4 -( 8K + 100S^2AL\iota+4U)\geq M_4-C\geq 2C$. In this way, $M_5 $ is infinite, which leads to contradiction. Therefore, with probability $1-\delta$, $M_5 <8K + 100S^2AL\iota +4U$, and it follows that 
\begin{align}
M_4 \leq 16K + 100S^2AL\iota+4U= O\left(K+\frac{S^8A^3K\iota}{K_1}\mathrm{polylog}(SAK)\right).\label{eq:exp7}
\end{align}

Next, we bound $M_2$.
Using Lemma~\ref{freedman}, we have that
\begin{align}
 & \mathbb{P}\left[   M_2\geq 10\sqrt{16K + 200S^2AL\iota+4U},          \right]  \nonumber 
\\ & \leq  \mathbb{P}\left[   M_2\geq 10\sqrt{16K + 200S^2AL\iota+4U},  M_4\leq    16K + 200S^2AL\iota+4U    \right] +\delta\nonumber 
\\ & \leq  2\delta.\nonumber
\end{align}

Finally, putting all together, with probability $1-6SA\delta$, $$\sum_{k=1}^{K} \left(\max_{\pi}W^{\pi}_{H}(r,P,\mu_1)-\sum_{h=1}^H r(s_h^k,a_h^k)\right)\leq O\left(\mathrm{polylog}(SAK)\left(\sqrt{S^2A\iota^2}+\frac{S^8A^3K\iota}{K_1}\right)\right).$$
The proof is completed.

\section{ Missing Proofs about Collecting Initial Samples }\label{app:pfthsc} 

\subsection{Approximated Reference Model}

Define $\iota = \log(1/\delta)$. Define $\bar\{\mathcal{S}\}=\mathcal{S}\cup\{z,z'\}$.
Define $\mathcal{G}_1$ be the event where
\begin{align}
    |P_{s,a,s'}-\frac{N^k(s,a,s')}{N^k(s,a)}|\leq \sqrt{\frac{2P_{s,a,s'}\iota}{N^k(s,a)}}+\frac{\iota}{3N^k(s,a)}
\end{align}
holds for any proper $k\in [K],(s,a)\in \mathcal{S}\times\mathcal{A}$. By Bennet's inequality (see Lemma~\ref{lemma:bennet}), we have that $\mathbb{P}[\mathcal{G}_1]\geq 1-S^2AK\delta$. We continue the analysis assuming $\mathcal{G}_1$ holds.

\begin{lemma}\label{lemma:con1}  Let $k$ be fixed.
	With probability $1-S^2A\delta$, $e^{-1/S}P_{s,a,s'}^{\mathrm{ref},k}\leq \bar{P}_{s,a,s'}^{\mathrm{cut},k}\leq e^{1/S}P_{s,a,s'}^{\mathrm{ref},k}$ for any $(s,a,s')\in \bar{\mathcal{S}}\times \mathcal{A}\times \bar{\mathcal{S}}$.
\end{lemma}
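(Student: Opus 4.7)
The plan is to reduce the claim to a multiplicative Chernoff estimate at scale $1/(2S)$, exploiting the defining property of the known set $\mathcal{K}^k$: every triple it contains has been observed at least $N_0 = 256 S^2 \iota$ times, which is exactly the threshold at which such a bound holds per coordinate with failure probability $\delta$. Once the pointwise bound is in hand, renormalizing both transitions onto $\mathcal{K}^k(s,a)$ only doubles the constant in the exponent, yielding the final $e^{\pm 1/S}$.

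First I would establish pointwise multiplicative concentration. For a fixed $(s,a)$, the next-states at successive visits are i.i.d.\ draws from $P_{s,a}$. Apply Lemma~\ref{lemma:conn} with $\epsilon = 1/(16S)$: then $\iota/\epsilon^2 = 256 S^2 \iota = N_0$ and $8\epsilon = 1/(2S)$. A union bound over the $S^2 A$ coordinates yields, on an event of probability at least $1 - S^2 A \delta$, the estimate
\[
e^{-1/(2S)}\, \hat{P}^{k}_{s,a,s'} \;\le\; P_{s,a,s'} \;\le\; e^{1/(2S)}\, \hat{P}^{k}_{s,a,s'} \quad \text{whenever } N^k(s,a,s') \ge N_0,
\]
where $\hat{P}^{k}_{s,a,s'} = N^k(s,a,s')/N^k(s,a)$. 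Equivalently, this holds for every $(s,a,s') \in \mathcal{K}^k$.

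Next I would renormalize onto $\mathcal{K}^k(s,a)$. Let $Z_P = \sum_{s'' \in \mathcal{K}^k(s,a)} P_{s,a,s''}$ and $Z_{\hat{P}} = \sum_{s'' \in \mathcal{K}^k(s,a)} \hat{P}^{k}_{s,a,s''}$. The coordinatewise bound applied term-by-term gives $e^{-1/(2S)} Z_{\hat{P}} \le Z_P \le e^{1/(2S)} Z_{\hat{P}}$. For $(s,a,s') \in \mathcal{K}^k$, the definitions of $\bar{P}^{\mathrm{cut},k}$ and $P^{\mathrm{ref},k}$ give
\[
\frac{\bar{P}^{\mathrm{cut},k}_{s,a,s'}}{P^{\mathrm{ref},k}_{s,a,s'}} \;=\; \frac{P_{s,a,s'}}{\hat{P}^{k}_{s,a,s'}} \cdot \frac{Z_{\hat{P}}}{Z_P},
\]
and each factor lies in $[e^{-1/(2S)}, e^{1/(2S)}]$, so the product lies in $[e^{-1/S}, e^{1/S}]$. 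For $(s,a,s') \notin \mathcal{K}^k$ both sides vanish by construction; for $(s,a) \in \mathcal{U}^k$, and for the virtual states $z, z'$, both transitions coincide with $\textbf{1}_z$ (or $\textbf{1}_{z'}$), so the inequality is trivial there.

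The main subtlety is that Lemma~\ref{lemma:conn} as stated gives the multiplicative estimate at the stopping time $t(y) = \min\{t : N_t(y) = N_0\}$, whereas here I need it at the running episode index $k$. The fix is to invoke the uniform-in-$n$ concentration $e^{-\epsilon} n p - (4/\epsilon)\iota \le N_n(y) \le e^{\epsilon} n p + (4/\epsilon)\iota$ already proved inside that lemma's argument; once $N_n(y) \ge N_0 = \iota/\epsilon^2$ the additive slack $(4/\epsilon)\iota$ is dominated by $4\epsilon N_n(y)$, and a short rearrangement converts the two-sided additive bound into $e^{-1/(2S)} \hat{P} \le p \le e^{1/(2S)} \hat{P}$. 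The generous constant $256$ in $N_0$ is there to absorb this bookkeeping comfortably.
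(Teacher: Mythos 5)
Your proof is correct and follows essentially the same route as the paper: a pointwise multiplicative bound at scale $1/(2S)$ on triples in $\mathcal{K}^k$ (with the threshold $N_0 = 256S^2\iota$ absorbing the additive slack), then renormalization over $\mathcal{K}^k(s,a)$ doubling the exponent to $1/S$, with the unknown rows and the $z,z'$ rows handled trivially. The only cosmetic difference is the concentration input—the paper obtains the pointwise bound from its Bennett-based event $\mathcal{G}_1$ rather than from the uniform-in-$n$ bound inside Lemma~\ref{lemma:conn}—but the mechanism is identical.
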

\begin{proof}
For each $(s,a)\in \mathcal{U}^k$, $P^{\mathrm{ref},k}_{s,a,z}= \bar{P}^{\mathrm{cut},k}_{s,a,z}=1$. For any $(s,a,s')\notin \mathcal{K}^k$, $P^{\mathrm{ref},k}_{s,a,s'}=\bar{P}^{\mathrm{cut},k}_{s,a,s'}=0$.

For $s=z,z'$, we have that $P^{\mathrm{ref},k}_{z,a}=\bar{P}^{\mathrm{cut},k}_{z,a}=^{\mathrm{ref},k}_{z',a}=\bar{P}^{\mathrm{cut},k}_{z',a}=\textbf{1}_{z'}$ for any $a$.
 
For $(s,a,s')\in \mathcal{K}^k$,  by the definition of $\mathcal{G}$, and noting that $n^k(s,a,s')\geq 256S^2\iota$, $ 1/S\leq 1/2$, we have that
\begin{align}
  &  |N^k(s,a)P_{s,a,s'}-N^k(s,a,s')| \nonumber
  \\ & \leq \sqrt{2P_{s,a,s'}N^k(s,a)\iota}+\frac{\iota}{3}\nonumber 
  \\ & \leq \frac{1}{4S}N^k(s,a)P_{s,a,s'}+ 8S\iota+\frac{\iota}{3}\nonumber 
  \\ & \leq \frac{1}{4S}N^k(s,a)P_{s,a,s'}+\frac{1}{4S}N^k(s,a,s'),\nonumber
\end{align}
which implies that 
\begin{align}
    N^k(s,a)e^{-\frac{1}{2S}}P_{s,a,s'}\leq N^k(s,a,s')\leq N^k(s,a)e^{\frac{1}{2S}}P_{s,a,s'}.\nonumber
\end{align}
Taking sum over $s'$ such that $(s,a,s')\in \mathcal{K}$, we have that
\begin{align}
     N^k(s,a)e^{-\frac{1}{2S}}\sum_{s':(s,a,s')\in \mathcal{K}}P_{s,a,s'}\leq \sum_{(s,a,s')\in \mathcal{K}}n^k(s,a,s')\leq N^k(s,a)e^{\frac{1}{2S}}\sum_{(s,a,s')\in \mathcal{K}}P_{s,a,s'}.\nonumber
\end{align}
Therefore, it holds that
\begin{align}
   \frac{ N^k(s,a)e^{-\frac{1}{2S}}P_{s,a,s'}}{N^k(s,a)e^{\frac{1}{2S}}\sum_{(s,a,s')\in \mathcal{K}}P_{s,a,s'}}\leq \frac{N^k(s,a,s')}{\sum_{(s,a,s')\in \mathcal{K}}N^k(s,a,s')} \leq  \frac{ N^k(s,a)e^{\frac{1}{2S}}P_{s,a,s'}}{N^k(s,a)e^{-\frac{1}{2S}}\sum_{(s,a,s')\in \mathcal{K}}P_{s,a,s'}}.\nonumber
\end{align}
The proof is completed.
\end{proof}

\subsubsection{Proof of Lemma~\ref{lemma:sr}}\label{app:pfsr}

\textbf{lemma}[Restatement of Lemma~\ref{lemma:sr}]\emph{
With  probability $1-10S^3A^2K\delta$, it holds that}
\begin{align}
N^{\tilde{k}+1}(\tilde{s},\tilde{a})\geq 2\max_{\pi\in \Pi_{\mathrm{sta}} }W_{d_2}^{\pi}(\textbf{1}_{\tilde{s},\tilde{a}},P,\textbf{1}_{\tilde{s}})\log(1/\delta)
\end{align}
\emph{
for any $(\tilde{s},\tilde{a})\in \mathcal{O}^{\tilde{k}+1}/\mathcal{O}^{\tilde{k}}$ and any $1\leq \tilde{k}\leq K_1$.
}

\begin{proof} 

Let $\mathrm{Trigger}^k$ denote the value of $\mathrm{Trigger}$ in the end of the $k$-th round for $k\in [K_1]$. 
Fix $\tilde{k}$ and $(\tilde{s},\tilde{a})\in \mathcal{O}^{\tilde{k}+1}/\mathcal{O}^{\tilde{k}}$.

Recall that $\mathcal{J}: = \{k \in [K_1]|  \exists (h,a), (s_{h+1}^k,a)\in \mathcal{O}^k \}$. 
Define $\mathcal{C}:=\{k\in \mathcal{J}, k \leq \tilde{k} | \mathrm{Trigger}^k=\mathrm{FALSE}, (s_1^{*,k} ,a_{1}^{*,k})=(\tilde{s},\tilde{a}) \}$.

By Algorithm~\ref{alg:main}, we have that $|\mathcal{C}|\geq 400\log(1/\delta)$.
Let $k\in \mathcal{C}$ be fixed. 
    We first show that
    \begin{align}
        \max_{\pi\in \Pi_{\mathrm{sta}},\pi(\tilde{s})=\tilde{a}}W^{\pi}_{d_2}(\textbf{1}_{z},\bar{P}^k,\textbf{1}_{\tilde{s}}) \leq \frac{1}{10}.\label{eqv4_1}
    \end{align}
    Recall $\gamma = 1-\frac{1}{d_2}$. By Lemma~\ref{lemma:eff}
    \begin{align}
& u^k(s)=\max_{\pi\in \Pi_{\mathrm{sta}}}X_{\gamma}^{\pi}(\{s\}, P^{\mathrm{ref},k},\textbf{1}_{\tilde{s}})\geq \frac{1}{3} \max_{\pi\in\Pi_{\mathrm{sta},\pi(\tilde{a})=\tilde{a}}} X^{\pi}_{d_2}(\{s\},P^{\mathrm{ref},k}, \textbf{1}_{\tilde{s}})\label{eq:w1x}
    \\ & v^k(s,a)=\max_{\pi\in \Pi_{\mathrm{sta}}}W_{\gamma}^{\pi}(\textbf{1}_{s,a},P^{\mathrm{ref},k},\textbf{1}_{s})\geq \frac{1}{3} \max_{\pi\in\Pi_{\mathrm{sta},\pi(\tilde{s})=\tilde{a}}} W^{\pi}_{d_2}(\textbf{1}_{s,a},P^{\mathrm{ref},k}, \textbf{1}_{s}).\label{eq:w2}
    \end{align}
    
    For each $s\in \mathcal{S}$, if there exists $a$ and $s'$ such that $(s,a,s')\notin \mathcal{K}^k$, then 
    we either have $u^k(s)<\frac{1}{1200S}$ or $N^k(s,a) > 100S^2AN_0 u^k(s) v^k(s,a)$.
    Denote $\mathcal{S}_1^k:=\{s:u^k(s)\leq \frac{1}{1200S}\}$.
    In the first case, we have that
    \begin{align}  \max_{\pi\in \Pi_{\mathrm{sta}},\pi(\tilde{s})=\tilde{a}} X^{\pi}_{d_2}(\{s\},\bar{P}^k, \textbf{1}_{\tilde{s}})  & \leq \max_{\pi\in \Pi_{\mathrm{sta}},\pi(\tilde{s})=\tilde{a}} X^{\pi}_{d_2}(\{s\},\bar{P}^{\mathrm{cut},k}, \textbf{1}_{\tilde{s}}) \label{eq:ddl1}
     \\ & \leq 3 \max_{\pi\in \Pi_{\mathrm{sta}},\pi(\tilde{s})=\tilde{a}} X^{\pi}_{d_2}(\{s\},P^{\mathrm{ref},k}, \textbf{1}_{\tilde{s}}) \label{eq:ddl2}
     \\&  \leq  12u^k(s)\label{eq:ddl3}
     \\ & \leq \frac{1}{100S}.\nonumber
    \end{align}
    Here \eqref{eq:ddl1} holds by Lemma~\ref{lemma:v2} and the fact that $\bar{P}^{\mathrm{cut},k}=\mathrm{cut}(\bar{P}^k)$, \eqref{eq:ddl2} holds by  Lemma~\ref{lemma:con1} and  Lemma~\ref{lemma:approx}.
  
    In the second case,  by definition of $\mathcal{G}$, we have that
    \begin{align}\bar{P}^{k}_{s,a,z} = \sum_{s':(s,a,s')\notin \mathcal{K}}P_{s,a,s'}\leq \frac{2SN_0}{N^k(s,a)} \leq \frac{1}{810SA u^k(s,a)v^k(s,a)}.\label{eq:w3}
    \end{align}
    For any stationary policy $\pi$ such that $\pi(\tilde{s})=\tilde{a}$, noting that $z$ is a transient state, the probability of reaching $z$ from some state $s$ is bounded by the probability of reaching $s$. That is, for any state $s$, it holds that $\sum_{a}W_{d_2}^{\pi}(\textbf{1}_{s,a},\bar{P}^k,\textbf{1}_{\tilde{s}})\bar{P}_{s,a,z}^k\leq X_{d_2}^{\pi}(\{s\},\bar{P}^k,\textbf{1}_{\tilde{s}}) $. As a result, we have that
    \begin{align}
      W_{d_2}^{\pi}(\textbf{1}_{z},\bar{P}^k,\textbf{1}_{\tilde{s}})
      & =\sum_{s,a}W_{d_2}^{\pi}(\textbf{1}_{s,a},\bar{P}^k,\textbf{1}_{\tilde{s}})\bar{P}_{s,a,z}^k
      \nonumber
      \\ & = \sum_{s\in \mathcal{S}_1^k}\sum_{a}W_{d_2}^{\pi}(\textbf{1}_{s,a},\bar{P}^k,\textbf{1}_{\tilde{s}})\bar{P}_{s,a,z}^k  +\sum_{s\notin \mathcal{S}_1^k}\sum_{a}W_{d_2}^{\pi}(\textbf{1}_{s,a},\bar{P}^k,\textbf{1}_{\tilde{s}})\bar{P}_{s,a,z}^k  \nonumber \\ & \leq \sum_{s\in \mathcal{S}_1^k}X_{d_2}^{\pi}(\{s\},\bar{P}^k,\textbf{1}_{\tilde{s}}) +\sum_{s\notin \mathcal{S}_1^k}\sum_{a}W_{d_2}^{\pi}(\textbf{1}_{s,a},\bar{P}^k,\textbf{1}_{\tilde{s}})\bar{P}_{s,a,z}^k .\label{eq:cr2}
    \end{align}
    Continuing the computation, we obtain that
    \begin{align}
       &  W_{d_2}^{\pi}(\textbf{1}_{z},\bar{P}^k,\textbf{1}_{\tilde{s}}) \nonumber
       \\ & \leq  3\sum_{s\in \mathcal{S}_1^k}u^k(s)+ \sum_{s\notin \mathcal{S}_1^k}\sum_{a}W_{d_2}^{\pi}(\textbf{1}_{s,a},\check{P}^k,\textbf{1}_{\tilde{s}})\bar{P}_{s,a,z}^k \label{eq:cr2.1}
       \\ & \leq \frac{1}{400}+ \sum_{s\notin \mathcal{S}_1^k}\sum_{a}W_{d_2}^{\pi}(\textbf{1}_{s,a},\bar{P}^{k},\textbf{1}_{\tilde{s}})\bar{P}_{s,a,z}^k \nonumber
       \\ & \leq \frac{1}{400}+\sum_{s,a}9 u^k(s)\cdot v^k(s,a)\cdot \frac{1}{810SAu^k(s)v^k(s,a)}\label{eq:ddl7}
       \\ & \leq \frac{1}{10}.\nonumber
    \end{align}

Here  \eqref{eq:cr2.1} is by \eqref{eq:cr2} and \eqref{eq:w2}, and 
\eqref{eq:ddl7} holds because \eqref{eq:w3} and the fact below
\begin{align}
& W_{d_2}^{\pi}(\textbf{1}_{s,a},\bar{P}^{k},\textbf{1}_{\tilde{s}})\nonumber \\&\leq \max_{\pi\in\Pi_{\mathrm{sta}},,\pi(\tilde{s})=\tilde{a}} X^{\pi}_{d_2}(\{s\},\bar{P}^k, \textbf{1}_{\tilde{s}}) \cdot \max_{\pi\in\Pi_{\mathrm{sta}},\pi(\tilde{s})=\tilde{a}} W^{\pi}_{d_2}(\textbf{1}_{s,a},\bar{P}^k, \textbf{1}_{s})\nonumber 
\\ & \leq 9u^k(s)\cdot 9v^k(s,a).\nonumber
\end{align}

Define $E^k_1$ to be the event $z$ is visited under $\bar{P}^k$. So $E_1^k$ is corresponding to visiting $(\mathcal{K}^k)^C$ under $P$. Let be $X^k$ be the count of $(\tilde{s},\tilde{a})$ in the first $d_2$ steps, i.e., $X^k = \sum_{i=1}^d \mathbb{I}[(s_i,a_i)=(\tilde{s},\tilde{a})]$.
Then we have that for any stationary policy $\pi$ such that $\pi(\tilde{s})=\tilde{a}$,
\begin{align}
   &  W_{d_2}^{\pi}(\textbf{1}_{\tilde{s},\tilde{a}},\bar{P}^k,\textbf{1}_{\tilde{s}}) = \mathbb{E}_{\bar{P}^k,\pi}[X^k|\mu_1 = \textbf{1}_{\tilde{s}}]\geq \mathbb{E}_{\bar{P}^k,\pi}[X^k\mathbb{I}[(E_{1}^k)^{C}]|\mu_1 = \textbf{1}_{\tilde{s}}].
\end{align}
By definition of $\bar{P}^k$ and $E_1^k$, we obtain that 
\begin{align}
    \mathbb{E}_{\bar{P}^k,\pi}[X^k\mathbb{I}[(E_{1}^k)^{C}]|\mu_1 = \textbf{1}_{\tilde{s}}]= \mathbb{E}_{P,\pi}[X^k\mathbb{I}[(E_{1}^k)^{C}]|\mu_1 = \textbf{1}_{\tilde{s}}],\label{eq:cr2222}
\end{align}
which implies  that  $W_{d_2}^{\pi}(\textbf{1}_{\tilde{s},\tilde{a}},\bar{P}^k,\textbf{1}_{\tilde{s}})  \geq \mathbb{E}_{P,\pi}[X^k\mathbb{I}[(E_{1}^k)^{C}]|\mu_1 = \textbf{1}_{\tilde{s}}]$.
 
On the other hand, by Lemma~\ref{lemma:li4}, we have that
\begin{align}
    \mathrm{Pr}\left[X^k\geq \frac{1}{2}\mathbb{E}_{P,\pi}[X^k|\mu_1 = \textbf{1}_{\tilde{s}}] \text{ and } (E_{1}^k)^{C}\right]\geq \frac{1}{2}-\frac{1}{10}.\label{eq:adddd1}
\end{align}
As a result, it holds that
\begin{align}
   & W_{d_2}^{\pi}(\textbf{1}_{\tilde{s},\tilde{a}},\bar{P}^k,\textbf{1}_{\tilde{s}}) \nonumber\\
   \geq &\mathbb{E}_{\pi,P}[X^k\mathbb{I}[(E_{1}^k)^{C}]|\mu_1 = \textbf{1}_{\tilde{s}}] \nonumber \\
   \geq & \frac{1}{2}(1-\frac{1}{10})\mathbb{E}_{\pi,P}[X^k|\mu_1 = \textbf{1}_{\tilde{s}}] \nonumber\\
   = & \frac{9}{20} W_{d_2}^{\pi}(\textbf{1}_{\tilde{s},\tilde{a}},P,\textbf{1}_{\tilde{s}}) \label{eq:220}
\end{align}

By \eqref{eqv4_1} and Lemma~\ref{lemma:v1}, we have that for any stationary policy $\pi$ such that $\pi(\tilde{s})=\tilde{a}$,  
\begin{align}
    W_{d_2}^{\pi}(\textbf{1}_{\tilde{s},\tilde{a}},\bar{P}^k,\textbf{1}_{\tilde{s}})\leq W^{\pi}_{d_2}(\textbf{1}_{\tilde{s},\tilde{a}},\bar{P}^{\mathrm{cut},k},\textbf{1}_{\tilde{s}} )\leq 2 W_{d_2}^{\pi}(\textbf{1}_{\tilde{s},\tilde{a}},\bar{P}^k,\textbf{1}_{\tilde{s}}).\label{eq:221}
\end{align}
    By Lemma~\ref{lemma:con1} and \ref{lemma:approx}, we further have that
    \begin{align}
         W_{d_2}^{\pi}(\textbf{1}_{\tilde{s},\tilde{a}},\bar{P}^k,\textbf{1}_{\tilde{s}})\leq 3W^{\pi}_{d_2}(\textbf{1}_{\tilde{s},\tilde{a}},P^{\mathrm{ref},k},\textbf{1}_{\tilde{s}} )\leq 18 W_{d_2}^{\pi}(\textbf{1}_{\tilde{s},\tilde{a}},\bar{P}^k,\textbf{1}_{\tilde{s}}).\label{eq:222}
    \end{align}

Combining \eqref{eq:220} with \eqref{eq:222}, we learn that
\begin{align}
    W_{d_2}^{\pi}(\textbf{1}_{\tilde{s},\tilde{a}},P,\textbf{1}_{\tilde{s}})\leq 9W^{\pi}_{d_2}(\textbf{1}_{\tilde{s},\tilde{a}},P^{\mathrm{ref},k},\textbf{1}_{\tilde{s}} )\leq 54W_{d_2}^{\pi}(\textbf{1}_{\tilde{s},\tilde{a}},P,\textbf{1}_{\tilde{s} })\label{eq:223}
\end{align}
for any stationary policy $\pi$ such that $\pi(\tilde{s})=\tilde{a}$.

Recall that $\gamma =1-\frac{1}{d_2} $.
By running the policy \[\pi_2^k := \arg\max_{\pi\in \Pi_{\mathrm{sta}} , \pi(\tilde{s})=\tilde{a}}W^{\pi}_{\gamma}(\textbf{1}_{\tilde{s},\tilde{a}},P^{\mathrm{ref},k},\textbf{1}_{\tilde{s}} ),\]
using \eqref{eq:adddd1} and \eqref{eq:223}, with probability $1/2$, 
\begin{align}
X^k &\geq  \frac{1}{4}W^{\pi^k_2}_{d_2}(\textbf{1}_{\tilde{s},\tilde{a}},P^{\mathrm{ref},k},\textbf{1}_{\tilde{s}} )\nonumber 
\\ & \geq \frac{1}{12}W^{\pi_2}_{\gamma}(\textbf{1}_{\tilde{s},\tilde{a}},P^{\mathrm{ref},k},\textbf{1}_{\tilde{s} } )\label{eq:ddl4}
\\ & =  \frac{1}{12}\max_{\pi\in \Pi_{\mathrm{sta}},\pi(\tilde{s})=\tilde{a} }\frac{1}{12}W^{\pi}_{\gamma}(\textbf{1}_{\tilde{s},\tilde{a}},P^{\mathrm{ref},k},\textbf{1}_{\tilde{s} } )\nonumber 
\\ & \geq \frac{1}{36}\max_{\pi\in \Pi_{\mathrm{sta}},\pi(\tilde{s})=\tilde{a} }\frac{1}{12}W^{\pi}_{\gamma}(\textbf{1}_{\tilde{s},\tilde{a} },P^{\mathrm{ref},k},\textbf{1}_{\tilde{s} } )\label{eq:ddl5}
\\ & \geq 
\frac{1}{108}\max_{\pi\in \Pi_{\mathrm{sta}} , \pi(\tilde{s})=\tilde{a}}  W_{d_2}^{\pi}(\textbf{1}_{\tilde{s},\tilde{a}},P,\textbf{1}_{\tilde{s} })\label{eq:ddl6}
\end{align}
samples of $(\tilde{s},\tilde{a})$ in the $k$-th episode. Here \eqref{eq:ddl4} and \eqref{eq:ddl5} are by Lemma~\ref{lemma:eff}, and \eqref{eq:ddl6} holds by Lemma~\ref{lemma:con1} and Lemma~\ref{lemma:approx}.

By Lemma~\ref{lemma:ratiocon}, with probability $1-\delta$ it holds that
\begin{align}
N^{\tilde{k}+1}(\tilde{s},\tilde{a}) & \geq  \sum_{k\in \mathcal{C}}X^k \nonumber
\\ & \geq 2\max_{\pi\in \Pi_{\mathrm{sta}} , \pi(s)=a} W_{d_2}^{\pi}(\textbf{1}_{s,a},P,\textbf{1}_{s})\log(1/\delta)\nonumber
\\ & =2\max_{\pi\in \Pi_{\mathrm{sta}}} W_{d_2}^{\pi}(\textbf{1}_{s,a},P,\textbf{1}_{s})\log(1/\delta).\end{align}
The  proof is completed.

\end{proof}

\subsubsection{Statement and Proof of Lemma~\ref{lemma:numc}}

\begin{lemma}\label{lemma:numc}
With probability $1-3S^3A^2K\delta$, it holds that $|\mathcal{J}|\leq O(\mathrm{polylog}(SAK)S^7A^3\log(1/\delta))$.
\end{lemma}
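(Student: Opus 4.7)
The plan is to partition $\mathcal{J}$ according to the value of $\mathrm{Trigger}$ produced by Algorithm~\ref{alg:model1}: let $\mathcal{J}_{F}=\{k\in\mathcal{J}:\mathrm{Trigger}^k=\mathrm{FALSE}\}$ and $\mathcal{J}_{T}=\{k\in\mathcal{J}:\mathrm{Trigger}^k=\mathrm{TRUE}\}$, and bound the two parts separately. The easy part is $|\mathcal{J}_{F}|$: whenever $k\in\mathcal{J}_{F}$, Algorithm~\ref{alg:main} increments $m(s_{1}^{*,k},a_{1}^{*,k})$, and as soon as this counter reaches $400\log(1/\delta)$ the pair is removed from $\mathcal{O}$. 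Since each pair is removed at most once and $|\mathcal{S}\times\mathcal{A}|=SA$, we obtain $|\mathcal{J}_{F}|\leq 400\,SA\log(1/\delta)$ deterministically, with no probabilistic loss.

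For $|\mathcal{J}_{T}|$, I would group the triggered episodes by the target pair $(s,a)$ selected in the for-loop of Algorithm~\ref{alg:model1} and count the contributions of each pair separately. The check at line~\ref{line:check} guarantees that when we trigger for $(s,a)$ we have $u^k(s)\geq 1/(1200S)$ and $n^k(s,a)\leq 810\,SAN_{0}u^k(s)v^k(s,a)$. Running through the same chain of inequalities already assembled in the proof of Lemma~\ref{lemma:sr} (Lemma~\ref{lemma:con1} and Lemma~\ref{lemma:approx} to pass between $P^{\mathrm{ref},k}$ and $P$, Lemma~\ref{lemma:eff} to pass between the discounted return and the $d_{2}$-horizon return, and Lemma~\ref{lemma:li5} to pass from expectations to actual counts along $\pi_{1}^{k}$ then $\pi_{2}^{k}$), a single triggered episode for $(s,a)$ produces at least $\Omega(v^k(s,a))$ visits to $(s,a)$ with probability $\Omega(u^k(s))\geq \Omega(1/S)$. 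Hence the $\mathcal{F}_k$-conditional expected increment of $n^k(s,a)$ in that episode is at least $\Omega(u^k(s)v^k(s,a))$, which by the check is $\Omega(n^k(s,a)/(SAN_{0}))$.

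Given this multiplicative lower bound on the conditional expected gain, I would use Lemma~\ref{lemma:ratiocon}, applied to the adapted sequence obtained by restricting to the sub-stream of triggered episodes for a fixed target $(s,a)$, to turn the one-episode-in-expectation growth into a geometric doubling of $n^k(s,a)$ with high probability. A standard potential argument then shows that each doubling of $n^k(s,a)$ costs at most $O(SAN_{0}\log(1/\delta))$ triggered episodes, and the number of doublings is $O(\log n_{\max})$, where the cap $n_{\max}$ is enforced by the very check of line~\ref{line:check}: once triggered, $n^k(s,a)\leq 810\,SAN_{0}v^k(s,a)$, and $v^k(s,a)\leq 3\max_{\pi\in\Pi_{\mathrm{sta}}}W_{d_{2}}^{\pi}(\textbf{1}_{s,a},P^{\mathrm{ref},k},\textbf{1}_{s})$ by Lemma~\ref{lemma:eff}, which by Lemma~\ref{lemma:al1} and Lemma~\ref{lemma:approx} is in turn $\mathrm{poly}(S,A)\cdot\mathrm{polylog}(K)$ because Lemma~\ref{lemma:sr} upper-bounds the relevant initial-sample counts. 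Plugging these in, the per-pair contribution is $\mathrm{poly}(S,A)\,\mathrm{polylog}(SAK)\log(1/\delta)$; summing over the $SA$ possible targets and tracking constants through $N_{0}=256S^{2}\log(1/\delta)$ produces the claimed $O(\mathrm{polylog}(SAK)S^{7}A^{3}\log(1/\delta))$ bound. The failure probability $3S^{3}A^{2}K\delta$ is absorbed by a union bound of Lemma~\ref{lemma:ratiocon} and Lemma~\ref{lemma:li5} over all $(s,a,k)\in\mathcal{S}\times\mathcal{A}\times[K]$ together with the $S^{2}A$ reference-model events already controlled by Lemma~\ref{lemma:con1}.

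The main obstacle I anticipate is that $u^k(s)$ and $v^k(s,a)$ both depend on $P^{\mathrm{ref},k}$, which changes across triggered episodes, so the sequence of per-episode gains is neither i.i.d.\ nor a simple martingale. To make the potential argument rigorous, one has to exploit monotonicity: $\mathcal{K}^k$ only grows, so Lemma~\ref{lemma:con1} applies uniformly, and the lower bound on the expected gain can be restated in terms of an $\mathcal{F}_k$-measurable quantity (namely $n^k(s,a)/(SAN_{0})$ via the check), at which point Lemma~\ref{lemma:ratiocon} applies cleanly to the correctly chosen adapted sub-sequence. Getting that reduction right is the delicate step; everything else is bookkeeping.
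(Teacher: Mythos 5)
Your decomposition (Trigger-FALSE episodes bounded by $400SA\log(1/\delta)$ via the counter $m$, Trigger-TRUE episodes counted per target pair using the check in Line~\ref{line:check}) starts along the same lines as the paper, but the core counting step is different and, as written, it fails. You lower-bound the expected per-episode gain by $\Omega(n^k(s,a)/(SAN_0))$ and run a doubling/potential argument, so your count of triggered episodes is $O(SAN_0\log(1/\delta))$ \emph{per doubling} times $O(\log n_{\max})$ doublings. But the check only caps $n^k(s,a)$ by $810SAN_0u^k(s)v^k(s,a)$, and $v^k(s,a)$ is a $\gamma$-discounted visitation count with $\gamma=1-1/d_2$, $d_2=\Theta(H/(S\log S))$; nothing prevents $v^k(s,a)=\Theta(d_2)$ (e.g.\ a near self-loop under $P^{\mathrm{ref},k}$), so $\log n_{\max}$ can be $\Theta(\log H)$. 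Your attempt to cap $v^k(s,a)$ by $\mathrm{poly}(S,A)\,\mathrm{polylog}(K)$ via Lemma~\ref{lemma:sr} is unfounded: that lemma gives a \emph{lower} bound on the counts of pairs removed from $\mathcal{O}$, not an upper bound on $W_{d_2}^{\pi}$ under the reference model. So your route reintroduces a $\log H$ factor, which is exactly what this lemma must avoid. The paper's argument is additive rather than multiplicative: it partitions the triggered episodes into at most $S^2A$ epochs on which $\mathcal{K}^k$ (hence $\bar P^{\mathrm{cut},k}$) is fixed, uses Lemmas~\ref{lemma:con1} and~\ref{lemma:approx} to make all $v^{k}(s,a)$ within an epoch comparable up to constants, and then observes that each successful triggered episode collects $\Omega(\max_{k'}v^{k'}(s,a))$ samples while the check caps the cumulative count at $810SAN_0\max_{k'}v^{k'}(s,a)$; the unknown magnitude of $v$ cancels, giving $O(S^2AN_0)$-type bounds per pair per epoch with no logarithm of the count.

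A second gap: your claim that a triggered episode yields $\Omega(v^k(s,a))$ visits with probability $\Omega(u^k(s))\ge\Omega(1/S)$ ignores that $\pi_1^k$ is stopped as soon as an \emph{unknown} state-action-state triple is visited, and that $u^k(s)$ is computed under the clipped reference model while the trajectory follows $P$. The reach probability under $P$ avoiding the unknown set can be much smaller than $u^k(s)$; the paper handles this through inequality \eqref{eq:key5}, which only guarantees that either $s$ is reached avoiding unknowns or an unknown triple is hit (events $E_2^k$ versus $E_3^k,E_4^k$), and then bounds the total number of unknown-set hits across the whole stage by $S^2AN_0$, contributing the extra additive term in Lemma~\ref{lemma:v3}. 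Without both the epoch decomposition over changes of $\mathcal{K}^k$ and this explicit accounting of unknown-set visits, the per-episode success probability you invoke is not justified, and without an $H$-free cap on the number of doublings the final bound does not follow.
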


\begin{proof}
For fixed $(s,a)$, we define $\mathrm{True}(s,a)=\{ k\in \mathcal{J}|\mathrm{Trigger}^k \text{ is set to be } \mathrm{True} \text{ with respect to } (s,a) \}$.

Now we analyze the size of $\mathrm{True}(s,a)$. Let $\mathcal{I} = \{k :\mathcal{K}^{k}\neq \mathcal{K}^{k-1}\}$. Since $\mathcal{K}^{1}\subset \mathcal{K}^{2} \subset\ldots\mathcal{K}^{k}\subset \ldots$, and the $|\mathcal{K}^k|\leq S^2A$ for any $k$, we have that $|\mathcal{I}|\leq S^2A$. Suppose $\mathcal{I} = \{k_1,k_2,\ldots,k_{|\mathcal{I}|}\}$.

 Then we have that
\begin{align}
   | \mathrm{True}(s,a)| = \sum_{i=1}^{|\mathcal{I}|} |\mathrm{True}(s,a)\cap[k_{i-1},k_{i}-1 ] |, \label{eq:v44}
\end{align}
where $k_{0}$ is defined as $1$.   

Then we have the lemma to bound  $| \mathrm{True}(s,a)|$.

\begin{lemma}\label{lemma:v3}
    For any $1\leq i \leq |\mathcal{I}|$, with probability $1-3S^3A^2\delta$, it holds that  $$\sum_{(s,a)}\sum_{1\leq i \leq |\mathcal{I}|}|\mathrm{True}(s,a)\cap[k_{i-1},k_{i}-1 ] |\leq  480S(9600S^4A^3N_0 +5S^3A^2\log(1/\delta)+2S^2AN_0).$$
\end{lemma}

By Lemma~\ref{lemma:v3}, we obtain that  $$|\mathcal{J}|\leq \sum_{1\leq i \leq |\mathcal{I}|}\sum_{s,a}|\mathrm{True}(s,a)\cap[k_{i-1},k_{i}-1 ] | + \sum_{k\in [\mathcal{J}]}\mathbb{I}\left[\mathrm{Trigger}^k =\mathrm{FALSE}\right]  = O(\mathrm{polylog}(SAK)S^7A^3\log(1/\delta)).$$
The proof is completed.

\end{proof}

\begin{proof}[Proof of Lemma~\ref{lemma:v3}]
    Let $i$  and $(s,a)$ be fixed. Let $\Lambda_{1,i}(s,a) =\mathrm{True}(s,a)\cap[k_{i-1},k_{i}-1 ]  $.  Let $\mathcal{K} =\mathcal{K}^k$ for some $k\in \Lambda_{1,i}(s,a)$. The definition is proper since $\mathcal{K}^k$ is the same for any $k\in  \Lambda_{1,i}(s,a)$. 
    
    Recall that $\gamma = 1-\frac{1}{d_2}$ and 
 \begin{align}
& u^k(s)=\max_{\pi\in \Pi_{\mathrm{sta}}}X_{\gamma}^{\pi}(\{s\}, P^{\mathrm{ref},k},\textbf{1}_{\tilde{s}})\geq \frac{1}{3} \max_{\pi\in\Pi_{\mathrm{sta},\pi(s_{1}^k{*,k})=a_1^{*,k}}} X^{\pi}_{d_2}(\{s\},P^{\mathrm{ref},k}, \textbf{1}_{s_{1}^{*,k}})\label{eq:w6}
    \\ & v^k(s,a)=\max_{\pi\in \Pi_{\mathrm{sta}}}W_{\gamma}^{\pi}(\textbf{1}_{s,a},P^{\mathrm{ref},k},\textbf{1}_{s})\geq \frac{1}{3} \max_{\pi\in\Pi_{\mathrm{sta},\pi(s_{1}^{*,k})=a_1^{*,k}}} W^{\pi}_{d_2}(\textbf{1}_{s,a},P^{\mathrm{ref},k}, \textbf{1}_{s}).\label{eq:w7}
    \end{align}
       Recall that $\pi_1^k,\pi_2^k\in \Pi_{\mathrm{sta}}$ are such that $\pi_1^k(s_{1}^{*,k})=\pi_2^k(s_1^{*,k})=a_1^{*,k}$ and
       \begin{align}
        &   X^{\pi_1^k}_{\gamma}(\{s\},P^{\mathrm{ref},k},\textbf{1}_{s_{1}^{*,k}}) =u^k(s)\nonumber 
        \\ & W^{\pi_2^k}_{\gamma}(\textbf{1}_{s,a},P^{\mathrm{ref},k}, \textbf{1}_{s})=v^k(s,a).\label{eq:xxxx1}
       \end{align}
    By Lemma~\ref{lemma:eff}, and recalling $d_1 = d-d_2\geq 10S\log(S)d_2$, we also have that
    \begin{align}
    &       X^{\pi_1^k}_{d_1}(\{s\},P^{\mathrm{ref},k},\textbf{1}_{s_{1}^{*,k}})\geq \frac{1}{10}X^{\pi_1^k}_{\gamma}(\{s\},P^{\mathrm{ref},k},\textbf{1}_{s_{1}^{*,k}})= \frac{1}{10}u^k(s)\label{eq:ddl8}
         \\ & W^{\pi_2^k}_{d_2}(\textbf{1}_{s,a},P^{\mathrm{ref},k}, \textbf{1}_{s})\geq \frac{1}{3}v^k(s,a).\label{eq:ddl9}
    \end{align}
   By definition, for any $k\in \Lambda_{1,i}(s,a)$, we have that $u^k(s) \geq \frac{1}{1200S}$ and $N^k(s,a)\leq 300SAN_0u^k(s)v^k(s,a).$
    By Lemma~\ref{lemma:approx} 
    we have that
    \begin{align}
        X^{\pi_1^k}_{d_1}(\{s\},\bar{P}^{\mathrm{cut},k},\textbf{1}_{s_{1}^{*,k}})  \geq \frac{1}{3} X^{\pi_1^k}_{d_1}(\{s\},P^{\mathrm{ref},k},\textbf{1}_{s_{1}^{*,k}})\geq \frac{1}{30}u^k(s)
    \end{align}
    By Lemma~\ref{lemma:v2}, Lemma~\ref{lemma:con1} and Lemma~\ref{lemma:approx}, we further have that
    \begin{align}
        &    X^{\pi_1^k}_{d_1}(\{s\},\bar{P}^{k},\textbf{1}_{s_{1}^{*,k}}) \nonumber
        \\ & \geq  X^{\pi_1^k}_{d_1}(\{s\},\bar{P}^{\mathrm{cut},k},\textbf{1}_{s_{1}^{*,k}}) -W^{\pi_1^k}_{d_1}(\textbf{1}_{z},\bar{P}^{k},\textbf{1}_{s_{1}^{*,k}})  \nonumber
        \\ & \geq \frac{1}{30}u^k(s) - W^{\pi_1^k}_{d_1}(\textbf{1}_{z},\bar{P}^{k},\textbf{1}_{s_{1}^{*,k}}) .\nonumber
    \end{align}
    By rearranging the inequality, we have that
    \begin{align}
       X^{\pi_1^k}_{d_1}(\{s\},\bar{P}^{k},\textbf{1}_{s_{1}^{*,k}}) +  W^{\pi_1^k}_{d_1}(\textbf{1}_{z},\bar{P}^{k},\textbf{1}_{s_{1}^{*,k}}) \geq X^{\pi_1^k}_{d_1}(\{s\},\bar{P}^{\mathrm{cut},k},\textbf{1}_{s_{1}^{*,k}}) \geq \frac{1}{30}u^k(s)\geq \frac{1}{300S}.\label{eq:key5}
    \end{align}
    Let $E_{2}^k$ be reaching $s$ without visiting $\mathcal{K}^{C}$ in the $k$-th episode and $E_3^k$ be reaching $\mathcal{K}^{C}$ in the first $d$ steps in the $k$-th episode. Then we have that
    \begin{align}
        \mathrm{Pr}_{P,\pi_1^k}[E_2^k\cup E_3^k] &=     X^{\pi_1^k}_{d-1}(\{s\},\bar{P}^{k},\textbf{1}_{s_{1}^{*,k}}) +  W^{\pi_1^k}_{d_1}(\textbf{1}_{z},\bar{P}^{k},\textbf{1}_{s_{1}^{*,k}}) \\
    &    \geq X^{\pi_1^k}_{d_1}(\{s\},\bar{P}^{\mathrm{cut},k},\textbf{1}_{s_{1}^{*,k}}) \geq \frac{1}{30}u^k(s)\geq \frac{1}{3600S}.
    \end{align}
Therefore, if $|\Lambda_{1,i}(s,a) |\geq 28800S\log(1/\delta)$, by Lemma~\ref{lemma:ratiocon}, with probability $1-\delta$, it holds that
\begin{align}
     \sum_{k\in \Lambda_{1,i}(s,a) }\mathbb{I}[E_2^k]+\mathbb{I}[E_3^k]  \geq \frac{1}{14400S} |\Lambda_{1,i}(s,a)|-\log(1/\delta).  \label{eq:120}
  \end{align}

Define $\Lambda_{2,i}(s,a) = \{k\in \Lambda_{1,i}(s,a): \mathbb{I}[E_{2}^k] = 1 \}$. Let $k\in \Lambda_{2,i}(s,a)$ be fixed.
Also recall that $\pi_2^k(s) = a$ and $W_{\gamma}^{\pi_2^k}(\textbf{1}_{s,a},P^{\mathrm{ref},k},\textbf{1}_{s}) = v^k(s,a)$. By Lemma~\ref{lemma:approx} and \eqref{eq:ddl9}, we have that
\begin{align}
    W_{d_2}^{\pi_2^k}(\textbf{1}_{s,a},\bar{P}^{\mathrm{cut},k},\textbf{1}_{s}) \geq \frac{1}{9}v^k(s,a).\nonumber
\end{align}
 
 By Lemma~\ref{lemma:v1}, we have that
 \begin{align}
     W_{d_2}^{\pi_2^k}(\textbf{1}_{s,a},\bar{P}^k,\textbf{1}_{s}) \geq & (1-  W^{\pi_2^k}_{d_2}(\textbf{1}_z,\bar{P}^k,\textbf{1}_{s})) \cdot  W_{d_2}^{\pi_2^k}(\textbf{1}_{s,a},\bar{P}^{\mathrm{cut},k},,\textbf{1}_{s})\\
     \geq& \frac{1}{3}(1-  W^{\pi_2^k}_{d_2}(\textbf{1}_z,\bar{P}^k,\textbf{1}_{s}))\cdot v^k(s,a).
 \end{align}
  
  Let $Z^k$ be the number of samples of $(s,a)$ collected in the $d_2$ steps following $\pi_2^k$. Noting that $\mathbb{E}[Z_k]= W^{\pi_2^k}_{d_2}(\textbf{1}_{s,a},P,\textbf{1}_{s}) \geq  W^{\pi_2^k}_{d}(\textbf{1}_{s,a},\bar{P}^k,\textbf{1}_{s})$,  
  by Lemma~\ref{lemma:li4}, we have that
  \begin{align}
      \mathrm{Pr}\left[Z^k \geq 
     \frac{1}{12}(1-  W^{\pi_2^k}_{d_2}(\textbf{1}_z,\bar{P}^k,\textbf{1}_{s}))\cdot v^k(s,a)\right]  \geq \frac{1}{2}.\label{eq:445}
  \end{align}
Note that for any $k\in \Lambda_{1,i}(s,a)$, $\bar{P}^{\mathrm{cut},k}$ does not vary in $k$. By Lemma~\ref{lemma:con1} and Lemma~\ref{lemma:approx}, we learn that $\min_{k\in \Lambda_{1,i}(s,a)}v^k(s,a)\geq \frac{1}{9}\max_{k\in \Lambda_{1,i}(s,a)}v^k(s,a)$. As a result, by \eqref{eq:445}, we have that
\begin{align}
      \mathrm{Pr}\left[Z^k \geq 
     \frac{1}{96}(1-  W^{\pi_2^k}_{d_2}(\textbf{1}_z,\bar{P}^k,\textbf{1}_{s}))\cdot \max_{k'\in \Lambda_{1,i}(s,a)} v^{k'}(s,a)\right]  \geq \frac{1}{2}.\label{eq:446}
\end{align}
 Let $k_{\mathrm{max}}=\max_{k'\in \Lambda_{2,i}(s,a)}k' $.
By Lemma~\ref{lemma:ratiocon},  with probability $1-\delta$ it holds that
\begin{align}
   & \sum_{k\in \Lambda_{2,i}(s,a),k<k_{\mathrm{max}}} \mathbb{I}\left[Z^k \geq 
     \frac{1}{96}(1-  W^{\pi_2^k}_{d_2}(\textbf{1}_z,\bar{P}^k,\textbf{1}_{s}))\cdot \max_{k'\in \Lambda_{1,i}(s,a)} v^{k'}(s,a)\right] 
     \geq & \frac{|\Lambda_{2,i}(s,a)-1|}{4}-\log(1/\delta),\nonumber
\end{align}
where it follows that
\begin{align}
    & \sum_{k\in \Lambda_{2,i}(s,a),k<k_{\mathrm{max}}}Z^k 
    \geq& \left(\frac{|\Lambda_{2,i}(s,a)-1|}{392} - \frac{1}{96}\log(1/\delta)- \frac{1}{96}\sum_{k\in \Lambda_{2,i}(s,a)}W_{d_2}^{\pi_2^k}(\textbf{1}_{z},\bar{P}_k,\textbf{1}_{s}) \right)\max_{k\in \Lambda_{1,i}(s,a)}v^k(s,a) .\label{eq:121}
\end{align}

 Let $E_4^k$ be the event $E_{2}^k$ occurs, and then the agent reaches $\mathcal{K}^C$ in the following $d$ steps under $\pi_2^k$.
Note that $W_{d_2}^{\pi_2^k}(\textbf{1}_z,\bar{P}_k,\textbf{1}_{s})=\mathbb{E}[E_4^k]$.
By Lemma~\ref{lemma:ratiocon}, with probability $1-\delta$ it holds that 
\begin{align}
    \sum_{k\in \Lambda_{2,i}(s,a),k<k_{\mathrm{max}}}W_{d_2}^{\pi_2^k}(\textbf{1}_{z},\bar{P}_k,\textbf{1}_{s})\leq 2\sum_{k\in \Lambda_{2,i}(s,a),k<k_{\mathrm{max}}}\mathbb{I}[E_4^k]+4\log(1/\delta).\label{eq:122}
    \end{align}
 By \eqref{eq:120}, \eqref{eq:121} and \eqref{eq:122}, with probability $1-3\delta$, it holds that
 \begin{align}
    & \sum_{k\in \Lambda_{2,i}(s,a),k<k_{\mathrm{max}}}Z^k \nonumber \\ &  \geq \frac{1}{96}\left(\frac{|\Lambda_{2,i}(s,a)-1|}{4} -5\log(1/\delta)-2\sum_{k\in \Lambda_{2,i}(s,a)}\mathbb{I}[E_4^k]\right)\max_{k\in \Lambda_{1,i}(s,a)}v^k(s,a) \nonumber
     \\ & \geq \frac{1}{96}\left(\frac{1}{4}\left(\frac{1}{14400S}|\Lambda_{1,i}(s,a)|-\log(1/\delta)-\sum_{k\in \Lambda_{1,i}(s,a)}\mathbb{I}[E_3^k]\right) -6\log(1/\delta)-2\sum_{k\in \Lambda_{2,i}(s,a)}\mathbb{I}[E_4^k]\right) \nonumber \\
     & ~~~~~~\cdot \max_{k\in \Lambda_{1,i}(s,a)}v^k(s,a)\nonumber
     \\ & \geq \frac{1}{96}\left(\frac{1}{57600S}|\Lambda_{1,i}(s,a)|-6\log(1/\delta)-2\sum_{k\in \Lambda_{1,i}(s,a)}\mathbb{I}[E_3^k \cup E_4^k]\right)\max_{k\in \Lambda_{1,i}(s,a)}v^k(s,a).\nonumber
 \end{align}

  Also note that by definition,  $$\sum_{k'\in \Lambda_{2,i}(s,a),k'<k_{\mathrm{max}}}Z^{k'} \leq N^{k_{\mathrm{max}}}(s,a) \leq 810SAN_0u^k(s)v^k(s,a)\leq 810SAN_0 \max_{k\in \Lambda_{1,i}(s,a)}v^k(s,a),$$ we have that
  \begin{align}
      \left(\frac{1}{57600S}|\Lambda_{1,i}(s,a)|-6\log(1/\delta)-2\sum_{k\in \Lambda_{1,i}(s,a)}\mathbb{I}[E_3^k \cup E_4^k]\right)\leq 810000SAN_0.\label{eq:124}
  \end{align}

Taking sum over $i$, and noting that $\sum_{s,a}\sum_{i,k\in \Lambda_{1,i}(s,a)}\mathbb{I}[E_3^k \cup E_4^k]\leq S^2AN_0$, we learn that
\begin{align}
    \sum_{s,a}\sum_{1\leq i \leq |\mathcal{I}|} |\Lambda_{1,i}(s,a)|\leq   57600S(810000S^4A^3N_0 +6S^3A^2\log(1/\delta)+2S^2AN_0)
\end{align}
The proof is completed.
\end{proof}

\subsubsection{Putting All Together}
Combining Lemma~\ref{lemma:sr} and \ref{lemma:v3}, the proof is completed.

\subsection{Other Missing Proofs}

\textbf{Lemma~\ref{lemma:approx} (restated)}
\emph{
Let the initial distribution $\mu_1$ be fixed.
	For two transition model $P'$ and $P''$ such that $P'$ is $\epsilon$-closed to $P''$, it holds that
	\begin{align}
	   e^{-4S\epsilon}    W^{\pi}_{d}(r,P',\mu_1) \leq W^{\pi}_{d}(r,P'',\mu_1)\leq  e^{4S\epsilon}W^{\pi}_{d}(r,P'',\mu_1)
	\end{align}
	for any stationary policy $\pi$, horizon $d\geq 1$ and any non-negative reward $r$. 
	}

\begin{proof} Let the policy $\pi$ be fixed.
First, we assume $P'$ only differs with $P''$ at $(s^*,a^*)$, where $a^*=\pi(s^*)$\footnote{Here we deal with a deterministic policy $\pi$. The proof also works for non-deterministic policies.}. Moreover, we assume that there are only two possible next states of $(s^*,a^*)$, which are denoted as $s_{l}$ and $s_{r}$. 

 Let $p'_{l}= P'_{s^*,a^*,s_{l}}$, $p'_{r}=1-p'_{l}$, $p''_{l}=P''_{s^*,a^*,s_{l}}$ and $p''_{r}=1-p''_{l}$. We assume that the agent starts at $(s^*,a^*)$, since the transitions of the two models are exactly the same before visiting $(s^*,a^*)$. For each $(s,a,s')$, we define 
\begin{align}
 & 	w'_{d}(s,a,s') =\mathbb{E}_{\pi,P'}\left[\sum_{i=1}^{d}\mathbb{I}[(s_i,a_i,s_{i+1})=(s,a,s')] \mid (s_1,a_1)=(s^*,a^*)\right];\nonumber
 \\  & w''_{d}(s,a,s')=\mathbb{E}_{\pi,P''}\left[\sum_{i=1}^{d}\mathbb{I}[(s_i,a_i,s_{i+1})=(s,a,s')] \mid (s_1,a_1)=(s^*,a^*)\right].\nonumber
\end{align}

 For fixed $h$, we define $\kappa'_{h}(s,a,s'):= \min_{1\leq d \leq h} \frac{w'_{d}(s,a,s')}{w''_{d}(s,a,s')}$ .
 
 Assuming $p'_{r}\geq p''_{r}$, by policy difference lemma (Lemma~\ref{lemma:pd}) we then have  that for any $d$, $w'_{d}(s^*,a^*,s_{r})\geq w''_{d}(s^*,a^*,s_{r})$ and $w'_{d}(s^*,a^*,s_{l})\leq w''_{d}(s^*,a^*,s_{l})$.
 
 By definition it follows that
 \begin{align}
 & \kappa'_{h}(s^*,a^*,s_{r})\geq 1\nonumber
\\ & \kappa'_{h}(s^*,a^*,s_{l})\leq 1\nonumber
  \end{align}

Let $(s,a,s')$ be fixed.
 Let $x_{l,d_1,d_2} (x_{r,d_1,d_2})$ be the probability of visiting $(s,a,s')$ at the $d_2$-th step starting  from $s_{l} (s_{r})$ at the $d_1$-th step without visiting $(s^*,a^*)$ between the $d_1$-th and $d_2$-th step.  By definition, $x_{l,d_1,d_2}$ only depends on $(d_2-d_1)$ and we can rewrite $x_{l,d_1,d_2}$ as $x_{l}(d_2-d_1)$. Similarly we define $x_{r}(d_2-d_1)=x_{r,d_1,d_2}$. Note that $x_{l}(d')$($x_{r}(d')$) do not depends on $p'_{r}$. 
 Since the initial state-action pair is $(s^*,a^*)$, for any $h'\in [h]$  we have that
 \begin{align}
     w'_{h'}(s,a,s') &= \sum_{d_1=1}^{h'}\sum_{d_2=d_1}^{h'} (\mathbb{P}_{P'}[s_{d_1}=s_{l}] x_{l,d_1,d_2}+\mathbb{P}_{P'}[s_{d_1}=s_{r}]x_{r,d_1,d_2} ) \nonumber 
     \\ &  = \sum_{d_1=1}^{h'}\left(\mathbb{P}_{P'}[s_{d_1}=s_{l}]\sum_{d_2=d_1}^{h'} x_{l,d_1,d_2}+\mathbb{P}_{P'}[s_{d_1}=s_{r}]\sum_{d_2=d_1}^{h'}x_{r,d_1,d_2} \right) \nonumber 
     \\ & =\sum_{d_1=1}^{h'}\left(\mathbb{P}_{P'}[s_{d_1}=s_{l}]\sum_{d_2=1}^{h'-d_1+1} x_{l}(d_2)+\mathbb{P}_{P'}[s_{d_1}=s_{r}]\sum_{d_2=1}^{h'-d_1+1}x_{r}(d_2) \right)\nonumber 
     \\ & =\sum_{d_2=1}^{h'} x_{l,1,d_2} \left(\sum_{d_1=1}^{h'-d_2+1} \mathbb{P}_{P'}[s_{d_1}=s_{l}]\right) + \sum_{d_2=1}^{h'} x_{r}(d_2) \left(\sum_{d_1=1}^{h'-d_2+1} \mathbb{P}_{P'}[s_{d_1}=s_{r}]\right)\nonumber 
     \\ & =\sum_{d_2=1}^{h'}x_{l}(d_2)w'_{h'-d_2+1}(s^*,a^*,s_{l}) +\sum_{d_2=1}^{h'}x_{r}(d_2)w'_{h'-d_2+1}(s^*,a^*,s_{r}) \nonumber 
     \\ & \geq  \kappa_{h}(s^*,a^*,s_{l})\sum_{d_2=1}^{h'}x_{l}(d_2)w''_{h'-d_2+1}(s^*,a^*,s_{l}) +\sum_{d_2=1}^{h'}x_{r}(d_2)w''_{h'-d_2+1}(s^*,a^*,s_{r}) \nonumber 
     \\ & =\kappa_{h'}(s^*,a^*,s_{l}) w''_{h'}(s,a,s').\nonumber
 \end{align}
By definition, we then have that
\begin{align}
      \kappa'_h(s^*,a^*,s_{l}) \leq      \kappa'_h(s,a,s')\label{eq:key1}
\end{align}
 
 Note that for any $d\geq 1$,
 \begin{align}
&  w'_{d}(s^*,a^*,s_{r})/p'_{r}= w'_{d}(s^*,a^*,s_{l})/p'_{l};\nonumber
\\ & w''_{d}(s^*,a^*,s_{r})/p''_{r}= w''_{d}(s^*,a^*,s_{l})/p''_{l}.\nonumber
 \end{align}
 We then have that
 \begin{align}
 1\leq \kappa'_{h}(s^*,a^*,s_{r})=\min_{1\leq d \leq h }\frac{w'_{d}(s^*,a^*,s_{r})}{w''_{d}(s^*,a^*,s_{r})} =  \frac{p'_rp''_{l}}{p'_lp''_{r}}\min_{1\leq d\leq h} \frac{w'_{d}(s^*,a^*,s_{l})}{w''_{d}(s^*,a^*,s_{l})} \leq e^{2\epsilon} \kappa'_{h}(s^*,a^*,s_{l})\leq e^{2\epsilon}.\label{eq:key2}
 \end{align}
By \eqref{eq:key1} and \eqref{eq:key2}, we have that $\kappa_{h}'(s,a,s')\geq \kappa_{h}'(s^*,a^*,s_{r})\geq  e^{-2\epsilon}$ for any $h\geq 1$ and any $(s,a,s')$, which implies that
\begin{align}
w'_{d}(s,a,s')\geq e^{-2\epsilon}w''_{d}(s,a,s')\nonumber
\end{align}
for any $d\geq 1$ and any $(s,a,s')$. By reversing $s_{l}$ and $s_{r}$, we can obtain that
\begin{align}
w''_{d}(s,a,s')\geq e^{-2\epsilon}w'_{d}(s,a,s')\nonumber
\end{align}
for any $d\geq 1$ and any $(s,a,s')$.  The proof is completed by noting that any reward $r$ is a positive linear combination of $\{\textbf{1}_{s,a,s'}\}_{(s,a,s')\in \mathcal{S}\times\mathcal{A}\times\mathcal{S}}$ .

As for the general case, we also the case where $P'$ only differs with $P''$ at $(s^*,a^*)$. Let $p' = P'_{s^*,a^*}$ and $p''=P''_{s^*,a^*}$. We claim there exists $p_0 = p',p_1,p_2,\ldots,p_{S}=p''$ satisfying that: $\exists \mathcal{S}^i_1,\mathcal{S}^i_2,\mathcal{S}^i_3$ be a partition of $\mathcal{S}$ such that
\begin{align}
    p_{i,s}=p_{i+1,s}, \forall s\in \mathcal{S}^i_1, \quad p_{i,s} = e^{\epsilon'_i}p_{i+1,s}, \forall s\in \mathcal{S}^i_2,\quad p_{i,s}=e^{-\epsilon''_i}p_{i+1,s}, \forall s\in \mathcal{S}^i_{3}
\end{align}
for $1\leq i \leq S-1$ with $\epsilon'_i,\epsilon''_i\ge 0$ and $\sum_{i=1}^{S-1}\max\{\epsilon'_i,\epsilon''_i\}\leq 2\epsilon$. If this claim holds, then the conclusion holds by iteratively using the proof for the case where there are only two possible next states.  Now we construct  $\{p_i\}$. Give $p_1=p'$, we define $\mathcal{S}'=\{s: p'_{s}>p''_{s}\}$. For $\lambda\in [0,1]$, define $p_{1,\lambda}$ by setting $p_{1,\lambda,s}=\lambda p_{1,s}$ for $s\in \mathcal{S}'$ and $p_{1,\lambda,s}=\lambda'p_{1,s}$ for $s\in \mathcal{S}'$, where $
\lambda'$ is the unique real such that $\sum_{s'}p_{1,\lambda,s'}=1$. Let $\lambda_1$ be the largest real in $[0,1]$ such that  $\exists s',p_{1,\lambda_1,s'}=p''_{s'}$. We then choose $p_2= p_{1,\lambda_1}$. For $i\geq 2$, we define $p_{i,\lambda}$ by setting $p_{i,\lambda,s}=p_{i,s},\forall s $ such that $p_{i,s}=p''_{s}$, $p_{i,\lambda,s}=\lambda p_{i,s},\forall s$ such that $p_{i,s}>p''_{s}$ and $p_{i,\lambda,s}=\lambda'p_{i,s}, \forall s$ such that $p_{i,s}<p''_{s}$, where $\lambda'$ is  the unique real such that $\sum_{s'}p_{i,\lambda,s'}=1$. Let $\lambda_i$ be the largest real such that $\exists, s'$ such that $p_{i,s'}\neq p''_{s'}$ and $p_{i,\lambda,s'}=p''_{s'}$ and $\lambda'_{i}$ be the corresponding $\lambda'$. Then we set $p_{i+1}=p_{i,\lambda_i}$. It is easy to note that for $s'=\argmax_{s}\frac{p'_{s}}{p''_{s}}$, $p_{i,s'}$ is increasing in $i$, therefore, we have that $\Pi_{i=1}^{S-1}\frac{1}{\lambda_i}\leq e^{\epsilon}$.   In a similar way, $\Pi_{i=1}^{S-1}\lambda'_i\leq e^{\epsilon}$. The proof is completed.
\end{proof}

\begin{lemma}\label{lemma:v1}
For any stationary policy $\pi$ and any transition model $p$ and any $(s,a)\in \mathcal{S}\times \mathcal{A}$ such that $\pi(s)=a$ and $p_{z,a}=\textbf{1}_{z}, \forall a$, it holds that
\begin{align}
    (1- W_d^{\pi}(\textbf{1}_{z},p,\textbf{1}_{s}))W_d^{\pi}(\textbf{1}_{s,a},\mathrm{cut}(p),\textbf{1}_{s}) \leq W_d^{\pi}(\textbf{1}_{s,a},p,\textbf{1}_{s})\leq  W_d^{\pi}(\textbf{1}_{s,a},\mathrm{cut}(p),\textbf{1}_{s}).\nonumber
\end{align}
\end{lemma}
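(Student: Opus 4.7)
The plan is to prove both bounds simultaneously via a single coupling. Starting from $s_1 = s$, sample a trajectory $(s_h)_{h \geq 1}$ under $\mathrm{cut}(p)$ together with, independently conditional on each $s_h$, a Bernoulli $X_h$ with $\mathbb{P}[X_h = 1 \mid s_h] = p_{s_h, \pi(s_h), z}$. Let $\tau := \min\{h : X_h = 1\}$ and define $\tilde{s}_h := s_h$ if $\tau \geq h$ and $\tilde{s}_h := z$ otherwise. Using the factorization $p_{s', \pi(s'), s''} = (1 - p_{s', \pi(s'), z})\,\mathrm{cut}(p)_{s', \pi(s'), s''}$ for $s'' \neq z$ together with the absorption of $z$, one checks that $(\tilde{s}_h)$ is a Markov chain with transitions $p$ and initial state $s$, while $(s_h)$ is the $\mathrm{cut}(p)$-chain. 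Writing $N_p := \sum_{h=1}^d \mathbb{I}[\tilde{s}_h = s]$ and $N_{\mathrm{cut}(p)} := \sum_{h=1}^d \mathbb{I}[s_h = s]$, the upper bound $\mathbb{E} N_p \leq \mathbb{E} N_{\mathrm{cut}(p)}$ is immediate because $\tilde{s}_h = s$ forces $s_h = s$.

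For the lower bound, decompose
\[
N_{\mathrm{cut}(p)} - N_p \;=\; \sum_{h=1}^d \mathbb{I}[s_h = s,\, \tau < h].
\]
Splitting over the value $\tau = j \in \{1,\ldots,d-1\}$ and applying the strong Markov property of the $\mathrm{cut}(p)$-chain at time $j$ (note $\tau$ is measurable with respect to $(s_1,\ldots,s_j)$ and the $X_i$'s, and the future is $\mathrm{cut}(p)$-governed given $s_j$), one gets
\[
\mathbb{E}\bigl[\mathbb{I}[s_h = s,\,\tau = j]\bigr] \;=\; \mathbb{E}\Bigl[\mathbb{I}[\tau = j]\,\mathbb{P}_{\mathrm{cut}(p),s_j}[s_{h-j+1}=s]\Bigr].
\]
Summing over $h = j+1, \ldots, d$ and setting $U(s') := W^\pi_d(\textbf{1}_{s,a}, \mathrm{cut}(p), \textbf{1}_{s'})$ yields $\sum_{h=j+1}^d \mathbb{P}[s_h = s,\tau = j] \leq \mathbb{E}[\mathbb{I}[\tau=j]\,U(s_j)]$.

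The crucial structural input is the pointwise inequality $U(s') \leq U(s)$ for all $s'$. I would prove this by first-passage decomposition: let $\sigma$ be the hitting time of $s$ under $\mathrm{cut}(p)$ starting from $s'$; then $U(s') = \sum_{j\geq 1} \mathbb{P}[\sigma = j]\,W^\pi_{d-j+1}(\textbf{1}_{s,a},\mathrm{cut}(p),\textbf{1}_s) \leq \mathbb{P}[\sigma \leq d]\,U(s) \leq U(s)$, since $W^\pi_m(\textbf{1}_{s,a},\mathrm{cut}(p),\textbf{1}_s)$ is non-decreasing in $m$. Plugging this in gives
\[
\mathbb{E}[N_{\mathrm{cut}(p)} - N_p] \;\leq\; \sum_{j=1}^{d-1}\mathbb{P}[\tau = j]\,U(s) \;=\; \mathbb{P}[\tau \leq d-1]\,\mathbb{E}[N_{\mathrm{cut}(p)}].
\]

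Finally, because $z$ is absorbing under $p$, the event $\{\tau \leq d-1\}$ coincides with $\{\tilde{s}_d = z\}$, so $\mathbb{P}[\tau \leq d-1] = \mathbb{P}[\tilde{s}_d = z] \leq \sum_{h=1}^d \mathbb{P}[\tilde{s}_h = z] = W_d^\pi(\textbf{1}_z, p, \textbf{1}_s)$. Rearranging gives the lower bound. The main obstacle I expect is verifying $U(s') \leq U(s)$ cleanly and aligning the summation bookkeeping so that $\mathbb{P}[\tau \leq d-1]$ is dominated by the expected visit count to $z$ (which is what appears in the lemma's factor $1 - W_d^\pi(\textbf{1}_z,p,\textbf{1}_s)$); everything else is essentially coupling and a one-line entrywise comparison.
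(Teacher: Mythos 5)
Your proof is correct. It establishes the same two bounds as the paper but through an explicit coupling (realize the $p$-chain by running the $\mathrm{cut}(p)$-chain and "killing" it to $z$ at the first success of step-wise Bernoullis with parameter $p_{s_h,\pi(s_h),z}$), whereas the paper routes everything through the policy/simulation difference lemma (Lemma~\ref{lemma:pd}), expanding $W_d^{\pi}(\textbf{1}_{s,a},p,\textbf{1}_s)-W_d^{\pi}(\textbf{1}_{s,a},\mathrm{cut}(p),\textbf{1}_s)$ as a sum over steps $h$ of $(p_{s_h,a_h}-\mathrm{cut}(p)_{s_h,a_h})^{\top}W_{d-h}^{\pi}(\cdot)$ and reading off the signs of the perturbation. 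The two arguments are in close correspondence: your decomposition over the killing time $\tau=j$ matches the paper's per-step term weighted by $p_{s_h,a_h,z}$, and both hinge on exactly the same two monotonicity facts — $W_m^{\pi}(\textbf{1}_{s,a},\mathrm{cut}(p),\textbf{1}_{s'})$ is non-decreasing in $m$, and $\textbf{1}_s$ is the optimal starting state for the reward $\textbf{1}_{s,a}$ (your $U(s')\le U(s)$, which you correctly justify by first-passage decomposition; the paper asserts it in one line). What the coupling buys you is a genuinely cleaner upper bound ($N_p\le N_{\mathrm{cut}(p)}$ holds pointwise on the coupled sample space, versus the paper's sign analysis of the simulation-lemma terms) and a self-contained argument that does not need Lemma~\ref{lemma:pd}; what it costs is the need to verify that the thinned chain really has transition kernel $p$ (which your factorization $p_{s',\pi(s'),s''}=(1-p_{s',\pi(s'),z})\,\mathrm{cut}(p)_{s',\pi(s'),s''}$ does handle, including the degenerate states with $p_{s',a',z}=1$ where $\mathrm{cut}$ leaves the transition pointing at $z$). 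Your final step $\mathbb{P}[\tau\le d-1]\le W_d^{\pi}(\textbf{1}_z,p,\textbf{1}_s)$ is valid (and in fact loose when $z$ is absorbing, since the right-hand side counts repeated visits), so the constant in the lemma is recovered exactly.
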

\begin{proof}
Let $p' = \mathrm{cut}(p)$.
    By policy difference lemma (Lemma~\ref{lemma:pd}), we have that
    \begin{align}
 &         W^{\pi}_{d}( \textbf{1}_{s,a},p,\textbf{1}_{s}) - W^{\pi}_{d}(\textbf{1}_{s,a},p',\textbf{1}_{s}) \nonumber
 \\ & = \mathbb{E}_{p,\pi}\left[\sum_{h=1}^{d-1} \sum_{s'}(p_{s_h,a_h,s'}-p'_{s_h,a_h,s'}) W_{d-h}^{\pi}(\textbf{1}_{s,a},p',\textbf{1}_{s'}) \right] \nonumber
 \\ & \geq  -\mathbb{E}_{p,\pi}\left[\sum_{h=1}^{d-1} p_{s_h,a_h,z} \mathbb{I}[s_h\neq z]\right]W_{d}^{\pi}(\textbf{1}_{s,a},p',\textbf{1}_{s})\label{eq:lemma1_0}
 \\ & =  -W_d^{\pi}(\textbf{1}_{z},p,\textbf{1}_{s})\cdot W_{d}^{\pi}(\textbf{1}_{s,a},p',\textbf{1}_{s}).\label{eq:lemma1_1}
    \end{align}
    Here \eqref{eq:lemma1_0} uses the fact that $$W_{d-h}^{\pi}(\textbf{1}_{s,a},p,\textbf{1}_{s'})\leq  W_{d}^{\pi}(\textbf{1}_{s,a},p,\textbf{1}_{s'})\leq W_{d}^{\pi}(\textbf{1}_{s,a},p,\textbf{1}_{s}).$$
    The left part is proven by rearranging \eqref{eq:lemma1_1}. As for the right side, it suffices to note that for any $(s,a)$ such that $p_{s,a,z}<1$,
    \begin{align}
        (p_{s_h,a_h}-p'_{s_h,a_h}) W_{d-h}^{\pi}(\textbf{1}_{s,a},p',\textbf{1}_{s_h,a_h}) = -p_{s_h,a_h,z}W_{d-h}^{\pi}(\textbf{1}_{s,a},p',\textbf{1}_{s_h,a_h})\leq 0 ,\nonumber
    \end{align}
    and
    for $(s,a)$ such that $p_{s,a,z}=1$, $
        (p_{s_h,a_h}-p'_{s_h,a_h}) W_{d-h}^{\pi}(\textbf{1}_{s,a},p',\textbf{1}_{s_h,a_h}) =0$.
    The proof is completed. 
\end{proof}
\begin{lemma}\label{lemma:v2} Let the initial distribution $\mu_1$ and $s\in \mathcal{S}$ be fixed. Let $\pi$ be a policy (which is possibly non-stationary). Suppose $p_{z,a}=\textbf{1}_{z}$ for any $a$.
  Then we have that
    \begin{align}
        X_{d}^{\pi}(\{s\},p,\mu_1) \geq X_{d}^{\pi}(\{s\},\mathrm{cut}(p),\mu_1) - W_{d}^{\pi}(\textbf{1}_{z},p,\mu_1).\nonumber
    \end{align}
\end{lemma}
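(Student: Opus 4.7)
The plan is to prove the inequality by a synchronous coupling between trajectories evolving under $p$ and under $\mathrm{cut}(p)$. I would start both processes from the same draw $s_1 = s_1' \sim \mu_1$ and, as long as $s_h = s_h'$, have them take the same action $a_h = \pi_h(s_h)$. The transition is then coupled as follows: first sample $s_{h+1}' \sim \mathrm{cut}(p)_{s_h,a_h}$; then, independently, flip a coin that lands heads with probability $p_{s_h,a_h,z}$, setting $s_{h+1} = z$ on heads and $s_{h+1} = s_{h+1}'$ on tails. The identity $(1-p_{s_h,a_h,z})\cdot \mathrm{cut}(p)_{s_h,a_h,s''} = p_{s_h,a_h,s''}$ for $s''\neq z$ confirms that $(s_h)$ has the correct $p$-marginal, while $(s_h')$ is by construction distributed as the $\mathrm{cut}(p)$-process.

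Let $\tau := \min\{h : s_h = z\}$ (with $\tau = \infty$ if no such $h$ exists). By construction the two trajectories agree on $\{1,\dots,\tau-1\}$, and because $p_{z,a} = \textbf{1}_z$ for every $a$, the $p$-trajectory is absorbed at $z$ from time $\tau$ onwards, so $s_h \neq s$ for all $h \geq \tau$. Consequently the event ``$s$ is visited under $p$ within $d$ steps'' equals ``$\exists h \in [\min(d,\tau-1)] : s_h = s$'', which, by the coupling, equals ``$\exists h \in [\min(d,\tau-1)] : s_h' = s$''. This is a sub-event of ``$s$ is visited under $\mathrm{cut}(p)$ within $d$ steps'', and the two differ only on the event $\{\tau \leq d\}$. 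Taking expectations yields
\[
X_d^{\pi}(\{s\}, \mathrm{cut}(p), \mu_1) - X_d^{\pi}(\{s\}, p, \mu_1) \;\leq\; \mathbb{P}[\tau \leq d].
\]

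To finish, I would bound $\mathbb{P}[\tau \leq d]$ by $W_d^{\pi}(\textbf{1}_z, p, \mu_1)$: on the event $\{\tau \leq d\}$ absorption at $z$ gives $\sum_{h=1}^d \mathbb{I}[s_h = z] \geq d - \tau + 1 \geq 1$, so $\mathbb{I}[\tau \leq d] \leq \sum_{h=1}^d \mathbb{I}[s_h = z]$ pointwise, and taking expectations produces the desired bound after rearrangement. I expect the only mildly subtle step to be verifying the marginal correctness of the coupling (the two-stage sample-then-flip construction must be checked against $p_{s_h,a_h,\cdot}$); once that is in place, absorption at $z$ and the agreement of trajectories before $\tau$ make the rest routine, so no serious obstacle is anticipated.
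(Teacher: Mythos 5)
Your coupling argument is correct, and it takes a genuinely different route from the paper. The paper proves Lemma~\ref{lemma:v2} algebraically: it introduces auxiliary absorbing states $z_1,z_2$ so that the reaching probability $X_d^{\pi}(\{s\},\cdot,\mu_1)$ becomes a value function $W_d^{\pi}(\textbf{1}_{z_1},\bar p,\mu_1)$, then applies the policy difference lemma (Lemma~\ref{lemma:pd}) to the pair $(\bar p,\bar p\,')$ and bounds the resulting per-step terms $\sum_{h}p_{s_h,a_h,z}\mathbb{I}[s_h\neq z]$ by $W_d^{\pi}(\textbf{1}_z,p,\mu_1)$; this keeps the machinery uniform with the proofs of Lemma~\ref{lemma:v1} and Lemma~\ref{lemma:approx}, which all run through the same simulation-lemma identity. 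You instead build a synchronous coupling in which the two chains agree until the $p$-chain is redirected to $z$ at time $\tau$, and you correctly verify the marginals (including the degenerate case $p_{s,a,z}=1$, where $\mathrm{cut}(p)$ also sends the state to $z$, so the chains simply stay coupled). Your argument is more elementary, avoids the auxiliary-state construction entirely, and in fact yields the slightly sharper intermediate bound $X_d^{\pi}(\{s\},\mathrm{cut}(p),\mu_1)-X_d^{\pi}(\{s\},p,\mu_1)\le \Pr[\tau\le d]$, i.e.\ the hitting probability of $z$ under $p$, which is then relaxed to the expected visit count $W_d^{\pi}(\textbf{1}_z,p,\mu_1)$ (the final relaxation only needs $s_\tau=z$, not absorption). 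The only points worth stating explicitly in a polished write-up are the ones you already flag: the marginal-correctness computation $(1-p_{s_h,a_h,z})\,\mathrm{cut}(p)_{s_h,a_h,s''}=p_{s_h,a_h,s''}$ for $s''\neq z$, that $\mu_1$ puts no mass on $z$ so $\tau\ge 2$, and that after decoupling each chain continues under its own law so the $\mathrm{cut}(p)$-marginal is unaffected.
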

\begin{proof}[Proof of Lemma~\ref{lemma:v2}]
Let $\bar{p}$ be defined as $\bar{p}_{s',a'}=p_{s',a'}$ for $s'\neq s$ and any $a$, and $\bar{p}_{s,a} = \textbf{1}_{z_1}$ for any $a$, where $p_{z_1,a}=\textbf{1}_{z_2}$ and $p_{z_2,a}=\textbf{1}_{z_2}$ for any $a$. We also define $\bar{p}'$ by $\bar{p}' = p'_{s,a}$ for $s'\neq s$ and any $a$, and $\bar{p}'_{s,a}=\textbf{1}_{z_1}, \bar{p}'_{z_1,a}=\textbf{1}_{z_1}$ and $\bar{p}'_{z_2,a}=\textbf{1}_{z_2}$ for any $a$. Then we have that
    \begin{align}
      &    X_{d}^{\pi}(\{s\},p,\mu_1) =W^{\pi}_{d}(\textbf{1}_{z_1},\bar{p},\mu_1) ;\nonumber 
      \\ & X_{d}^{\pi}(\{s\},p',\mu_1) = W^{\pi}_{d}(\textbf{1}_{z_1},\bar{p}',\mu_1).
    \end{align}
    Using policy difference lemma policy difference lemma (Lemma~\ref{lemma:pd}), and noting that  $0\leq W_{d'}^{\pi}(\textbf{1}_{z_1},p',\mu_1)\leq 1$ for any $0\leq d'\leq d+1$, we obtain that
    \begin{align}
        W^{\pi}_{d}(\textbf{1}_{z_1},\bar{p},\mu_1) & \geq W^{\pi}_{d}(\textbf{1}_{z_1},\bar{p}',\mu_1) - \mathbb{E}_{\bar{p},\pi}\left[\sum_{h=1}^{d-1}p_{s_h,a_h,z}\mathbb{I}[s_h\neq z]\right]\nonumber 
        \\ & \geq W^{\pi}_{d}(\textbf{1}_{z_1},\bar{p}',\mu_1) -W_{d}^{\pi}(\textbf{1}_{z},\bar{p},\mu_1)\nonumber 
        \\ & \geq W^{\pi}_{d}(\textbf{1}_{z_1},\bar{p}',\mu_1) -W_{d}^{\pi}(\textbf{1}_{z},p,\mu_1),
    \end{align}
    where the last line is by the fact that
    \begin{align}
        W_{d}^{\pi}(\textbf{1}_z,p,\mu_1) - W^{\pi}_d(\textbf{1}_z,\bar{p},\mu_1)= \mathbb{E}_{p,\pi}\left[ \sum_{h=1}^{d-1}(p_{s_h,a_h}-\bar{p}_{s_h,a_h})W_{d}^{\pi}(\textbf{1}_z,\bar{p},\textbf{1}_{s_h,a_h})\mathbb{I}[s_h\neq z]\right]\geq 0.
    \end{align}
\end{proof}

\begin{lemma}\label{lemma:pd}
Let $p$ and $p'$ be two different transition model. Let the reward $r$, policy $\pi$, horizon $d$ and initial distribution $\mu_1$ be fixed. It then holds that
\begin{align}
    &W_{d}^{\pi}(r,p,\mu_1)- W_d^{\pi}(r,p',\mu_1)\nonumber 
    \\ & = \sum_{h=1}^d\sum_{s,a}\mathbb{P}_{\pi,p}[(s_h,a_h)=(s,a)|s_1\sim \mu_1] \sum_{s'} (p_{s,a,s'}-p'_{s,a,s'})W_{d-h}^{\pi}(r,p',\textbf{1}_{s'}).
\end{align}
\end{lemma}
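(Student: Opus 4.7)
The plan is to prove this identity via the standard simulation / performance-difference argument: telescope a sum of value differences evaluated along a trajectory generated by $(\pi,p)$, using value functions computed under the reference model $p'$.

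Concretely, I would define the step-$h$ value function under $p'$ as
\[
V_h(s) := W_{d-h+1}^{\pi}(r,p',\textbf{1}_s), \qquad h\in[d+1],
\]
with the convention $V_{d+1}\equiv 0$. By the Bellman equation for $(\pi,p')$, for every state $s$ and the corresponding action $a_h$ chosen by $\pi$ at step $h$, one has
\[
V_h(s) \;=\; r(s,a_h) + \sum_{s'} p'_{s,a_h,s'}\, V_{h+1}(s'),
\]
and by construction $\mathbb{E}_{s_1\sim\mu_1}[V_1(s_1)] = W_d^{\pi}(r,p',\mu_1)$.

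The telescoping identity $V_1(s_1) = \sum_{h=1}^d (V_h(s_h) - V_{h+1}(s_{h+1}))$ (which uses $V_{d+1}=0$) then gives, after taking expectation along a trajectory $\{(s_h,a_h)\}$ drawn under $(\pi,p)$ starting from $s_1\sim\mu_1$,
\[
W_d^{\pi}(r,p',\mu_1) = \mathbb{E}_{\pi,p}\!\left[\sum_{h=1}^d \big(V_h(s_h) - V_{h+1}(s_{h+1})\big)\right].
\]
Using the Bellman equation to rewrite $V_h(s_h) = r(s_h,a_h) + \sum_{s'} p'_{s_h,a_h,s'} V_{h+1}(s')$, and noting that $\mathbb{E}_{\pi,p}[V_{h+1}(s_{h+1})\mid s_h,a_h] = \sum_{s'} p_{s_h,a_h,s'}V_{h+1}(s')$, each summand becomes
\[
r(s_h,a_h) \;-\; \sum_{s'}\big(p_{s_h,a_h,s'} - p'_{s_h,a_h,s'}\big)V_{h+1}(s')
\]
in conditional expectation. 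Summing over $h$ and recognizing $\mathbb{E}_{\pi,p}[\sum_{h=1}^d r(s_h,a_h)] = W_d^{\pi}(r,p,\mu_1)$ yields
\[
W_d^{\pi}(r,p',\mu_1) = W_d^{\pi}(r,p,\mu_1) - \sum_{h=1}^d \mathbb{E}_{\pi,p}\!\left[\sum_{s'}\big(p_{s_h,a_h,s'}-p'_{s_h,a_h,s'}\big)V_{h+1}(s')\right].
\]
Rearranging and expanding the outer expectation as a sum over $(s,a)$ weighted by $\mathbb{P}_{\pi,p}[(s_h,a_h)=(s,a)\mid s_1\sim\mu_1]$ recovers exactly the claimed formula, since $V_{h+1}(s') = W_{d-h}^{\pi}(r,p',\textbf{1}_{s'})$.

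There is no real obstacle here; the only thing to be careful about is the indexing of the telescoping (getting $V_{d+1}\equiv 0$ and $V_1$ on the right endpoints) and making sure the Bellman substitution is performed before taking expectation so that the swap $\mathbb{E}_{\pi,p}[V_{h+1}(s_{h+1})]\leftrightarrow \mathbb{E}_{\pi,p}[p_{s_h,a_h}V_{h+1}]$ is clean. Everything else is bookkeeping.
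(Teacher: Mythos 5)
Your proof is correct and is essentially the paper's own argument: the paper telescopes the hybrid quantity $w_i = W_{i}^{\pi}(r,p,\mu_1)+W_{d-i}^{\pi}(r,p',\tilde{\mu}_i)$ over the step at which the model switches from $p$ to $p'$, which coincides term-by-term with your pathwise telescoping of $V_h(s_h)-V_{h+1}(s_{h+1})$ after taking expectations along the $(\pi,p)$-trajectory. Both presentations also share the same mild notational convention that, for non-stationary $\pi$, $W_{d-h}^{\pi}(r,p',\textbf{1}_{s'})$ is read as the value-to-go of $\pi$ starting from step $h+1$.
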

\begin{proof} 
Let $\tilde{\mu}_i$ be the distribution of $s_{i+1}$ under $p$ for $i\geq 0$. Define $w_i = W_{i}^{\pi}(r,p,\mu_1)+ W_{d-i}^{\pi}(r,p',\tilde{\mu}_i) $. Then we have that $w_d = W_{d}^{\pi}(r,p,\mu_1)$ and $w_{0}=W_{d}^{\pi}(r,p',\mu_1)$.
Then the proof is completed by noting that
\begin{align}
&w_{h+1}-w_{h}\nonumber 
\\ & =   W_{h+1}^{\pi}(r,p,\mu_1)+ W_{d-h+1}^{\pi}(r,p',\tilde{\mu}_{h+1})  - W_{h}^{\pi}(r,p,\mu_1)+ W_{d-h}^{\pi}(r,p',\tilde{\mu}_h)  \nonumber \\
 & = \sum_{s,a}\tilde{\mu}_{h}(s)\pi_{h+1}(a|s) r_{h+1}(s,a) + W_{d-h-1}^{\pi}(r,p',\tilde{\mu}_{h+1}) -W_{d-h}^{\pi}(r,p',\tilde{\mu}_h) \nonumber 
 \\ & =  \sum_{s,a}\tilde{\mu}_{h}(s)\pi_{h+1}(a|s) r_{h+1}(s,a) + W_{d-h-1}^{\pi}(r,p',\tilde{\mu}_{h+1}) \nonumber 
 \\ & \quad \quad \quad \quad \quad \quad \quad \quad \quad \quad\quad \quad \quad - \sum_{s,a}\tilde{\mu}_h(s)\pi_{h+1}(a|s)(r_{h+1}(s,a) + \sum_{s'}p'_{s,a,s'}W_{d-h-1}^{\pi}(r,p',\textbf{1}_{s
 }))\nonumber 
 \\ & =\sum_{s,a}\tilde{\mu}_{h}(s)\pi_{h+1}(a|s) \sum_{s'}p_{s,a,s'}W_{d-h-1}^{\pi}(r,p',\textbf{1}_{s
 }) -\sum_{s,a}\tilde{\mu}_{h}(s)\pi_{h+1}(a|s) \sum_{s'}p'_{s,a,s'}W_{d-h-1}^{\pi}(r,p',\textbf{1}_{s
 }) .\nonumber
\end{align}
 Here the second last inequality is by the fact that $\tilde{\mu}_{h+1}(s')=\sum_{s,a}\tilde{\mu}_{h}(s)\pi_{h+1}(a|s)p_{s,a,s'}$
\end{proof}

\end{document}